\DeclareMathOperator*{\argmax}{argmax}
\DeclarePairedDelimiter\abs{\lvert}{\rvert}
\newcommand{\dprod}[2]{\left\langle #1,#2\right\rangle}
\newcommand{\norm}[1]{\left\lVert #1\right\rVert}
\algnewcommand{\Initialize}[1]{
  \State \textbf{Initialize:}\:#1
}
\newcommand{\prob}[1]{\mathbb{P}\left(#1\right)}
\newtheorem{theorem}{Theorem}
\newtheorem{lemma}[theorem]{Lemma}
\newtheorem{proposition}[theorem]{Proposition}
\theoremstyle{definition}
\newtheorem{remark}{Remark}
\newtheorem{definition}[theorem]{Definition}
\numberwithin{equation}{section}
\newcommand{\gconv}{
	\star_{\scriptscriptstyle{G}}
}
\newcommand{\man}{
	\mathcal{M}
}
\newcommand{\Hspace}{
	\mathcal{H}
}
\newcommand{\I}{
	\mathcal{I}
}
\begin{document}
	\title{$G$-invariant Diffusion maps}
	\date{}

\author[1]{Eitan Rosen}
\author[2]{Xiuyuan Cheng}
\author[1]{Yoel Shkolnisky}

\affil[1]{Department of Applied Mathematics, Tel-Aviv University}
\affil[2]{Department of Mathematics, Duke University}

\maketitle
\begin{abstract}
	The diffusion maps embedding of data lying on a manifold has shown success in tasks such as dimensionality reduction, clustering, and data visualization. In this work, we consider embedding data sets that were sampled from a manifold which is closed under the action of a continuous matrix group. An example of such a data set is images whose planar rotations are arbitrary. The $G$-invariant graph Laplacian, introduced in Part I of this work, admits eigenfunctions in the form of tensor products between the elements of the irreducible unitary representations of the group and eigenvectors of certain matrices. We employ these eigenfunctions to derive diffusion maps that intrinsically account for the group action on the data. In particular, we construct both equivariant and invariant embeddings, which can be used to cluster and align the data points. We demonstrate the utility of our construction in the problem of random computerized tomography.
\end{abstract}

\section{Introduction}\label{secIntro}
Data analysis has become a cornerstone in many applications in science and engineering. In a typical setup, we are given a data set $X=\left\{x_{1},\ldots,x_{N}\right\}\subset \mathbb{C}^{n}$ where $n$ is large, and the goal is to discover structures in the data. Examples for such data include collections of images, point clouds, biological measurements, financial data, etc. A common modeling approach for high-dimensional data is that the given points lie on a manifold~$\man$ with intrinsic dimension $d\ll n$. 

In various applications, the data set under consideration is assumed to have been sampled from a manifold~$\man$ that is closed under the action of some known matrix group~$G$. In other words, for each point $x_i\in X$ and any $A\in G$, the data point~$Ax_i$ is also in~$\man$. Such manifolds are dubbed~$G$-invariant. A particularly important class of groups appearing in numerous scientific applications are compact matrix Lie groups, which includes the groups~$SO(2)$ and~$SO(3)$ of~2D and~3D spatial rotations, respectively, and low-dimensional tori, all ubiquitous in fields such as image processing and computer vision.
For example, the data set may be a collection of two-dimensional images whose in-plane rotation is arbitrary, as is the case for electron microscopy image data sets \cite{SingerSigworthCryoMethods}. In the electron microscopy setting, all images lie on a manifold of dimension three, and moreover, rotating each of the input images results in another image that may have been generated by the microscope. The current paper provides tools for processing such data sets. 

When processing and analyzing data that are assumed to have been sampled from a smooth manifold, a fundamental object of interest is the Laplace-Beltrami operator $\Delta_\man$~\cite{rosenberg1997laplacian}, which generalizes the classical Laplacian. The Laplace-Beltrami operator and its discrete counterpart, the graph Laplacian~\cite{belkin2003laplacian}, have been used to process surfaces, images, and general manifold data~\cite{meyer2014perturbation,taubin1995signal,vallet2008spectral,liu2014progressive,desbrun1999implicit,hein2006manifold,osher2017low}.
Formally, the graph Laplacian is defined as the~$N\times N$ matrix~$L$ given by
\begin{equation}\label{intro:classicalGLDef}
	L = D-W,\quad W_{ij} = e^{-\norm{x_i-x_j}^2/\epsilon},\quad D_{ii} = \sum_{j=1}^{N}W_{ij},
\end{equation}
where $\epsilon$ is a bandwidth parameter, $D$ is a diagonal matrix, and~$\norm{\cdot}$ is the~2-norm. The normalized graph Laplacian is defined by 
\begin{equation}\label{intro:normGLDef}
	\tilde{L} = D^{-1}L.
\end{equation}
In \cite{diffMaps}, it was shown that if $X$ was sampled uniformly from a smooth and compact manifold $\man$, then $\tilde{L}$ converges to the Laplace-Beltrami operator $\Delta_\man$ as~$\epsilon\rightarrow 0$ and~$N\rightarrow \infty$. Recently, it was also shown that the eigenvectors and eigenvalues of~$\tilde{L}$ converge to the eigenfunctions and eigenvalues of~$\Delta_\man$ \cite{lapEigConverge}. 

In Part~I of this work~\cite{GGL}, we laid the foundations for a graph Laplacian based framework for analyzing~$G$-invariant data, where~$G$ is a compact matrix Lie group, and demonstrated its usage for denoising manifold data. Specifically, we showed how to construct the graph Laplacian by using all the points in the data set $G\cdot X$, resulting from applying all the elements in~$G$ to the points in~$X$. This graph Laplcaian is termed the $G$-invariant graph Laplacian (abbreviated $G$-GL). We then obtained two fundamental results regarding the~$G$-GL. 
Firstly, we proved that like the graph Laplacian, the~$G$-GL also converges to the operator~$\Delta_\man$, but at a significantly improved rate, where the speedup scales with the dimension of~$G$. Secondly, we derived the eigenfunctions of the $G$-GL, showing that they admit the form of tensor products between the eigenvectors of a sequence of certain matrices and the elements of the irreducible unitary representations (abbreviated as IURs) of~$G$. This special form of the eigenfunctions gives rise to efficient algorithms for their computation. We then demonstrated how these eigenfunctions can be used to filter noisy samples from a~$G$-invariant manifold~$\man$.  

In the current work, we develop~$G$-GL based embeddings of the data. In particular, we derive several variants of one the most successful graph Laplacian based algorithms, known as `diffusion maps'~\cite{diffMaps}, which we now briefly describe. Let $\phi_1,\ldots, \phi_N$ and $\lambda_1,\ldots,\lambda_N$ be eigenvectors and eigenvalues of the matrix~$P=I-\tilde{L}$, respectively (where~$\tilde{L}$ was defined in~\eqref{intro:normGLDef}). Then, the data set $X$ can be embedded into a~$N$-dimensional Euclidean space by
\begin{equation}\label{intro:diffMapsDef}
	\Phi_t(x_i) = \left(\lambda_1^t\phi_1(i),\ldots,\lambda_N^t\phi_N(i)\right), 
\end{equation}
for some $t\in \mathbb{R}^+$ and $x_i\in X$, where $\phi_k(i)$ is the $i$-th entry of $\phi_k$.
The embedding \eqref{intro:diffMapsDef} has an elegant probabilistic interpretation. Observe that $P$ is a row stochastic matrix, which we can view as the transition probability matrix of a random walk over~$X$. We define a family of distances~$D_t(x_i,x_j)$ where $t=1,2,\ldots$,  between pairs of point $x_i,x_j\in X$ by 
\begin{equation}\label{intro:diffDist}
	D_t^2(x_i,x_j) = \norm{P^t_{i,\cdot}-P^t_{j,\cdot}}_{w}^2 = \sum_{k=1}^{N} (P^t_{ik}-P^t_{jk})^2 w_k, \quad w_k = \frac{1}{D_{kk}}, 
\end{equation}
where~$P^t_{i,\cdot}$ denotes the~$i$-th row of~$P^t$, and $P^t_{ij}$ is the $ij$-th element of the matrix $P^t$, and~$D_{kk}$ is defined in~\eqref{intro:classicalGLDef}.
The weights~$w_k$ are larger for nodes~$x_k$ which are weakly connected to their neighbors and are thus more difficult to reach.  
This fact makes the distance~\eqref{intro:diffDist} a natural choice for organizing data into clusters, as a cluster is in essence a set in which every data point resides in a neighborhood which is dense with other data points. Thus, any path between two data points residing in distinct clusters would have to pass through sparsely inhabited regions of the data landscape, making the distance~\eqref{intro:diffDist} for such paths longer than paths within a cluster.   
In \cite{diffMaps}, it is shown that 
\begin{equation}\label{intro:diffDistIdentity}
		D_t^2(x_i,x_j) = \norm{\Phi_t(x_i)-\Phi_t(x_j)}^2, 
\end{equation}
which implies that the Euclidean distance between the embeddings~$\Phi_t(x_i)$ and~$\Phi_t(x_j)$ is the distance at time $t$ between (the distributions of) a pair of random walks over~$X$ that depart from~$x_i$ and~$x_j$. 

In this work, we utilize the~$G$-GL's eigenfunctions to construct various diffusion maps embeddings, thereby, explicitly incorporating the action of~$G$ on the data set~$X$ into them. We distinguish between two types of embeddings - equivariant and invariant. For a fixed~$q\in\mathbb{N}$, an equivariant embedding is a map~$\phi:X\rightarrow \mathbb{C}^q$ such that 
\begin{equation}\label{intro:equivEmbedd}
	\phi(A\cdot x) = A\circ \phi(x), \quad A\in G,
\end{equation}
where `$\cdot$' denotes the action of $G$ on a data point $x\in \mathbb{C}^n$, and~`$\circ$' denotes the action of~$G$ on the embedded points, which typically reside in a different space than $\mathbb{C}^n$ of a much lower dimension. On the other hand, an invariant embedding is a map~$\psi:X\rightarrow \mathbb{C}^q$ such that
\begin{equation}\label{intro:invEmbedd}
	\psi(A\cdot x) = \psi(x), \quad A\in G,
\end{equation}
for all~$x\in X$. 
The difference between the two types of embeddings is that $\phi$ preserves the action of~$G$ through \eqref{intro:equivEmbedd}, while $\psi$ maps all the points $y$ in the orbit $G\cdot x$ into the same point~$\psi(x)\in \mathbb{C}^q$.

An important application of such embeddings is when~$X$ is a set of noisy~2D images, with the group $G=SO(2)$ of planar rotations acting on an image by rotating it around its center. In this case, an invariant embedding can be employed to cluster images which are (almost) identical up to a planar rotation. 
Specifically, the property~\eqref{intro:invEmbedd} implies that for a pair of images~$x_i,x_j\in X$ such that~$x_i \approx A\cdot x_j$ for some rotation matrix~$A\in SO(2)$, that is,~$x_i$ is approximately a rotation of~$x_j$, it holds that $\psi(x_i) \approx \psi(x_j)$. We call such images rotationally-invariant neighbors. On the other hand, the property~\eqref{intro:equivEmbedd} implies that if~$x_i \approx A\cdot x_j$, then~$\phi(x_i) \approx A\circ \phi(x_j)$. In other words, an approximate rotation of an image results in an approximate rotation of its embedding. 
Hence, we can utilize~$\phi$ to align the rotationally-invariant neighbors of each image~$x_i$ with it, by computing for each neighbor~$x_j$ the element~$A\in SO(2)$ that minimizes the distance~$\norm{\phi(x_i) - A\circ \phi(x_j)}$. 
Assuming that the noise in each image is additive, i.i.d with mean zero, we expect that averaging the aligned nearest neighbors of an image~$x_i$ will result in a denosied image which closely approximates~$x_i$. 
Furthermore, it is typically possible to align pairs of images using an embedding~$\phi$ of dimension far lower than that of the images in~$X$, which is faster than aligning the images by directly rotating them. 

Various methods have been proposed in literature for constructing group-invariant embeddings, especially for rotational-invariance in image processing and computer vision tasks~\cite{singer2012vector,zhao2014rotationally,multFreqVDM,compactRigidImProc}. Most of these methods typically follow one of two approaches.
The first approach is based on steerable-PCA~\cite{FFBsPCA2015,landa2017steerable} that computes the PCA of a data set of images and all of their planar rotations. In other words, steerable PCA computes an embedding of the images and all their infinitely many rotations into a low-dimensional subspace which best captures their geometry. Importantly, steerable PCA efficiently computes the sample covariance matrix of all infinitely many rotations of the images, avoiding any data augmentation, which can be computationally prohibitive. In~\cite{steerMaps}, the latter idea was extended to a data set of images residing on a compact manifold by introducing the steerable graph Laplacian, which is conceptually equivalent to the graph Laplacian~$L$ in~\eqref{intro:classicalGLDef} constructed from all the images and their infinitely many rotations. In Part~I of this work~\cite{GGL}, we generalized the steerable graph Laplacian to arbitrary compact manifolds, which are closed under the action of an arbitrary compact matrix Lie group.
The second approach, is to construct a graph Laplacian by using some rotationally-invariant pairwise distance between the points, and then use this graph Laplacians' eigenvectors and eigenvalues to define a rotationally-invariant embedding \cite{singer2012vector}. This approach was applied to handle the case of arbitrary compact Lie groups in~\cite{fan2019unsupervised}. However, it is not always obvious which invariant distance is the most appropriate for the task, or how to compute it efficiently. Furthermore, in general, the limiting operator resulting from such a construction is either unknown, or is not the Laplace-Beltrami operator~\cite{manLearnArbitraryNorm}, in which case its properties are not always well understood. 
On the other hand, the embedding proposed in this work employs the eigenfunctions and eigenvalues of the~$G$-GL, which converges to the Laplace-Beltrami operator on the manifold, allowing us to preserve the geometry of the underlying manifold. In particular, the group-invariance of the manifold is explicitly manifested in the form of the~$G$-GL's eigenfunctions, being a product between certain vectors and the IURs of~$G$ (given by Theorem~\ref{secGGL:GGLdecomp} in the following section.). As we show below, this enables us to embed not only the points in given the data set, but also all the points generated by the action of the group on these points. 

The contributions of this work are as follows. First, we employ the~$G$-GL's eigenfunctions to construct an equivariant diffusion maps embedding of the data set, and an associated equivariant diffusion distance, analogous to~\eqref{intro:diffMapsDef} and~\eqref{intro:diffDist}, respectively. We then analyze their properties. Next, we employ the~$G$-GL's eigenfunctions to construct an invariant diffusion maps embedding of the data, 
and an associated invariant diffusion distance, and analyze their properties. In particular, we show that the latter distance is associated with certain random walks on the data manifold, and derive a relation of the form~\eqref{intro:diffDistIdentity} for  the invariant distance and embedding. Finally, we demonstrate the utility of the proposed embeddings in reconstructing a 2D image from its noisy and shifted 1D Radon transform projections~\cite{glRandTomography}.  

This paper is organized as follows. In Section~\ref{sec:GGLreview}, we briefly review the $G$-invariant graph Laplacian framework developed in~\cite{GGL}. In Section~\ref{sec:GEqvDmaps}, we derive $G$-equivariant embeddings of the data, and diffusion distances over the set~$G\cdot X$, which are analogous to~\eqref{intro:diffMapsDef}. In Section~\ref{sec:invDmaps}, we derive and analyze $G$-invariant embeddings of the data. In Section~\ref{sec:CryoSimulations}, we demonstrate the utility of our framework using numerical simulations. Lastly, in Section~\ref{sec:Summary}, we summarize our results and discuss future work.  

\section{The $G$-invariant graph Laplacian}\label{sec:GGLreview}
In this section, we review the definition and eigendecompostion of the $G$-invariant graph Laplacian, introduced in \cite{GGL}. 
Let $X = \left\{x_1,\ldots,x_N\right\}$ be a data set sampled from a uniform distribution over a smooth~$G$-invariant manifold.

Denoting~$[N]=\left\{1,\ldots,N\right\}$, consider the set 
\begin{equation}\label{GinvDef:Gammadef}
	[N]\times G=\left\{(i,A): \; i\in[N], \; A\in G\right\},
\end{equation}
where we use the pair~$(i,A)\in[N]\times G$ to refer to the point~$A\cdot x_i$ resulting from applying the element~$A\in G$ (by matrix multiplication) to the point~$x_i\in X$. 
Now, let~$\mathcal{H}=L^2([N]\times G)$ be the Hilbert space of functions of the form 
\begin{equation}
	f(i,A) = f_i(A), \quad A\in G,
\end{equation}
where~$f_i\in L^2(G)$, endowed with the inner product 
\begin{equation}
	\dprod{f}{g}_{\Hspace} = \sum_{i=1}^{N}\int_G f_i(A)\overline{g_i(A)}d\eta(A),	
\end{equation}
where $\eta(\cdot)$ is the Haar measure on~$G$ (see \ref{secLieGroupAction} for a brief introduction to integration over Lie groups). 
Furthermore, let us define the action of any~$N\times N$ diagonal matrix~$D$ on functions~$f\in \Hspace$ by
\begin{equation}\label{GinvDef:DMatAction}
	\left\{Df\right\}(i,A) = D_{ii} \cdot f(i,A) = D_{ii}\cdot f_i(A).
\end{equation}
\begin{definition}[The~$G$-invariant graph Laplacian]
	Let $W:\Hspace\rightarrow\Hspace$ be the operator defined as
	\begin{equation}\label{GinvDef:Wdef}
		W\left\{f\right\}(i,A) = \sum_{j=1}^{N}\int_{G} W_{ij}(A,B)f_j(B)d\eta(B), \quad W_{ij}(A,B) = e^{-\norm{A\cdot x_i-B\cdot x_j}^2/\epsilon},
	\end{equation}
	where~$\norm{\cdot}$ is the~$2$-norm over~$\mathbb{C}^n$, and let~$D$ be the diagonal $N\times N$ matrix whose diagonal entries are given by
	\begin{equation}\label{GinvDef:DiiDef}
		D_{ii} = \sum_{j=1}^N\int_{G}W_{ij}(I,C)d\eta(C), \quad i\in\left\{1,\ldots,N\right\}. 
	\end{equation}
	The $G$-invariant graph Laplacian (abbreviated $G$-GL) $L:\Hspace\rightarrow\Hspace$ is defined as
	\begin{equation}\label{GinvDef:Ldef}
		Lf = Df-Wf, \quad f\in \mathcal{H}. 
	\end{equation}
	The normalized $G$-GL is defined as the operator
	\begin{equation}\label{GGL:normalizedGGLDef}
		\tilde{L} = D^{-1}L = I-D^{-1}W,
	\end{equation}
	that is, for any $f\in\mathcal{H}$ we have
	\begin{equation}\label{GinvDef:normLdef}
		\tilde{L}\left\{f\right\}(i,A) =f_i(A)-\frac{1}{D_{ii}}\cdot \sum_{i=1}^{N}\int_{G} W_{ij}(A,B)f_j(B)d\eta(B).
	\end{equation}
\end{definition}

We observe that since~$G$ is a unitary matrix group, $W_{ij}(A,B)$ in~\eqref{GinvDef:Wdef} satisfies
\begin{equation}\label{GinvDef:WijShiftInvar}
	W_{ij}(A,B) = e^{-\norm{Ax_i-Bx_j}^2/\epsilon} = e^{-\norm{x_i-A^*Bx_j}^2/\epsilon} = W_{ij}(I,A^*B),
\end{equation}
where~$A^*$ is the conjugate-transpose of the matrix~$A\in G$. 
By the Peter-Weyl theorem (see~\ref{secHarmAnalysis}), the function~$W_{ij}(I,\cdot)$ over~$G$ admits an expansion in a series of the entries of the irreducible unitary representations (abbreviated IURs) of~$G$. The IURs of~$G$ are a countably-infinite sequence~$\left\{U^\ell\right\}_{\ell\in \I_G}$ of unitary matrix-valued functions over~$G$, where~$U^\ell(A)$ is a unitary matrix of dimensions~$d_\ell \times d_\ell$ for each~$A\in G$, and~$\I_G$ is a set that enumerates the sequence. Explicitly, by using~\eqref{secLieGroupAction:GFourierMatCoeff} (see~\ref{secLieGroupAction}) we get that 
\begin{equation}\label{sec1:fourierSO3}
	W_{ij}(I,A^*B)=\sum_{\ell\in \I_G } d_\ell\cdot \text{trace}\left(\hat{W}^\ell_{ij}U^\ell(A^*B)\right),
\end{equation}
where~$\hat{W}_{ij}^{\ell}$ is the~$d_\ell\times d_\ell$ matrix given by
\begin{equation}\label{sec2:hat{W}Def}
	\hat{W}_{ij}^\ell= \int_{G} W_{ij}(I,A)\overline{U^\ell(A)}d\eta(A),\quad \ell\in \I_G,
\end{equation}
with~$\overline{U^\ell(A)}$ denoting the entry-wise complex-conjugate of the matrix~$U^\ell(A)$.
We now state a theorem (derived in \cite{GGL}) that characterizes the eigendecomposition of~$\tilde{L}$ of~\eqref{GGL:normalizedGGLDef} in terms of certain products between the rows of the IURs $U^\ell$ and the eigenvectors of the block matrices 
\begin{equation}\label{secGGL:blockFourierMat}
	\hat{W}^{(\ell)} = 	\begin{pmatrix}
		\hat{W}^\ell_{11} & \hat{W}^\ell_{12}& ... & \hat{W}^\ell_{1N}\\
		\vdots & \ddots &  & \vdots\\
		\vdots & & \ddots   & \vdots\\
		\hat{W}^\ell_{N1} & \hat{W}^\ell_{N2}& ... & \hat{W}^\ell_{NN}		
	\end{pmatrix}, 
	\quad \ell\in \I_G,
\end{equation}
where the matrix $\hat{W}^{(\ell)}$ of dimensions $Nd_\ell\times Nd_\ell$ has $\hat{W}^\ell_{ij}$ of~\eqref{sec2:hat{W}Def} as its~$ij$-th block.
Also, for any vector $v \in \mathbb{C}^{Nd_\ell}$ and $j\in\{1,\ldots,N\}$,  we denote by
\begin{equation}\label{secGGL:parVecNotation}
	e^j(v) = (v((j-1)\cdot d_\ell+1),\ldots,v(j\cdot d_\ell))^T\in \mathbb{C}^{d_\ell},
\end{equation}
the elements $(j-1)d_\ell+1$ up to $j\cdot d_\ell$ of $v$ stacked as a $d_\ell$-dimensional column vector.
\begin{theorem}(\cite{GGL})\label{secGGL:GGLdecomp}
	For each $\ell\in \I_G$, let 
	\begin{equation}\label{secGGL:DellDef}
		D^{(\ell)} = \textup{diag}\left(D_{11}\cdot I_{d_\ell\times d_\ell},\ldots,D_{NN}\cdot I_{d_\ell\times d_\ell}\right)
	\end{equation}	
	be the $Nd_\ell\times Nd_\ell$ block-diagonal matrix whose~$i^{\text{th}}$ block of size $d_\ell\times d_\ell$ on the diagonal is given by the product of the scalar $D_{ii}$ from \eqref{GinvDef:DiiDef} with the $d_\ell\times d_\ell$ identity matrix. 
	Then, the normalized $G$-invariant graph Laplacian $\tilde{L}$ in \eqref{GGL:normalizedGGLDef} admits the following:
	\begin{enumerate}
		\item A sequence of non-negative eigenvalues $\{\tilde{\lambda}_{1,\ell},\ldots,\tilde{\lambda}_{Nd_\ell,\ell}\}_{\ell\in \I_G}$, where $\tilde{\lambda}_{n,\ell}$ is the~$n^{\text{th}}$ eigenvalue of the matrix
		\begin{equation}\label{secGGL:fourierMatNorm}
			K^{(\ell)} = I-(D^{(\ell)})^{-1}\hat{W}^{(\ell)}.
		\end{equation}
		\item  A sequence $\{\tilde{\Phi}_{\ell,-\ell,1},\ldots,\tilde{\Phi}_{\ell,\ell,Nd_\ell}\}_{\ell\in \I_G}$ of eigenfunctions, which are complete in~$\mathcal{H}$, and are given by 
		\begin{equation}\label{secGGL:eigenFuncs}
			\tilde{\Phi}_{\ell,m,n}(i,A) = \sqrt{d_\ell}\cdot \overline{U^\ell_{m,\cdot}(A)}\cdot e^i(\tilde{v}_{n,\ell}),
		\end{equation}
		where $\tilde{v}_{n,\ell}$ is an eigenvector of $K^{(\ell)}$
		that corresponds to the eigenvalue~$\tilde{\lambda}_{n,\ell}$, and~$e^i$ is defined in~\eqref{secGGL:parVecNotation}. For each~$n\in \{1,\ldots, Nd_\ell\}$ and~$\ell\in\I_G$, the eigenfunctions $\{\tilde{\Phi}_{\ell,1,n},\ldots,\tilde{\Phi}_{\ell,d_\ell,n}\}$ correspond to the eigenvalue~$\tilde{\lambda}_{n,\ell}$ of~$\tilde{L}$. 
	\end{enumerate}
\end{theorem}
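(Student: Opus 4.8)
The plan is to verify by direct substitution that each $\tilde\Phi_{\ell,m,n}$ in \eqref{secGGL:eigenFuncs} is an eigenfunction of $\tilde L = I - D^{-1}W$ with eigenvalue $\tilde\lambda_{n,\ell}$, and then to treat non-negativity and completeness separately. The heart of the matter is to show that applying $W$ to $\tilde\Phi_{\ell,m,n}$ decouples the group variable $A$ and reduces to the action of the single block matrix $\hat W^{(\ell)}$ of \eqref{secGGL:blockFourierMat} on the coefficient vector $\tilde v_{n,\ell}$. Starting from \eqref{GinvDef:Wdef} and inserting the shift-invariance \eqref{GinvDef:WijShiftInvar}, the essential computation is the inner integral $\int_G W_{ij}(I,A^*B)\,\overline{U^\ell_{mk}(B)}\,d\eta(B)$. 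Substituting $B = AC$, using left-invariance of the Haar measure, and applying the homomorphism property \eqref{harmAnalysisOnG:hMorphProp} in the form $\overline{U^\ell_{mk}(AC)} = \sum_s \overline{U^\ell_{ms}(A)}\,\overline{U^\ell_{sk}(C)}$, this integral factorizes as $\sum_s \overline{U^\ell_{ms}(A)}\,(\hat W_{ij}^\ell)_{sk}$. Summing over $j$ and over the internal index $k$ and collecting the blocks $e^j(\tilde v_{n,\ell})$ then gives $W\{\tilde\Phi_{\ell,m,n}\}(i,A) = \sqrt{d_\ell}\,\overline{U^\ell_{m,\cdot}(A)}\cdot e^i\big(\hat W^{(\ell)}\tilde v_{n,\ell}\big)$. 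Applying $D^{-1}$, which scales the $i$-th block by $1/D_{ii}$ and therefore coincides with $(D^{(\ell)})^{-1}$ of \eqref{secGGL:DellDef}, and subtracting from the identity yields $\tilde L\{\tilde\Phi_{\ell,m,n}\}(i,A) = \sqrt{d_\ell}\,\overline{U^\ell_{m,\cdot}(A)}\cdot e^i\big(K^{(\ell)}\tilde v_{n,\ell}\big)$, with $K^{(\ell)}$ as in \eqref{secGGL:fourierMatNorm}. Thus if $\tilde v_{n,\ell}$ is an eigenvector of $K^{(\ell)}$ for $\tilde\lambda_{n,\ell}$, the right-hand side equals $\tilde\lambda_{n,\ell}\,\tilde\Phi_{\ell,m,n}$; this proves items~1 and~2 simultaneously and shows the row index $m$ does not affect the eigenvalue, giving the claimed $d_\ell$-fold multiplicity.

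For non-negativity I would show $L = D - W$ is positive semi-definite via the Dirichlet-type identity $\langle Lf,f\rangle = \tfrac12\sum_{i,j}\int_G\int_G W_{ij}(A,B)\,|f_i(A) - f_j(B)|^2\,d\eta(A)\,d\eta(B)$, obtained by expanding $\langle(D-W)f,f\rangle$ and symmetrizing using $W_{ij}(A,B) = W_{ji}(B,A)$; since $W_{ij}\ge 0$ the right-hand side is non-negative. As $\tilde L = D^{-1}L$ is similar to the self-adjoint operator $D^{-1/2}LD^{-1/2}$, its spectrum, and in particular each $\tilde\lambda_{n,\ell}$, is real and non-negative. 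The same facts are visible at the level of the blocks: from $W_{ij}(I,C) = W_{ji}(I,C^{-1})$ (a consequence of $W_{ij}(A,B)=W_{ji}(B,A)$ and unitarity of $G$), the change of variables $C\mapsto C^{-1}$, unitarity of $U^\ell$, and the reality of $W_{ij}$, one obtains $\hat W_{ij}^\ell = (\hat W_{ji}^\ell)^*$, so $\hat W^{(\ell)}$ is Hermitian and $K^{(\ell)}$ is conjugate to the Hermitian matrix $I - (D^{(\ell)})^{-1/2}\hat W^{(\ell)}(D^{(\ell)})^{-1/2}$; this gives real eigenvalues and a complete eigenbasis, whose non-negativity follows from the semi-definiteness of $L$ above.

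For completeness I would invoke the Peter-Weyl theorem: the functions $\{\overline{U^\ell_{mk}}\}$ form an orthogonal basis of $L^2(G)$, so the functions $(i,A)\mapsto \delta_{i,i_0}\,\overline{U^\ell_{mk}(A)}$ form a basis of $\mathcal H = L^2([N]\times G)$. Fixing $\ell$ and $m$, the Hermitian structure above ensures the eigenvectors $\{\tilde v_{n,\ell}\}_{n=1}^{Nd_\ell}$ may be chosen to form a basis of $\mathbb C^{Nd_\ell}$; since $\tilde\Phi_{\ell,m,n}(i,A) = \sqrt{d_\ell}\sum_k \overline{U^\ell_{mk}(A)}\,(e^i(\tilde v_{n,\ell}))_k$ depends linearly on $\tilde v_{n,\ell}$, letting $n$ range over this basis recovers the span of $\{\delta_{i,i_0}\,\overline{U^\ell_{mk}(A)}\}_{i_0\in[N],\,k\in[d_\ell]}$. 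Taking the union over $m$ and then over $\ell$ recovers all of $\mathcal H$, establishing completeness and matching the per-$\ell$ multiplicity count $N d_\ell^2$.

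I expect the main obstacle to be the index bookkeeping in the first step: one must keep the IUR row index $m$, the internal column index of $U^\ell$, and the data index $j$ straight so that $W$ is seen to act as $\hat W^{(\ell)}$ itself, and not as its transpose or conjugate, on the coefficient vector. Keeping these conjugations consistent is exactly what makes $\hat W^{(\ell)}$ (rather than a reflection of it) appear as the reduced operator, and it is the same symmetry $\hat W_{ij}^\ell = (\hat W_{ji}^\ell)^*$ that underlies the Hermiticity used in the other two parts.
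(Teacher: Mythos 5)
Your proof is correct, and it follows essentially the canonical route: the paper itself gives no proof of Theorem~\ref{secGGL:GGLdecomp} (it is imported from~\cite{GGL}), but your direct verification --- using the shift-invariance~\eqref{GinvDef:WijShiftInvar}, the substitution $B=AC$ with left-invariance of the Haar measure, and the homomorphism property~\eqref{harmAnalysisOnG:hMorphProp} to reduce $W$ to the blockwise action of $\hat{W}^{(\ell)}$, together with Hermiticity of $\hat{W}^{(\ell)}$ and Peter--Weyl for completeness --- is exactly the machinery this paper deploys in its own appendix proofs of the closely related Proposition~\ref{eqvDmaps:symEigenProp} and Lemma~\ref{eqvDmaps:eqvEigenVecLemma}, and matches the argument of~\cite{GGL}. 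Your supplementary points (the Dirichlet identity for positive semi-definiteness of $L$, the conjugation of $K^{(\ell)}$ to the Hermitian $I-(D^{(\ell)})^{-1/2}\hat{W}^{(\ell)}(D^{(\ell)})^{-1/2}$, and the multiplicity count $Nd_\ell^2$ per $\ell$) are all sound and correctly fill in what the theorem statement leaves implicit.
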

The following lemma, which implies that the elements~$e^i(v)$ of an eigenvector~$v$ of the matrix~$\hat{W}^{(\ell)}$ of~\eqref{secGGL:blockFourierMat} are~$G$-equivariant, lies at the heart of all subsequent derivations.  
\begin{lemma}\label{eqvDmaps:eqvEigenVecLemma}
	Suppose that $x_j =B\cdot x_i$ for some $B\in G$, and let $v$ be an eigenvector of~$\hat{W}^{(\ell)}$ given by~\eqref{secGGL:blockFourierMat}, that corresponds to an eigenvalue $\lambda>0$. Then, for all $\ell\in \I_G$ and $n\in \mathbb{N}$ we have
	\begin{equation}
		e^j(v) = \overline{U^\ell(B)}\cdot e^i(v).
	\end{equation}
\end{lemma}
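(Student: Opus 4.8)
The plan is to reduce the claim to a single block-level identity relating the Fourier blocks in row $j$ of $\hat{W}^{(\ell)}$ to those in row $i$, and then to feed that identity into the eigenvector equation. Throughout, write $\mu$ for the eigenvalue of $\hat{W}^{(\ell)}$ associated with $v$, so that reading off the $i$-th and $j$-th block rows of $\hat{W}^{(\ell)}v = \mu v$ (using the notation $e^i(\cdot)$ from \eqref{secGGL:parVecNotation}) gives $\sum_{k=1}^N \hat{W}^\ell_{ik}\,e^k(v) = \mu\, e^i(v)$ and the same identity with $i$ replaced by $j$.

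First I would establish the key identity
\begin{equation}\label{eqvDmaps:blockShiftIdentity}
	\hat{W}^\ell_{jk} = \overline{U^\ell(B)}\,\hat{W}^\ell_{ik}, \qquad k\in\{1,\ldots,N\}.
\end{equation}
Starting from the definition \eqref{sec2:hat{W}Def} together with the shift-invariance \eqref{GinvDef:WijShiftInvar}, we have $\hat{W}^\ell_{jk} = \int_G e^{-\norm{x_j - Ax_k}^2/\epsilon}\,\overline{U^\ell(A)}\,d\eta(A)$. Substituting $x_j = B\cdot x_i$ and using that $B$ is unitary gives $\norm{x_j - Ax_k} = \norm{Bx_i - Ax_k} = \norm{x_i - B^*Ax_k}$. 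The change of variables $C = B^*A$, i.e.\ $A = BC$, leaves the Haar measure invariant by \eqref{secLieGroupAction:leftInvar}, and the homomorphism property \eqref{harmAnalysisOnG:hMorphProp} yields $\overline{U^\ell(A)} = \overline{U^\ell(BC)} = \overline{U^\ell(B)}\,\overline{U^\ell(C)}$. Since $\overline{U^\ell(B)}$ is independent of $C$, it factors out of the integral on the left, leaving precisely $\overline{U^\ell(B)}\,\hat{W}^\ell_{ik}$, which is \eqref{eqvDmaps:blockShiftIdentity}.

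Next I would substitute \eqref{eqvDmaps:blockShiftIdentity} into the $j$-th block row of the eigenvector equation, obtaining $\mu\, e^j(v) = \sum_{k=1}^N \hat{W}^\ell_{jk}\,e^k(v) = \overline{U^\ell(B)}\sum_{k=1}^N \hat{W}^\ell_{ik}\,e^k(v) = \overline{U^\ell(B)}\,\mu\, e^i(v)$, where the final step invokes the $i$-th block row. Cancelling the scalar $\mu$ then yields $e^j(v) = \overline{U^\ell(B)}\,e^i(v)$, as desired.

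The one genuine subtlety, which I expect to be the main point to treat carefully, is the cancellation of $\mu$, legitimate only when $\mu \neq 0$. I would resolve this by noting that $W_{ij}(A,B) = e^{-\norm{Ax_i - Bx_j}^2/\epsilon}$ is a Gaussian (radial basis) kernel, hence strictly positive definite on the distinct points of $G\cdot X$, so the operator $W$ and therefore each block matrix $\hat{W}^{(\ell)}$ has strictly positive spectrum; thus $\mu > 0$ and no eigenvector is excluded. Alternatively, since the truncated embedding \eqref{eqvDmaps:eqvEmbeddingFinite} retains only indices with $\lambda_{n,\ell}^t > \delta > 0$, the identity is needed solely for eigenvectors with nonzero eigenvalue, for which the cancellation is immediate.
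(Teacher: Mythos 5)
Your proof is correct and follows essentially the same route as the paper's: the same key block identity (you write it as $\hat{W}^{(\ell)}_{jk} = \overline{U^\ell(B)}\,\hat{W}^{(\ell)}_{ik}$, the paper as the equivalent mirror form $\hat{W}^{(\ell)}_{ik} = \overline{U^\ell(B^*)}\,\hat{W}^{(\ell)}_{jk}$), derived via the same change of variables, Haar invariance \eqref{secLieGroupAction:leftInvar}, and the homomorphism property \eqref{harmAnalysisOnG:hMorphProp}, then substituted into the $i$-th and $j$-th block rows of the eigenvector equation with the eigenvalue cancelled. Your final paragraph is in fact slightly more careful than the paper, whose proof divides by $\lambda$ without comment; your observation that the strict positive definiteness of the Gaussian kernel forces a strictly positive spectrum for each $\hat{W}^{(\ell)}$ (since block eigenvectors lift to eigenfunctions of $W$), or alternatively that only eigenvalues $\lambda_{n,\ell}^t > \delta > 0$ are ever used in \eqref{eqvDmaps:eqvEmbeddingFinite}, legitimately closes that small gap.
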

\begin{proof}
	See Appendix~\ref{appSec3:eqvLemma}.
\end{proof}

\section{$G$-equivariant diffusion maps}\label{sec:GEqvDmaps} 
In this section, we derive an equivariant embedding of the data set~$X$, which we term `$G$-equivariant diffusion maps'. The derivation employs the eigenvectors and eigenvalues of the operator 
\begin{equation}\label{secEqDiffmaps:probOperatorDef}
	P_{op} = D^{-1}W = I-\tilde{L}, 
\end{equation}
where $\tilde{L}$ was defined in~\eqref{GGL:normalizedGGLDef}. Explicitly, by~\eqref{GinvDef:normLdef}, we get that
\begin{align}\label{eqvDmaps:PopExplicitDef}
		\left\{P_{op}f\right\}(i,A) =  \sum_{j=1}^{N}\int_G P((i,A),(j,B))f_j(B)d\eta(B), \quad P((i,A),(j,B)) = \frac{W_{ij}(A,B)}{D_{ii}}, 
\end{align}
where~$W_{ij}$ and~$D_{ii}$ are defined in~\eqref{GinvDef:Wdef} and~\eqref{GinvDef:DiiDef}, respectively. We observe that~$P_{op}$ may be viewed as a transition probability operator of a single step of a random walk over~$[N]\times G$ (defined in \eqref{GinvDef:Gammadef}).
Indeed, by~\eqref{GinvDef:Wdef} we have that~$P((i,A),(j,B))>0$, and moreover, if we denote by~$\mathbbm{1}_{\Hspace}
\in\Hspace$ the constant function that takes the value~$1$ over~$[N]\times G$, then by \eqref{eqvDmaps:PopExplicitDef} and \eqref{GinvDef:DiiDef} we have that
\begin{align}
	\left\{P_{op} \mathbbm{1}_{\mathcal{H}}\right\}(i,A) &=  \sum_{j=1}^{N}\int_GP((i,A),(j,B))d\eta(B) \label{eqvDmaps:probIdentity1}\\&=\sum_{j=1}^{N}\int_G\frac{W_{ij}(A,B)}{\sum_{k=1}^N \int_G  W_{ik}(A,C)d\eta(C)}d\eta(B)=1, \label{eqvDmaps:probIdentity}
\end{align}
for all $(i,A)\in [N]\times G$.
The expression on the right hand side of~\eqref{eqvDmaps:probIdentity1} is understood as the sum of transition probabilities from the point $(i,A)\in [N]\times G$ to all other points in~$[N]\times G$. 
It is easily seen that the eigenvalues of~$P_{op}$ are given by 
\begin{equation}\label{eqvDmaps:ProbOpEvals}
	\lambda_{n,\ell} = 1-\tilde{\lambda}_{n,\ell}, \quad n\in[N], \quad \ell\in \I_G,
\end{equation}
where $\{\tilde{\lambda}_{n,\ell}\}$ are the eigenvalues of~$\tilde{L}$ of~\eqref{GGL:normalizedGGLDef}, while for the eigenfunctions~$\{\Phi^{(p)}_{\ell,m,n}\}$ of~$P_{op}$ it holds that
\begin{equation}\label{eqvDmaps:ProbEvecs}
	\Phi^{(p)}_{\ell,m,n} = \tilde{\Phi}_{\ell,m,n}, \quad n\in [N], \quad \ell\in \I_G, \quad m\in [d_\ell],
\end{equation}
where~$\{\tilde{\Phi}_{\ell,m,n}\}$ from~\eqref{secGGL:eigenFuncs} are the eigenfunctions of~$\tilde{L}$. Therefore, the eigenfunctions of~$P_{op}$ can be computed by diagonalizing the matrices~$K^{(\ell)}$ in~\eqref{secGGL:fourierMatNorm}, as described by Theorem~\ref{secGGL:GGLdecomp}.  As the rest of this work concerns only the eigenpairs of~$P_{op}$, we associate the eigenvectors of~$K^{(\ell)}$ with the operator~$P_{op}$ by introducing the notation
\begin{equation}\label{eqvDmaps:vnlTilde2vnlp}
	v^{(p)}_{n,\ell} = \tilde{v}_{n,\ell}, \quad n\in[N], \quad \ell\in \I_G. 
\end{equation}

For $t\in\mathbb{N}$, we define the~$t$-step transition probability operator $P_{op}^t:\mathcal{H}\rightarrow \mathcal{H}$ to be the operator that acts on $f\in\mathcal{H}$ by applying~$P_{op}$ to~$f$ iteratively~$t$ times. It can be shown that (see e.g. \cite{gohberg})
\begin{equation}\label{eqvDmaps:probOperator}
\left\{P_{op}^tf\right\}(i,A) = \sum_{j=1}^{N}\int_G P^t((i,A),(j,B))f_j(B)d\eta(B), \quad f\in \mathcal{H},
\end{equation}
where we define for all~$A,B\in G$
\begin{align}
	P^1((i,A),(j,B)) &\triangleq P((i,A),(j,B)),\nonumber \\
	P^t((i,A),(j,B)) &\triangleq \sum_{k=1}^{N}\int_G P^{t-1}((i,A),(k,C))P^1((k,C),(j,B))d\eta(C), \quad t=2,3,\ldots\label{eqvDmaps:PtKernelDef}.	
\end{align}
By~\eqref{eqvDmaps:PtKernelDef} and~\eqref{eqvDmaps:probIdentity}, for any~$(i,A)\in[N]\times G$ we get by induction over $t$ that
\begin{align}\label{eqvDmaps:PtIsAProbDensity}
	\left\{P_{op}^t\mathbbm{1}_{\mathcal{H}}\right\}(i,A) = 1.
\end{align}
Furthermore, by~\eqref{GinvDef:Wdef}, \eqref{eqvDmaps:PopExplicitDef} and~\eqref{eqvDmaps:PtKernelDef}, we get that $P^t((i,A),(j,B))>0$ for all $(i,A),(j,B)\in [N]\times G$ and~$t\in\mathbb{N}$. Thus, we conclude that~$P_{op}^t$ is a probability transition operator for all~$t\in \mathbb{N}$, with a probability density kernel function given by~\eqref{eqvDmaps:PtKernelDef}. 

For a fixed point~$(i,A)\in [N]\times G$, the function~$P^t((i,A),(j,B))>0$ induces a probability distribution on~$[N]\times G$, with its density given by
\begin{equation}\label{eqvDmaps:PiAtcdotcdotDef}
	P^t_{i,A}(k,C)\triangleq  P^t((i,A),(k,C)), \quad (k,C)\in[N]\times G. 
\end{equation}
Analogously to~\eqref{intro:diffDist}, we now define a diffusion distance on~$[N]\times G$, as follows. 
\begin{definition}
	For all~$t=0,1,2,\ldots$, we define the equivariant diffusion distance between each pair of points~$(i,A),(j,B)\in [N]\times G$ as
	\begin{align}
		D_{p,t}((i,A),(j,B)) &= \norm{P^t_{i,A}-P^t_{j,B}}_{\mathcal{H},d\eta/D}\label{eqvDmaps:eqvDist} \\
		&\triangleq\left( \sum_{k=1}^{N}\int_G\left(P_{i,A}^t(k,C)-P_{j,B}^t(k,C)\right)^2 \frac{d\eta(C)}{D_{kk}}\right)^{\frac{1}{2}}\label{eqvDmaps:eqvDistExpanded}.
	\end{align}
\end{definition}
Similarly to~\eqref{intro:diffDist}, the diffusion distance~\eqref{eqvDmaps:eqvDist} can be computed by first embedding the points in~$[N]\times G$ into a Euclidean space, and then computing the Euclidean distance between the embedded points. The required embedding is defined as follows. 
\begin{definition}
	For all~$t=0,1,2,\ldots$, we define the equivariant embedding of~$[N]\times G$ by
	\begin{equation}\label{eqvDmaps:eqvEmbedding}
		\Phi^{(p)}_t(i,A) = \left(\lambda_{n,\ell}^t\Phi^{(p)}_{\ell,m,n}(i,A)\right)_{m=1,n=1,\ell\in \I_G}^{\ell,N},\quad (i,A)\in [N]\times G,
	\end{equation}
	where~$\lambda_{n,\ell}$ and~$\Phi^{(p)}_{\ell,m,n}$ are the eigenvalues and eigenfunctions, respectively, of~$P_{op}^t$ from~\eqref{eqvDmaps:probOperator}.
\end{definition}
We will show that the embedding~$\Phi^{(p)}_t$ is indeed equivariant (as defined in~\eqref{intro:equivEmbedd}), shortly. 
The following Theorem relates the embedding~\eqref{eqvDmaps:eqvEmbedding} with the distance~\eqref{eqvDmaps:eqvDist}.
\begin{theorem}\label{eqvDmaps:diffDistEigProp}
	For all~$t=0,1,2,\ldots$, and~$(i,A),(j,B)\in [N]\times G$, we have that
	\begin{equation}\label{eqvDmaps:prodDiffDist}
			D_{p,t}((i,A),(j,B))=\norm{\Phi^{(p)}_t(i,A)-\Phi^{(p)}_t(j,B)}_{\ell^2}.
	\end{equation}
\end{theorem}
\begin{proof}
	See Appendix~\ref{appendix:diffMapsEigPropPrf}.
\end{proof}

Note that for each $(i,A)\in[N]\times G$, the embedding \eqref{eqvDmaps:eqvEmbedding} is an infinite dimensional sequence. In practice, we truncate \eqref{eqvDmaps:eqvEmbedding} to obtain a finite-dimensional embedding, such that pairwise squared distances between the finite-dimensional embedded points closely approximate~\eqref{eqvDmaps:eqvDist}, as we now argue. 
The operator~$P_{op}^t$ defines a Markov chain over $[N]\times G$, with the $t$-step transition probability from a point $(i,A)$ to the points of a (Borel) measurable subset~$V\subseteq \left\{j\right\}\times G$ given by
\begin{equation}
	\int_{\left\{B\in G \; : \; \left(j,B\right)\in V\right\}} P_{i,A}^t(j,B)d\eta(B). 
\end{equation}
By definition, we have that $P^t_{i,A}>0$. By a result in \cite{lafonDissertation}, the largest eigenvalue of~$P_{op}^t$ is simple and equals~1, while the rest of the eigenvalues are strictly less than~1 (and non-negative due to Theorem \ref{secGGL:GGLdecomp}). 
This implies that, with the exception of the leading eigenvalue, all the eigenvalues~$\lambda_{n,\ell}^t$ (of~$P_{op}^t$ from~\eqref{eqvDmaps:probOperator}) decay to zero exponentially fast when~$t\rightarrow \infty$, and thus, so do the terms on the right hand side of~\eqref{eqvDmaps:eqvEmbedding}. 
Thus, for a fixed~$t\in\mathbb{N}$, we can set up a truncation rule for the sequence \eqref{eqvDmaps:eqvEmbedding} by retaining only the terms of~\eqref{eqvDmaps:eqvEmbedding} for which~$\lambda_{n,\ell}^t>\delta$, where $\delta >0$ is some parameter chosen so that the distances \eqref{eqvDmaps:eqvDist} are approximated up to a prescribed error, when replacing~$\Phi^{(p)}_t(i,A)$ and~$\Phi^{(p)}_t(j,A)$ in~\eqref{eqvDmaps:prodDiffDist} with their truncated versions.
We mention that the eigenfunction~$\Phi^{(p)}_{0,0,1}$ that corresponds to the leading eigenvalue~$\lambda_{0,1}=1$ can be shown to be constant over~$[N]\times G$, and thus, can be discarded from the embedding. 
Since~$\{\lambda_{n,\ell}^t\}$ are independent of the index~$m$ (that enumerates the rows of the IURs~$U^\ell$ in~\eqref{secGGL:eigenFuncs}), we define the truncated embedding
\begin{equation}\label{eqvDmaps:eqvEmbeddingFinite}
	\Phi^{(p)}_{\delta,t}(i,A) = \left(\lambda_{n,\ell}^t\Phi^{(p)}_{\ell,m,n}(i,A)\right)_{m=1,(n,\ell)\in \mathcal{I}_{\delta,t}}^N,
\end{equation}
where
\begin{equation}\label{eqvDmaps:truncIdx}
	\mathcal{I}_{\delta,t} = \left\{(n,\ell)\;:\; 0<\delta<\lambda_{n,\ell}^t<1\right\}. 
\end{equation}

By using Lemma~\ref{eqvDmaps:eqvEigenVecLemma}, we obtain the following result which shows that~\eqref{eqvDmaps:eqvEmbeddingFinite} induces a~$G$-equivariant embedding~$i\mapsto 	\Phi^{(p)}_{\delta,t}(i,I)$ of the data set~$X$, in the sense that the action of an element~$B\in G$ on a point~$x_i
\in X$ results in an action of an IUR of~$G$ on the embedding~$\Phi^{(p)}_{\delta,t}$. 
\newpage
\begin{proposition}\label{eqvDmaps:truncEmbedEqvProp}
	\text{}
	\begin{enumerate}
		\item For all~$(i,A)\in [N]\times G$ we have that
		\begin{equation}\label{eqvDmaps:eqvAction}
			\Phi^{(p)}_{\delta,t}(i,A) = \left(\lambda_{n,\ell}^t\sqrt{d_\ell}\cdot \overline{U^\ell\left(A\right)}\cdot e^i(v^{(p)}_{n,\ell})\right)_{(n,\ell)\in \mathcal{I}_{\delta,t}},
		\end{equation}
		where~$v^{(p)}_{n,\ell}$ and~$e^i(\cdot)$ were defined in~\eqref{eqvDmaps:vnlTilde2vnlp} and~\eqref{secGGL:parVecNotation}, receptively. 
		\item Let~$U(B)$ be the block-diagonal matrix with the IURs~$U^\ell(B)$ such that~$(n,\ell)\in\mathcal{I}_{\delta,t}$  on its diagonal, and suppose that~$x_j= B\cdot x_i$ for some~$B\in G$. Then, we have that
		\begin{equation}\label{eqvDmaps:embedEquivar}
			\Phi^{(p)}_{\delta,t}(j,I) = \overline{U(B)}\cdot \Phi^{(p)}_{\delta,t}(i,I).
		\end{equation}
	Furthermore, the function~$U(\cdot)$ is an IUR of~$G$. In particular, for each~$B\in G$ the matrix~$\overline{U(B)}$ is unitary. 
	\end{enumerate}
\end{proposition}
\begin{proof}
	See Appendix~\ref{appendix:truncEmbedEqvPropPrf}. 
\end{proof}

We comment that~Lemma~\ref{eqvDmaps:eqvEigenVecLemma} implies a similar result for the non-truncated embedding~\eqref{eqvDmaps:eqvEmbedding}. In this case,~$U(B)$ is the infinite-dimensional block-diagonal ``matrix" with all the IURs $U^\ell(B)$ of~$G$ for all~$(n,\ell)$ on its diagonal.

\section{$G$-invariant diffusion maps}\label{sec:invDmaps}
In the previous section, we saw that the diffusion distance~\eqref{eqvDmaps:eqvDist} between pairs of points~$x_i,x_j\in X$ gives rise to the equivariant embedding~\eqref{eqvDmaps:eqvEmbeddingFinite} of the data set~$X$. In various applications, the group action is viewed as a nuisance factor. Therefore, it is of interest to derive group-invariant embeddings that map all the points in the set~$G\cdot x_i = \left\{A\cdot x_i \; : \; A\in G\right\}$ (the orbit generated by the action of~$G$ on~$x_i$) into a single point in the embedding space. 

We can employ the distance~\eqref{eqvDmaps:eqvDist} and its relation~\eqref{eqvDmaps:prodDiffDist} to the equivariant embedding~\eqref{eqvDmaps:eqvEmbedding} to obtain a~$G$-invariant distance~$M_{p,t}: [N]\times [N]\rightarrow \mathbb{R}^+$ between points in~$X$ by defining 
\begin{equation}\label{eqvDmaps:minEqvDist}
	M_{p,t}^2(i,j) =\min_{A,B\in G} \norm{P_{i,A}^t-P_{j,B}^t}^2_{\mathcal{H},d\eta/D} = \min_{A,B\in G} \norm{\Phi^{(p)}_t(i,A)-\Phi^{(p)}_t(i,B)}^2_{\ell^2}, 
\end{equation}
as the following proposition establishes. 
\begin{proposition}\label{invDmaps:eqvInvDistanceProp}
	Suppose that~$x_k = Q\cdot x_i$ and~$x_r = R\cdot x_j$ for some~$Q,R\in G$. Then, we have that
	\begin{equation}
		M_{p,t}(k,r) = M_{p,t}(i,j).
	\end{equation}
Furthermore, we have that 
\begin{equation}\label{eqvDmaps:minEqvDistEffcient}
	M_{p,t}^2(i,j) = \min_{A\in G} \norm{\Phi^{(p)}_t(i,I)-\Phi^{(p)}_t(j,A)}^2_{\ell^2}.
\end{equation}	
\end{proposition}
\begin{proof}
	See Appendix~\ref{appendix:eqvInvDistancePropPrf}.
\end{proof}
The distance~$M_{p,t}$ can be approximated by solving the least-squares problem in~\eqref{eqvDmaps:minEqvDistEffcient},
with~$\Phi^{(p)}_t(i,A)$ replaced by the truncation~$\Phi^{(p)}_{\delta,t}(i,A)$ of~\eqref{eqvDmaps:eqvEmbeddingFinite}. By using the second assertion of Proposition~\ref{eqvDmaps:truncEmbedEqvProp}, and in particular, that for each~$B\in G$ the matrix~$U(B)$ in the assertion is a block-diagonal matrix with the IURs of~$G$ on its diagonal, the squared distances on the right hand side of~\eqref{eqvDmaps:minEqvDistEffcient} can be computed efficiently for low-dimensional groups~$G$ for which there exist FFT-type algorithms. We describe such a procedure in detail for the rotations group~$SO(3)$ in Part~I of this work~\cite{GGL}, and show that computing the squared distances~$\|\Phi^{(p)}_t(i,I)-\Phi^{(p)}_t(j,A)\|^2_{\ell^2}$ above over a~$K$ points discretization of~$SO(3)$ can be accomplished with~$O(K\cdot \log^2 K+\abs{\mathcal{I}_{\delta,t}})$ operations, where~$\abs{\mathcal{I}_{\delta,t}}$ (the cardinality of~\eqref{eqvDmaps:truncIdx}) is the dimension of the embedding~\eqref{eqvDmaps:eqvEmbeddingFinite}.

Unfortunately, even though the distance $M_{p,t}$ of~\eqref{eqvDmaps:minEqvDist} is group-invariant, in general, it is not guaranteed that there exists an embedding $\Upsilon$ such that $M_{p,t}(i,j) = \norm{\Upsilon(i)-\Upsilon(j)}$ for every pair~$i,j\in [N]$. Regardless, we can use the property~\eqref{eqvDmaps:minEqvDistEffcient} of~$M_{p,t}$ to~``align'' pairs of points~$x_i,x_j\in X$ that are close up to the action of~$G$. That is, we can compute the element~$A\in G$ that solves~\eqref{eqvDmaps:minEqvDistEffcient}, and then compute~$A\cdot x_j$. An important application (mentioned in Section~\ref{secIntro}) is to align pairs of images that are approximately rotations of each other.  

Next, we employ the eigenfunctions and eigenvalues of the~$G$-GL to define a group-invariant embedding of the data set~$X$, as follows. 
\begin{definition}
For all~$t=0,1,2,\ldots$, we define a~$G$-invariant embedding of the data set~$X$ by the function~$\Psi_t^{(p)}:[N]\rightarrow \ell^2$ defined as
\begin{equation}\label{invDmaps:probInvEmbeding}
	\Psi_t^{(p)}(i) = \left(  \sqrt{d_{\ell}}\cdot\lambda_{n,\ell}^{t}\lambda_{n',\ell}^{t}\cdot  \dprod{(e^i(v^{(p)}_{n,\ell}))}{e^i(v^{(p)}_{n',\ell})}\right)_{n,n'=1,\;\ell\in \I_G}^{N}.
\end{equation}
\end{definition}

By Lemma~\ref{eqvDmaps:eqvEigenVecLemma}, we get that~$\Psi_t^{(p)}(i)$ is~$G$-invariant, since if~$x_j = A\cdot x_i$ for some~$A\in G$, then we have that
\begin{equation}
	\dprod{e^j(v^{(p)}_{n,\ell})}{e^j(v^{(p)}_{n',\ell})} = \dprod{\overline{U^\ell(A)}e^i(v^{(p)}_{n,\ell})}{\overline{U^\ell(A)}e^i(v^{(p)}_{n',\ell})}=\dprod{e^i(v^{(p)}_{n,\ell})}{e^i(v^{(p)}_{n',\ell})},
\end{equation}
for all $\ell\in \I_G$ and $n\in[N]$. 
We can now define a~$G$-invariant diffusion distance between the points in~$X$.
\begin{definition}\label{invDmaps:invEmbdDef}
	For all~$t=0,1,2,\ldots$, we define the~$G$-invariant diffusion distance over~$X$ by
	\begin{equation}\label{invDmaps:probInvKernelMap}
		E_{p,t}(i,j) = \norm{\Psi_t^{(p)}(i)-\Psi_t^{(p)}(j)}_{\ell^2}, \quad i,j\in[N].
	\end{equation}
\end{definition}
\begin{remark}\label{invDmaps:truncationRemark}
	Similarly to what we did in the previous section for the equivariant embedding~$\Phi_t^{(p)}$ defined in~\eqref{eqvDmaps:eqvEmbedding}, we can truncate the embedding~$\Psi_t^{(p)}$ to obtain a finite-dimensional~$G$-invariant embedding by thresholding the sequence $\left\{\lambda_{n,\ell}^{t}\lambda_{n',\ell}^{t}\right\}$ using a rule analogous to~\eqref{eqvDmaps:truncIdx}. 
\end{remark}

In Section~\ref{sec:GEqvDmaps}, we defined the diffusion distance~$D_{p,t}(i,j)$ in~\eqref{eqvDmaps:eqvDist} as the distance between the probability densities of random walks over~$[N]\times G$ that depart from the points~$x_i$ and~$x_j$. We then showed that both the distance~$D_{p,t}(i,j)$ and the induced embedding~$\Phi^{(p)}_t(i,A)$ in~\eqref{eqvDmaps:eqvEmbedding} are~$G$-equivariant. 
We now show that the~$G$-invariant distance~$E_{p,t}$ defined above in~\eqref{invDmaps:probInvKernelMap} can be expressed as a distance between certain probability densities related to random walks over~$[N]\times G$. 

To that end, we first illustrate that the embedding~\eqref{eqvDmaps:eqvEmbedding} is~$G$-equivariant due the fact that for each~$i\in[N]$ the correspondence 
\begin{equation}\label{eqvDmaps:xi2densityMap}
	i\mapsto P^t_{i,I}
\end{equation}
between the point~$x_i$ (which we identify with~$(i,I)\in [N]\times G$) and the probability density $P^t_{i,I}$ (defined in~\eqref{eqvDmaps:PiAtcdotcdotDef}) of the random walk on~$[N]\times G$ that departs from~$(i,I)$ is by itself~$G$-equivariant. Indeed, consider the action `$\circ$' of~$G$ on functions in~$\Hspace$, and in particular  on~$P_{i,I}^t\in \Hspace$, defined by
\begin{equation}\label{eqvDmaps:fTranslationDef}
	\left\{Q\circ P^t_{i,I}\right\}(k,C) = P^t_{i,I}(k,Q^*C), \quad (k,C)\in[N]\times G.
\end{equation} 
For any fixed~$k_0\in [N]$, the function~${P^t_{i,I}(k_0,Q^*C)}\in L^2(G)$ is known as the left-translation of~$P^t_{i,I}(k_0,C)$ by~$Q$ (see~\cite{lieGroupsHall}), and thus by extension, we refer to~$P^t_{i,I}(k,Q^*C)\in L^2([N]\times G)$ in~\eqref{eqvDmaps:fTranslationDef} as the left-translation of~$ P^t_{i,I}(k,C)$ by~$Q$. 
The following proposition show that the correspondence~\eqref{eqvDmaps:xi2densityMap} is~$G$-equivariant with respect to left-translations.
\begin{proposition}\label{eqvDmaps:eqvActLemma}
	Suppose that $x_j=Q\cdot x_i$ for some~$Q\in G$. Then, we have that
	\begin{equation}\label{eqvDmaps:probDensityRelation}
			P^t_{j,I}(k,C) =\left\{Q\circ P^t_{i,I}\right\}(k,C), \quad (k,C)\in [N]\times G.
	\end{equation}
\end{proposition}
\begin{proof}
	See Appendix~\ref{eqvDmaps:eqvActLemmaPrf}.
\end{proof}
In words, equation~\eqref{eqvDmaps:probDensityRelation} implies that the density~$P^t_{j,I}$ is the left translation by~$Q$ of~$P^t_{i,I}$. This shows that a translation of~$x_i\in X$ by~$Q\in G$ results in a translation of~$P^t_{j,I}\in \Hspace$ by~$Q$.

We now define a~$G$-invariant correspondence that maps each $x_i\in X$ to a certain probability density related to random walks over~$[N]\times G$, and show that the~$G$-invariant distance~$E_{p,t}(i,j)$ of~\eqref{invDmaps:probInvKernelMap} is the distance between the densities to which~$x_i$ and~$x_j$ are mapped. 
We begin with the following definition. 
\begin{definition}
	Given a pair of functions~$f,g\in L^2(G)$, their cross-correlation function~$f\star g \in L^2(G)$ is defined as
	\begin{equation}\label{invDmaps:crossCorrDef}
	(f\star g)(C) = \int_G f(A)\cdot g(A C)d\eta(A). 
\end{equation}
\end{definition}
Now, consider the Hilbert space~$L^2\left([N]^2\times G\right)$ of functions of the form~$h(i,j,A)$ such that for each fixed $i_0,j_0\in [N]$ we have that $h(i_0,j_0,A)\in L^2(G)$.
Given a pair of functions~$f,g\in\Hspace$, we denote by `$\gconv$' the operation defined as
\begin{equation}\label{invDmaps:starGdef}
	(f\gconv g)(k,r,R) \triangleq \left\{f(k,\cdot)\star g(r,\cdot)\right\}(R), \quad (k,r,R)\in [N]^2\times G.
\end{equation}
That is, the function $f\gconv g\in L^2([N]^2\times G)$ is the cross-correlation  over~$G$ of~$f(k,\cdot)$ and~$g(r,\cdot)$, for each~$(k,r)\in [N]^2$. 
Now, for each~$x_i\in X$, consider the correspondence 
\begin{equation}\label{invDmaps:invProbKernel}
	i\mapsto P_{i,I}^t\gconv P_{i,I}^t,
\end{equation}
where by~\eqref{invDmaps:crossCorrDef}, we have that
\begin{align}
	\left\{P_{i,I}^t\gconv P_{i,I}^t\right\}(k,r,R) &= \left\{P_{i,I}^t(k,\cdot)\star P_{i,I}^t(r,\cdot)\right\}(R)\label{invDmaps:coupledRandomWalkDensity} \\
	&=\int_G P_{i,I}^t(k,C)\cdot P_{i,I}^t(r,CR)d\eta(C).\label{invDmaps:coupledRandomWalkDensity1}
\end{align}\\
The following proposition asserts that the correspondence~\eqref{invDmaps:invProbKernel} is indeed~$G$-invariant. 
\begin{proposition}\label{invDmaps:invMapInvarianceProp}
	Supppose that $x_j=Q\cdot x_i$ for some~$Q\in G$. Then, we have that 
	\begin{equation}
		P_{i,I}^t\gconv P_{i,I}^t = P_{j,I}^t \gconv P_{j,I}^t.
	\end{equation}
\end{proposition}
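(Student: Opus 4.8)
The plan is to reduce the claimed equality of functions on $[N]^2\times G$ to a pointwise identity, and then push the group element $Q$ through the $G$-integral using the left-invariance of the Haar measure. Fix $k,r\in[N]$ and $R\in G$. By the definition of $\gconv$ together with~\eqref{invDmaps:coupledRandomWalkDensity}, proving the proposition amounts to showing
\begin{equation*}
\int_G P_{j,k}^t(I,C)\,P_{j,r}^t(I,CR)\,d\eta(C)=\int_G P_{i,k}^t(I,C)\,P_{i,r}^t(I,CR)\,d\eta(C)
\end{equation*}
for every such $k,r,R$, since equality of these integrals for all arguments is exactly equality of the two correlation functions.

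The main step is to rewrite the left-hand integrand in terms of the $i$-indexed kernels. Since $x_j=Q\cdot x_i$, the equivariance relation~\eqref{eqvDmaps:eqvProbDensity1} (read componentwise in the free index, as in~\eqref{eqvDmaps:eqvProbDensity}) gives $P_{j,k}^t(I,C)=P_{i,k}^t(I,Q^*C)$ and, replacing $C$ by $CR$, also $P_{j,r}^t(I,CR)=P_{i,r}^t(I,Q^*CR)$. Substituting both factors turns the left-hand integral into $\int_G P_{i,k}^t(I,Q^*C)\,P_{i,r}^t(I,Q^*CR)\,d\eta(C)$. I would then perform the change of variables $\tilde C=Q^*C$; since $Q^*CR=\tilde C R$ and the Haar measure is left-invariant by~\eqref{secLieGroupAction:leftInvar} (so that $d\eta(C)=d\eta(\tilde C)$), the integral becomes $\int_G P_{i,k}^t(I,\tilde C)\,P_{i,r}^t(I,\tilde C R)\,d\eta(\tilde C)$, which is precisely the right-hand side. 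As this holds for all $k,r,R$, the two functions coincide.

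This argument is the two-index analogue of the single-orbit computation carried out in~\eqref{invDmaps:invarCrossCorrEq3}--\eqref{invDmaps:invarCrossCorr}, where the autocorrelation of $P_i^t$ was shown to be constant on $G\cdot x$; here the extra indices $k$ and $r$ are simply carried along untouched, because $\gconv$ correlates only along the $G$ direction. The only point that requires care --- and the step I would be most careful to justify --- is that~\eqref{eqvDmaps:eqvProbDensity1} holds componentwise in the free index rather than merely for the fully summed density, so that the shift by $Q^*$ can be applied to each factor $P_{j,k}^t$ and $P_{j,r}^t$ separately before integrating. Once that is in hand, the statement follows immediately from the shift-invariance of correlation combined with Haar left-invariance, with no genuine obstacle remaining.
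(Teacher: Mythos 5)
Your proof is correct and follows essentially the same route as the paper's own argument in Appendix~B: both fix $(k,r,R)$, apply the componentwise equivariance relation~\eqref{eqvDmaps:eqvProbDensity1} to each factor, substitute $\tilde{C}=Q^{*}C$, and invoke the left-invariance~\eqref{secLieGroupAction:leftInvar} of the Haar measure. Your explicit remark that~\eqref{eqvDmaps:eqvProbDensity1} holds in the free index~$k$ (not merely for the summed density) is exactly the point the paper's proof relies on implicitly, so nothing is missing.
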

\begin{proof}
	See Appendix~\ref{appendix:invMapInvariancePropPrf}. 
\end{proof}

\begin{remark}
	Proposition~\ref{invDmaps:invMapInvarianceProp} implies that the correspondence~\eqref{invDmaps:invProbKernel} is~$G$-invariant. 
	We can also consider mapping each point~$i$ by
	\begin{equation}\label{invDmaps:invKernel2}
		i\mapsto P_{i,I}^{t_1} \gconv P_{i,I}^{t_2},
	\end{equation}
	for arbitrary pairs of times $t_1$ and~$t_2$. Repeating the steps of the proof of Proposition~\ref{invDmaps:invMapInvarianceProp}, we get that the correspondence~\eqref{invDmaps:invKernel2} is also~$G$-invariant for any pair of positive integer times~$t_1$ and~$t_2$. The latter suggests that we can construct embeddings of~$X$ by combining distinct diffusion time pairs.
\end{remark}

The following theorem asserts that the function on the left hand side of~\eqref{invDmaps:coupledRandomWalkDensity} is a probability density over $[N]^2\times G$, and that the distance~$E_{p,t}(i,j)$ in \eqref{invDmaps:probInvKernelMap} equals the distance between the probability densities corresponding to~$i\in [N]$ and~$j\in[N]$ via~\eqref{invDmaps:invProbKernel}. 
\begin{theorem}\label{invDaps:distEquivalenceThrm}
	$\text{}$
	\begin{enumerate}
		\item For all $i\in [N]$ and~$t\in \mathbb{N}$, we have that~$P_{i,I}^t\gconv P_{i,I}^t\geq 0$, and furthermore, that
		\begin{equation}\label{invDmaps:jointDensEq2}
			\sum_{k,r=1}^{N} \int_G \left\{P_{i,I}^t\gconv P_{i,I}^t\right\}(k,r,R)d\eta(R) =1.
		\end{equation}
	In other words, the function~$P_{i,I}^t\gconv P_{i,I}^t$ is a probability density over $[N]^2\times G$.
	   
	    \item The diffusion distance~$E_{p,t}$ in~\eqref{invDmaps:probInvKernelMap} is given by
	    \begin{equation}\label{invDmaps:ProbDiffDist}
	    	E_{p,t}(i,j) = \norm{ P_{i,I}^t\gconv P_{i,I}^t-P_{j,I}^t\gconv P_{j,I}^t}_{L^2\left([N]^2\times G,d\eta/D\otimes D\right)},
	    \end{equation}
	    where~$L^2\left([N]^2\times G,d\eta/D\otimes D\right)$ is the Hilbert space of functions~$f(i,j,A)$ such that for each fixed $i_0,j_0\in [N]$ we have that $f(i_0,j_0,A)\in L^2(G)$, and with inner product given by
	    \begin{equation}\label{invDmaps:weightedHspace}
	    	\dprod{f}{g}_{L^2\left([N]^2\times G,d\eta/D\otimes D\right)}\triangleq\sum_{i,j=1}^N\int_G f(i,j,C)\cdot \overline{g(i,j,C)} \frac{d\eta(C)}{D_{ii}\cdot D_{jj}}.
	    \end{equation}
	\end{enumerate}
\end{theorem}
\begin{proof}
	See Appendix~\ref{appSec4:invDiffDistPrf}.
\end{proof}

We now relate the probability density~$P_{i,I}^t\gconv P_{i,I}^t$ to random walks over the data. Assuming that~$A\cdot x_i\neq x_i$ for (almost) all $A\in G$ and all $x_i\in X$ (as in Theorem~\ref{secGGL:GGLdecomp}), we get that there exists a $1-1$ correspondence between each orbit~$G\cdot x_i$ and~$G$, given by the map $A\mapsto A\cdot x_i$. Thus, we may think of the domain $[N]\times G$ as a set of coordinates over~$G\cdot X$, where the coordinate of a point~$A\cdot x_i\in G\cdot X$ is given by~$(i,A)\in [N]\times G$. 
In terms of the latter, we say that a random walk over $G\cdot X$ can either move along the ``$[N]$-direction'' or the ``$G$-direction'' (or both) in the domain~$[N]\times G$.
Now, let $X_{1,t}$ and~$X_{2,t}$ be the positions at time~$t$ of a pair of random walks over~$[N]\times G$ that depart together from~$(i,I)$ at time~$t=0$, both with probability density given by~$P^t_{i,I}$. Furthermore, let~$N_{1,t}$ and~$N_{2,t}$ denote the random variables that take the values~$k$ and~$r$, respectively, whenever $X_{1,t}=(k,A)$ and~$X_{2,t}=(r,B)$ for some~$A,B\in G$. In other words, $N_{1,t}$ and~$N_{2,t}$ are the coordinates of~$X_{1,t}$ and~$X_{2,t}$ in the~$[N]$-direction. Similarly, let $G_{1,t}$ and~$G_{2,t}$ denote the random variables that assume the values~$A$ and~$B$, respectively, whenever~$X_{1,t}=(k,A)$ and~$X_{2,t}=(r,B)$ for some~$k,r\in [N]$, and so, $G_{1,t}$ and~$G_{2,t}$ are the coordinates of the random walks in the~$G$-direction. Lastly, we define the ``displacement'' of~$X_{1,t}$ and~$X_{2,t}$ in the $G$-direction'' as the random variable~$R_t=G_{1,t}^*\cdot G_{2,t}$, which is the relative position of the random walks in the~$G$-direction. We now have the following proposition.
\begin{proposition}\label{invDmaps:genDiplacementLemma}
	Let $X_{1,t}$ and~$X_{2,t}$ be the positions at time~$t$ of a pair of independent random walks over~$[N]\times G$ that depart from~$(i,I)$, both with probability density given by~$P^t_{i,I}$.
	For any $k,r\in [N]$ and~$H\subseteq G$, consider the event
	\begin{equation}\label{invDmaps:displacementEvent}
		H_{kr} = \left\{N_{1,t}=k,\; N_{2,t}=r,\; R_t \in H \right\},
	\end{equation}
	whereby at time~$t$ the random walks~$X_{1,t}$ and~$X_{2,t}$ had reached the orbits~$G\cdot x_k$ and~$G\cdot x_r$, respectively, and their displacement~$R_t$ in the~$G$-direction~is in~$H$. Then, we have that
	\begin{equation}\label{invDmaps:displaceProbLaw}
		\prob{H_{kr}} = \int_H\left\{P_{i,I}^t \gconv P_{i,I}^t\right\}(k,r,R)d\eta(R), 
	\end{equation}
	where~`$\star_G$' is defined in~\eqref{invDmaps:starGdef}.
\end{proposition}
\begin{proof}
	See Appendix~\ref{invDmaps:genDiplacementLemmaPrf}
\end{proof}
In light of Proposition~\ref{invDmaps:genDiplacementLemma}, we conclude that Theorem~\ref{invDaps:distEquivalenceThrm} asserts that~$E_{p,t}$ from~\eqref{invDmaps:probInvKernelMap} is the distance between the densities of the displacements of two pairs of random walks over~$[N]\times G$. 
To further clarify the implications of Theorem~\ref{invDaps:distEquivalenceThrm} and Proposition~\ref{invDmaps:genDiplacementLemma}, consider the special case where~$\man = G\cdot x_k$ for some arbitrary fixed point~$x_k\in X$. That is, the entire manifold~$\man$ is the single orbit generated by the action of~$G$ on the single point~$x_k$. 
Since for any~$x_i\in \man$ there exists an element~$A\in G$ such that $x_i = A\cdot x_k$, we may think of the point~$x_k$ as the origin of the space~$\man =G\cdot x_k$, analogously to the origin in a Euclidean space, and consider the elements of~$G$ as coordinates on~$\man$. If we express every point that the random walks reach in~$[N]\times G$ in coordinates with respect to the origin~$x_k$, then every point takes the form~$(k,A)$ for some~$A\in G$ (and a fixed $k\in [N]$). Consequently, in these ``coordinates'', the random variables~$N_{1,t}$ and~$N_{2,t}$ above are both constant, taking the value~$k$. This implies that in this case~\eqref{invDmaps:displaceProbLaw} is a probability distribution of the displacement~$R_t$ (the relative position) of a pair of random walks on~$\man$ that depart simultaneously from some point~$x_i$, where the probability density of~$R_t$ is given, due to~Theorem~\ref{invDaps:distEquivalenceThrm} and Proposition~\ref{invDmaps:genDiplacementLemma}, by~$P_{i,I}^t\gconv P_{i,I}^t$ of~\eqref{invDmaps:coupledRandomWalkDensity}.  
While the distributions of the positions of a pair of random walks certainly depend on their departure point, the distribution of their relative position does not. Proposition~\ref{invDmaps:invMapInvarianceProp} formalizes the latter statement, by asserting that the density~$P_{i,I}^t\gconv P_{i,I}^t$ is~$G$-invariant, which is exactly the motivation behind the~$G$-invariant correspondence~\eqref{invDmaps:invProbKernel}.
In the general case, where~$\man$ consists of multiple orbits,  Proposition~\ref{invDmaps:invMapInvarianceProp} asserts that the density~$P_{i,I}^t\gconv P_{i,I}^t$ is unchanged by varying the departure point in the~$G$-direction, that is, when~$x_i$ is replaced by~$Q\cdot x_i$ for some~$Q\in G$. 

\section{Numerical experiments}\label{sec:CryoSimulations}
\subsection{Basic examples}
In this section, we corroborate the key properties of the distances derived in the previous sections using some simulated~$G$-invariant data sets. In particular, we demonstrate the~$G$-equivariance of~$D_{p,t}$~in~\eqref{eqvDmaps:eqvDist}, and~$G$-invariance of~the distance $E_{p,t}$ in~\eqref{invDmaps:probInvKernelMap}. 

In the following simulation, we consider the action of the group~$G=SO(2)$ of~2D rotations on the 2D torus~$\mathbb{T}^2\subset \mathbb{R}^3$, given by
\begin{equation}\label{numericsSec:rotActOnT2}
	\begin{pmatrix}
		\cos\varphi &-\sin\varphi&\\
		\sin\varphi & \phantom{tt}\cos\varphi&\\
		&&1
	\end{pmatrix} \begin{pmatrix}
	x\\y\\z
\end{pmatrix}, \quad \left(x,y,z\right)^T\in \mathbb{T}^2, 
\end{equation}
where~$\mathbb{T}^2$ is given by
\begin{align}
	&x(\theta,\varphi) = (R+r\cos\varphi)\cos\phi, \\
	&y(\theta,\varphi) = (R+r\cos\varphi)\sin\phi, \\
	&z(\theta,\varphi) = r\sin\theta,	
\end{align} 
with~$\theta,\varphi\in [0,2\pi)$, and $R=2,r=1$. 

To demonstrate that the distance $D_{p,t}$ in~\eqref{eqvDmaps:eqvDist} is~$SO(2)$-equivariant, we first sampled a data set~$X$ of~$N=10000$ points from a uniform distribution over~$\mathbb{T}^2$. Next, we randomly chose a point~$x_i\in X$ and added to~$X$ the point $x_{N+1}$ that results from a rotation of~$x_i$ by $180^\circ$ about the $z$ axis. We then used the data set $\tilde{X} = X\cup \left\{x_{N+1}\right\}$ to compute and factor the matrices~$W^{(\ell)}$ given by~\eqref{secGGL:blockFourierMat}. We used the resulting eigenvectors and eigenvalues to compute the~$SO(2)$-equivariant embedding~$\Phi_t^{(p)}$ of~$\eqref{eqvDmaps:eqvEmbedding}$, where we chose the time parameter~$t=3$, and truncated~\eqref{eqvDmaps:eqvEmbedding} by applying the rule~\eqref{eqvDmaps:truncIdx} with~$\delta=0.1$. 
Finally, we computed the distances~$D_{p,t}$ of each point~$x\in X$ from~$x_i$ and from~$x_{N+1}$. 
Figure~\ref{fig:eqvTori} depicts the diffusion distances of the points in~$X$ from~$x_i$ and~$x_{N+1}$, respectively, superimposed on the torus~$\mathbb{T}^2\subset \mathbb{R}^3$ as a heat map. In Figure~\ref{fig:toriDista} we plot the value~$D_{p,t}(i,j)$ at~$x_j\in \mathbb{T}^2$ for each~$j\in [N]$, and similarly, in Figure~\ref{fig:toriDistb} we plot the value~$D_{p,t}(N+1,j)$ at~$x_j$. 
We observe that for a fixed~$i\in [N]$, the distance~$D_{p,t}(i,\cdot)$ is localized in a neighborhood of~$x_i$, and furthermore, that the distances~$D_{p,t}(N+1,\cdot)$ are a rotation of~$D_{p,t}(i,\cdot)$ by~$180^\circ$, which demonstrates that~$D_{p,t}$ is~$SO(2)$-equivariant. 

\begin{figure}
	\centering
	\subfloat[]  	
	{
		\includegraphics[width=0.45\textwidth]{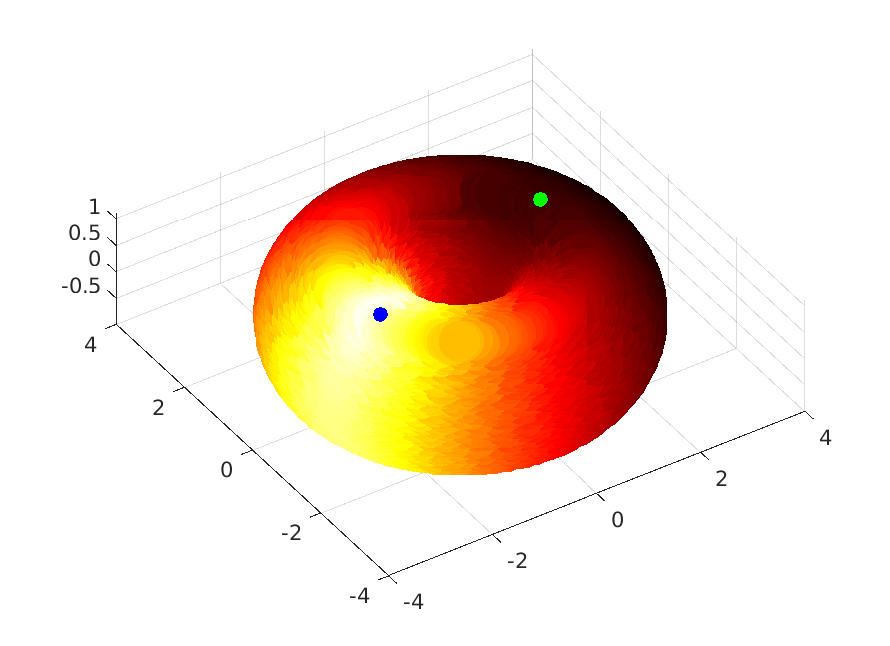}
		\label{fig:toriDista}
	}
	\subfloat[]    
	{ 
		\includegraphics[width=0.45\textwidth]{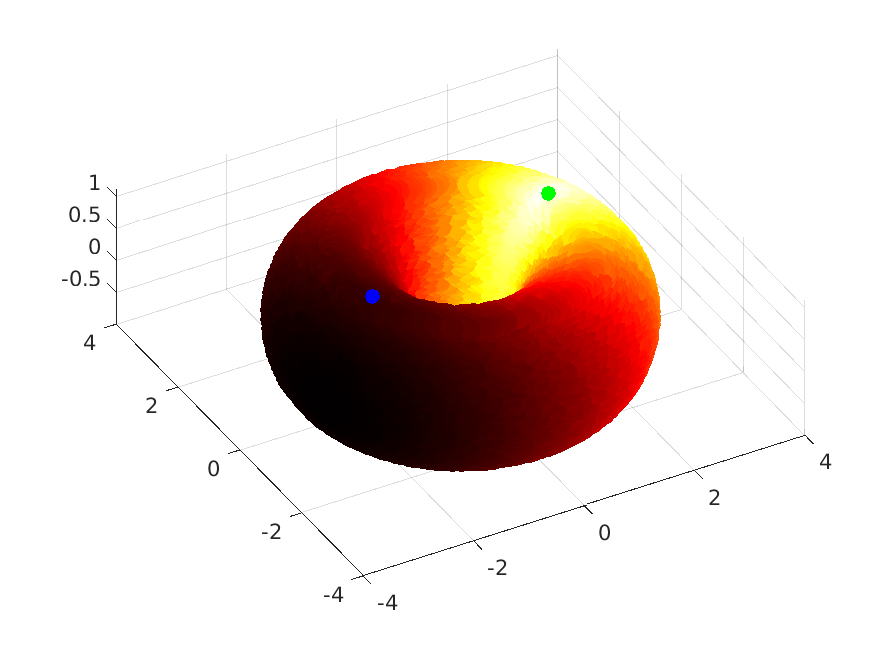}
		\label{fig:toriDistb}
	}
	\caption[]
	{Figure~(a) shows the $SO(2)$-equivariant diffusion distances~\eqref{eqvDmaps:eqvDist} of the points in the simulated data set~$X$ from~$x_i\in X$ (marked by a blue dot), depicted as a heat map superimposed on the torus~$\mathbb{T}^2\subset \mathbb{R}^3$. Figure~(b) shows the distances~\eqref{eqvDmaps:eqvDist} of the points in~$X$ from~$x_{N+1}$ (marked by the green dot). The images indicate that rotating~$x_i$ by~$180^\circ$ rotates the heat distribution by the same angle.} \label{fig:eqvTori}
\end{figure}

Next, we used the eigenvectors and eigenvalues of~$P_{op}^t$ for~$t=3$ to compute the~$SO(2)$-invariant embedding~$\Psi_t^{(p)}$ of~\eqref{invDmaps:probInvEmbeding}, truncated 
as suggested in Remark~\ref{invDmaps:truncationRemark} in Section~\ref{sec:invDmaps}, with~$\delta = 0.1$. Figure~\ref{fig:invDista} depicts the values of the~$SO(2)$-invariant diffusion distances~$E_{p,t}$ of~\eqref{invDmaps:probInvKernelMap} from~$x_i$ to the points in~$X$, superimposed on the torus~$\mathbb{T}^2\subset \mathbb{R}$ as a heat map, where as before, each value~$E_{p,t}(i,j)$ is plotted at~$x_j\in X$. It is seen that the heat distribution is constant along the direction of the action of $SO(2)$ in~\eqref{numericsSec:rotActOnT2}, that is, over the horizontal circles in~$\mathbb{T}^2$, which demonstrates that $E_{p,t}$ is invariant under the action~\eqref{numericsSec:rotActOnT2}. 

We also repeated the previously described simulations with points sampled from the two-dimensional unit sphere~$S^2\subset \mathbb{R}^3$ coupled with the action of~$SO(2)$ by rotations about the~$z$-axis. The results are depicted in Figures~\ref{fig:invDistb} and~\ref{fig:eqvDistSphere} in the same manner as for the case of~$\mathbb{T}^2$, implying the same qualitative picture as for~$\mathbb{T}^2$. 

\begin{figure}
	\centering
	\subfloat[] 	
	{
		\includegraphics[width=0.5\textwidth]{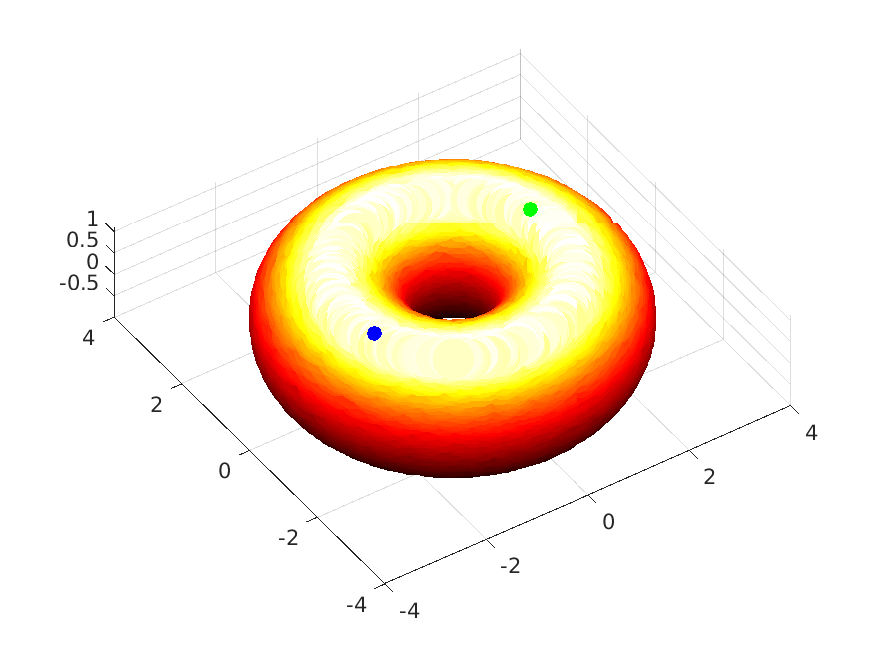}
		\label{fig:invDista}
	}
	\subfloat[]    
	{ 
		\includegraphics[width=0.45\textwidth]{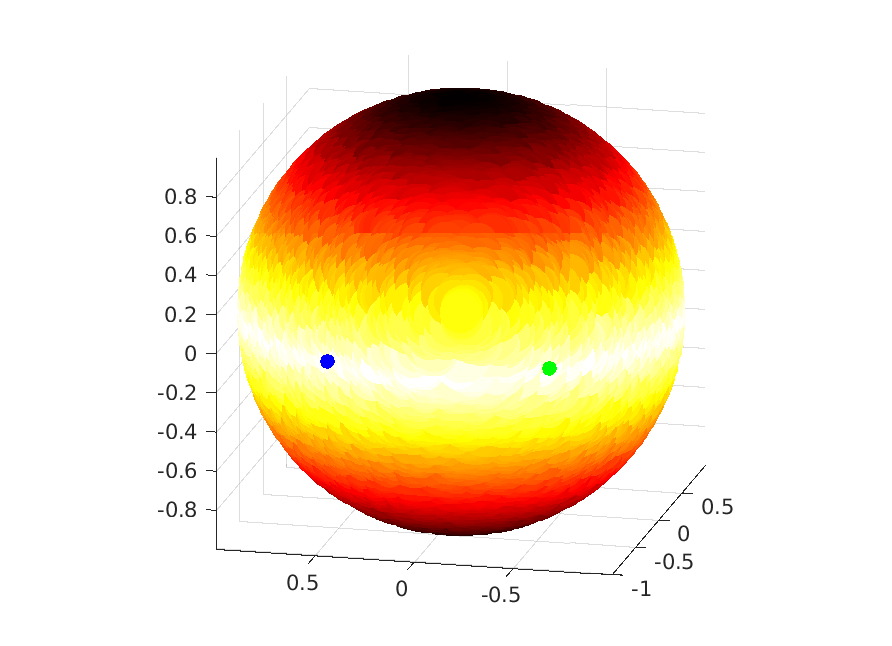}
		\label{fig:invDistb}
	}
	\caption[]
	{Figure (a) displays the~$SO(2)$-invariant diffusion distance of the points in the simulated data set~$X$ from~$x_i\in X$ (marked by the blue point). Figure (b) shows the same simulation repeated with points sampled from the 2-sphere. In both cases, the distances are constant over the orbits induced by the action of~$SO(2)$ by rotations about the~$z$-axis, namely, the horizontal circles.} \label{fig:invDistHeatmap}
\end{figure}

\begin{figure}
	\centering
	\subfloat[]  	
	{
		\includegraphics[width=0.43\textwidth]{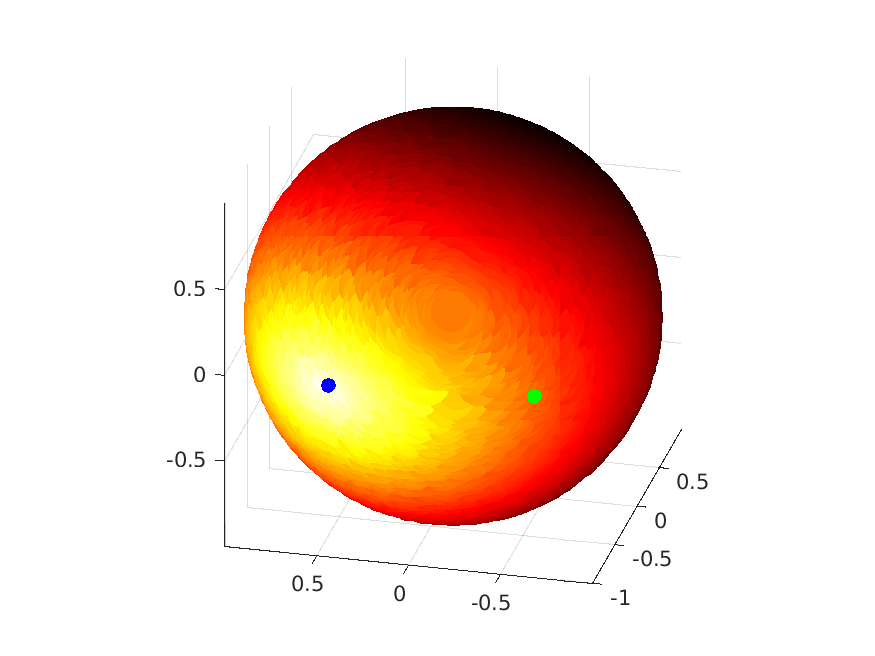}
	}\label{fig:sphereDista}
	\subfloat[]    
	{ 
		\includegraphics[width=0.45\textwidth]{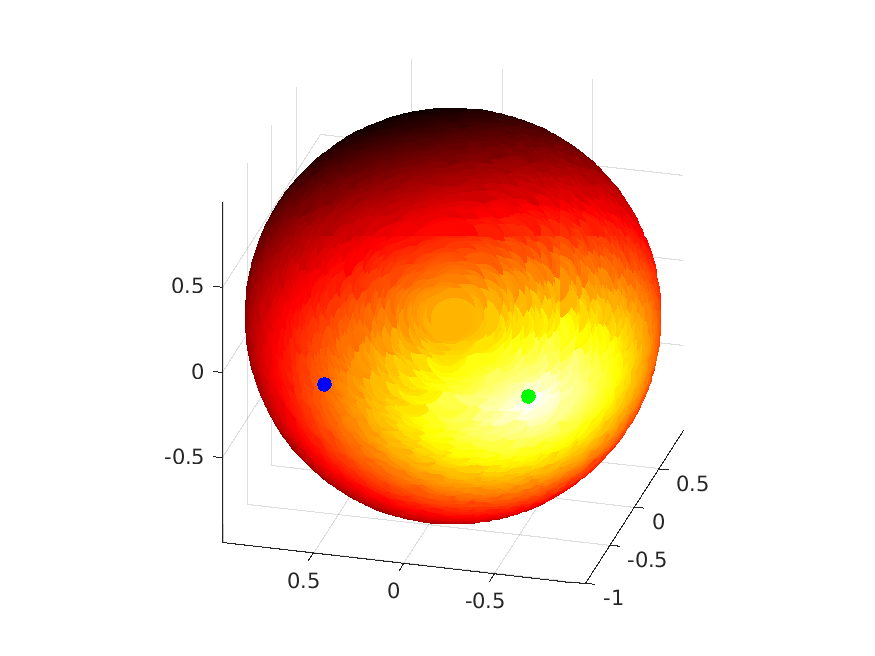}
	}\label{fig:sphereDistb}
	\caption[]
	{Heat distributions resulting from repeating the simulation depicted in Figure~\ref{fig:eqvTori} above with data sampled from the~$2$-sphere. In this case, the green point is a rotation of~$x_i$ (marked by the blue point) by~$60^\circ$ about the~$z$-axis. The distribution of the diffusion distances of the points in the simulated data set from the rotation of~$x_i$ by $60^\circ$ in Figure~(b) is obtained by rotating the distribution of the distances from~$x_i$ in Figure~(a) by~$60^\circ$, showing a similar qualitative picture to that of the simulation with the torus.} \label{fig:eqvDistSphere}
\end{figure}

\subsection{Random computerized tomography with random angles and shifts}\label{sec:randTomography}
In this section, we consider the problem of random computerized tomography~\cite{basuBresFeasability}, where the goal is to reconstruct a~2D image from its~1D Radon transform projections taken at random unknown orientations and shifts. We  demonstrate the utility of our framework in resolving this problem.

Let~$I(x,y)\in L^2(\mathbb{R}^2)$ be an image, finitely supported in~$\mathbb{R}^2$. The Radon transform~$P_\varphi(t)$ of~$I$ in the direction~$\varphi \in [0,2\pi)$ is defined as the line integral of~$I$ along the line~$L$, which is inclined at an angle~$\phi$ and is at distance~$t$ from the origin. That is,  
\begin{equation}\label{numerSec:radonProj}
	P_\varphi(t) = \int_L I(x,y)ds = \int_{-\infty}^\infty\int_{-\infty}^\infty I(x,y)\delta(x\cos\varphi+y\sin\varphi-t)dxdy, 
\end{equation}
where~$\delta(x)$ is the Dirac delta.
Given a finite set of samples~$X = \left\{x_1,\ldots,x_N\right\}$ of the Radon transform, where~$x_i=(P_{\varphi_i}(t_1),\ldots,P_{\varphi_i}(t_m))$, each generated at known angles~$\varphi_1,\ldots,\varphi_N$ and fixed equally spaced distances~$t_1,\ldots,t_m$ from the origin, tomographic reconstruction algorithms~\cite{natterFrank} estimate the image~$I$ from these samples. In random tomography, the goal is to reconstruct an image~$I$ from the set~$X$ above in the case where the angles~$\varphi_1,\ldots,\varphi_N$ are unknown but were sampled uniformly at random from~$[0,2\pi)$ (and as before~$t_1,\ldots,t_m$ are known).

In~\cite{glRandTomography}, it was shown that this problem can be reduced to tomographic reconstruction with known angles outlined above, by ordering the projections according to the angles at which they were generated, and reconstructing the image by setting~$\varphi_i = 2\pi i/N$ for each~$i\in [N]$.
Specifically, it was shown that the angles~$\varphi_1,\ldots, \varphi_N$ can be sorted as follows. First, use~$X$ to construct the density-invariant graph Laplacian matrix~$\overline{L}=D^{-1} L D^{-1}$ \cite{diffMaps}, where~$L$ and~$D$ are defined in~$\eqref{intro:classicalGLDef}$. Then, compute the diffusion maps embedding using the two non-trivial leading eigenvectors~$\phi_2,\phi_3$ of~$\overline{L}$. With a proper choice of the bandwidth~$\epsilon$, the embedded projections reside on a circle, and moreover, they are ordered according to the projection angels~$\varphi_i$ (up to a rotation by an arbitrary angle). Thus, we can estimate the angles of the projections via
\begin{equation}\label{numerSec:ordAng}
	\tilde{\varphi}_i = \text{atan2}(\phi_2(i),\phi_3(i)),\quad i\in [N],
\end{equation}  
and use these angles to order the projections.
Then, the image is reconstructed by reordering the projections with respect to~$\tilde{\varphi}_i$, and setting for the reordered projections $\varphi_{i}$ to be $2\pi i/N$. The procedure above is outlined in Algorithm~\ref{numerSec:ordAlg}.
\begin{algorithm}
	\caption{Image reconstruction from random projections}\label{numerSec:ordAlg}
	\begin{algorithmic}[1]
		\Statex{\textbf{Input:}}
		Projection images~$x_i = \left(P_{\varphi_i}(t_1),\ldots,P_{\varphi_i}(t_m)\right)$, for~$i=1,\ldots, N$. 	
		\State{Compute~$D$ and~$L$ from~\eqref{intro:classicalGLDef}, and construct the density-invariant graph Laplacian~$\overline{L} = D^{-1}LD^{-1}$.}
		\State{Compute the two leading non-trivial eigenvectors~$\phi_2,\phi_3$ of~$\overline{L}$. }
		\State{Sort the projections~$x_i$ according to $\tilde{\varphi}_i = \text{atan2}(\phi_2(i),\phi_3(i))$.}
		\State{Reconstruct a 2D image using the sorted projections $x_i$, setting the input angles~$\varphi_i$ to be $2\pi i/N$.} 
	\end{algorithmic}
\end{algorithm}

\begin{remark}\label{numericSec:degreeOfFreedom}
	As explained in~\cite{glRandTomography}, the eigenvectors~$\phi_2,\phi_3$ of~$\overline{L}$ approximate a pair of eigenfunctions of the Laplace-Beltrami operator on the circle~$S^1$, that correspond to an eigenvalue with multiplicity~2. 
	Thus, the computation of~$\phi_2,\phi_3$ may result in any orthogonal combination of these eigenvectors (depending on the numerical procedure used for their computation). 
	This implies that the diffusion maps embedding formed by these eigenvectors is only unique up to an arbitrary rotation and possibly a reflection (orientation of the curve).
	This degree of freedom is manifested in the fact that the reconstruction of the image is only possible up to a rotation and a possible reflection, or by the same token, 	in that the order of the projection angles can be recovered only up to a cyclic permutation, and that it may be possibly flipped.  
\end{remark}

Here, we tackle the case where each projection~$x_i\in X$ is not only generated at a random unknown angle, but also may be independently shifted by a random unknown shift~$s_i$ (to the left or to the right)~\cite{basuBresUniqueness}. In terms of the model formulated above, this corresponds to the setting where the equally spaced distances~$t_1,\ldots,t_m$ may be shifted by a random number~$s_i$, sampled uniformly at random from an interval~$S=[-s_{\max},s_{\max}]$, where~$s_{\max}$ is the maximal shift of a projection. To conclude, our goal is to reconstruct a~2D image from the set~$X = \left\{(P_{\varphi_i}(t_1+s_i),\ldots,P_{\varphi_i}(t_m+s_i))\right\}_{i=1}^N$, where~$\varphi_i$ and~$s_i$ are sampled independently and uniformly at random from~$[0,2\pi)$ and~$S$, respectively.
In this setting, the data points in~$X$ reside in the~two-dimensional manifold comprised of all the projections and their shifts in~$S$. Since Algorithm~\ref{numerSec:ordAlg} requires the projections to reside on a curve, we cannot except it to directly resolve this problem. Indeed, below we demonstrate that in this case Algorithm~\ref{numerSec:ordAlg} fails to recover the order of the projections in~$X$, and so also fails to recover the underlying image.

We now derive a method in which we first unshift the projections in~$X$ such that they reside on a curve~$C$, after which we apply~Algorithm~\ref{numerSec:ordAlg} to the unshifted projections. Formally, our method finds a set of shifts~$\tilde{s}_1,\ldots,\tilde{s}_N$ such that shifting~$x_i = (P_{\varphi_i}(t_1+s_i),\ldots,P_{\varphi_i}(t_m+s_i))$ by~$\tilde{s}_i$ approximates the unshifted projection $(P_{\varphi_i}(t_1),\ldots,P_{\varphi_i}(t_m))$, that is,
\begin{equation}\label{numericSec:reAlignEq}
	(P_{\varphi_i}(t_1+s_i-\tilde{s}_i),\ldots,P_{\varphi_i}(t_m+s_i-\tilde{s}_i)) \approx (P_{\varphi_i}(t_1),\ldots,P_{\varphi_i}(t_m)).
\end{equation}
In a nutshell, the unshifting works as follows. First, we compute an embedding of~$X$ which is invariant to the shifts~$s_i$. In other words, any two projections~$x_i$ and~$x_j$ such that~$x_j$ is a shift of~$x_i$ are embedded into the same point. This enables us to detect which unshifted projections~$(P_{\varphi_i}(t_1),\ldots,P_{\varphi_i}(t_m))$ reside at neighboring points on~$C$. 
Then, we compute an embedding of~$X$ which is equivariant to the shifts~$s_i$, which we employ to find for each such pair of neighboring projections the relative shift~$s_{ij}\in S$ that best aligns~$x_j$ with~$x_i$. 
Finally, we show how to employ an algorithm derived in~\cite{Singer2009AngularSB} to extract the shifts~$\tilde{s}_1,\ldots,\tilde{s}_N$ from the relative shifts~$s_{ij}$. 

In order to implement our method, we would like to construct a shift-invariant graph Laplacian, and compute its associated invariant and equivariant embeddings, derived in the previous sections. 
Unfortunatelly, the set of shifts~$S$ is clearly not a group. Thus, our strategy is to first embed~$S$ into the circle~$S^1$ (a one-dimensional Lie group) via an invertible map~$\Theta$, such that each element~$\nu \in S^1$ acts on~$x_i \in X$ by a unique shift~$s_\nu = \Theta^{-1}(\varphi)$ (where we identify~$S^1$ with~$[0,2\pi)$). Then, we can construct the~$S^1$-GL by using~$X$ with the aforementioned action of~$S^1$, and the~$S^1$-invariant and equivariant embeddings computed from its eigenfunctions and eigenvalues induce the desired embeddings. We now construct such an embedding~$\Theta$. 

Recall that the eigendecomposition of the~$S^1$-GL is derived by using the eigenvectors and eigenvalues of the matrices~$W^{(\ell)}$ from~\eqref{secGGL:blockFourierMat}. The matrices~$W^{(\ell)}$ in our construction are formed by the coefficients of the Fourier series of the functions~$d_{ij}(\varphi) \triangleq \exp\{-\norm{x_i-s_\nu\circ x_j}^2/\epsilon\}$ for all~$i,j\in[N]$. To obtain an everywhere convergent series for each~$i,j\in [N]$, we need to make sure that the definition of the map~$\Theta$ above guarantees that each function~$d_{ij}$ is~$2\pi$-periodic. For that, we observe that the projections in~$X$ are assumed to have been generated from an image~$I$ that is finitely supported in~$\mathbb{R}^2$, and are thus also finitely-supported. Now, Suppose, that their support lies within the interval~$Q = [-q,q]$ for some~$q>0$. Then, for each~$i,j\in [N]$, and any shift~$s$ such that~$\abs{s}\geq \overline{s}$, where~$\overline{s}=\max\{2q,2s_{\max}\}$, we have that
\begin{equation}\label{numericSec:normFiniteSupp}
	\norm{x_i - s\circ x_j}^2 = \norm{x_i}^2-2\dprod{x_i}{s\circ x_j}+\norm{x_j}^2 = \norm{x_i}^2+\norm{x_j}^2.
\end{equation}
Thus, if we define the map~$\Theta$ above by
\begin{equation}\label{numerSec:shiftToAngMapDef}
	\Theta(s) = \pi+\frac{2\pi s}{2\overline{s}}, \quad s\in \overline{S},
\end{equation}
which maps~$\overline{S}=[-\overline{s},\overline{s}]$ to~$S^1$ by identifying the boundaries of the interval~$\overline{S}$, \eqref{numericSec:normFiniteSupp} implies that
\begin{equation}
	d_{ij}(2\pi)= \exp\{-\norm{x_i-\overline{s}\circ x_j}^2/\epsilon\} = \exp\{-\norm{x_i-(-\overline{s})\circ x_j}^2/\epsilon\}= d_{ij}(0),
\end{equation}
as desired. Furthermore, $\Theta$ is invertible, and due to the definition of~$\overline{s}$ above, we also have that~$S\subseteq \overline{S}$, which together imply that each shift within~$S$ is embedded by~$\Theta$ into a unique point in~$S^1$. Therefore, if we compute the~$S^1$-invariant embedding~\eqref{invDmaps:probInvEmbeding} associated with the~$S^1$-GL constructed as just described, any two projections~$x_i,x_j \in X$ such that~$x_i = s\circ x_j$ for some~$s\in S$ get embedded into the same point. Moreover, the associated~$S^1$-equivariant embedding~$\Phi^{(p)}_{\delta,t}$ in~\eqref{eqvDmaps:eqvEmbeddingFinite} induces a shift-equivariant embedding of~$X$, where if~$x_i = s\circ x_j$ for some~$s\in S$ then~$\Phi^{(p)}_{\delta,t}(j,\pi) = \overline{U(\Theta(s))}\cdot \Phi^{(p)}_{\delta,t}(i,\pi)$, where~$\pi=\Theta(0)$ is the group element in~$S^1$ that corresponds to the zero shift (see~\eqref{eqvDmaps:embedEquivar} in~Proposition~\ref{eqvDmaps:truncEmbedEqvProp}).
\begin{remark}
	Note, that we may have to pad the projections~$x_i\in X$ with zeros to implement the shifts~$s\circ x_i$ for all~$s\in \overline{S}$, whose magnitude may be larger than~$s_{\max}$ (the magnitude of the maximal shift in the data). In particular, we avoid cyclically shifting the data which produces vectors that cannot have been obtained by the Radon projection~\eqref{numerSec:radonProj}.
\end{remark}

Since the unshifted projections of a 2D image reside on a smooth and closed curve~$C$, the data set of projections~$\left\{s_\nu\circ x_i \; : \; x_i\in X, \nu\in[0,2\pi) \right\}$ resides on the compact two-dimensional manifold~$\man$ generated by letting the elements~$\varphi\in S^1$ act on each point~$x\in C$ by~$s_\nu\circ x$, as described above. 
We now describe in detail our method to reconstruct a~2D-image from its random shifted projections, which uses the $S^1$-invariant and~$S^1$-equivariant diffusion maps derived from the~$S^1$-GL constructed by viewing the projections in~$X$ as samples from~$\man$. The method consists of~$5$ steps. 

\begin{algorithm}
	\caption{$S^1$-GL based shift-invariant~$K$ nearest neighbors 
	}\label{numerSec:shiftInvNearestNeighbors}
	\begin{algorithmic}[1]
		\Statex{\textbf{Input:}}
		\begin{enumerate}
			\item Data set~$X$ of shifted projections. 
			\item Maximal IUR index~$\ell_{\max}$, an integer~$K>0$ of nearest neighbors to compute, and a diffusion time parameter~$t$. 
		\end{enumerate}		
		\State{Use~$X$ to construct the~$S^1$-GL as described in Theorem~\ref{secGGL:GGLdecomp}, with the action of~$S^1$ on~$X$ given by the shifts~$\Theta^{-1}(\varphi)$, $\varphi\in[0,2\pi)$ (see~\eqref{numerSec:shiftToAngMapDef})}. 
		\State{Compute the truncated~$S^1$-invariant diffusion maps~$\Psi_{\ell_{\max},t}^{(p)}(i)$ via~\eqref{invDmaps:probInvEmbeding} with~$\ell\leq \ell_{\max}$}, for all~$i\in [N]$. 
		\State{For each~$x_i\in X$ compute the set~$\mathcal{N}_i$ of the~$K$ projections~$x_j\in X$ with smallest distance~$\norm{\Psi_{\ell_{\max},t}^{(p)}(i)-\Psi_{\ell_{\max},t}^{(p)}(j)}$ }. 
	\end{algorithmic}
\end{algorithm}

In Step~1, we use the data set~$X$ to construct and decompose the~$S^1$-invariant graph Laplacian, where we perceive~$X$ as being sampled from a~$S^1$-invariant manifold, as described above. 
Practically, this means that we construct and factor the matrices~$K^{(\ell)}$ in~\eqref{secGGL:fourierMatNorm} up to a certain threshold~$\ell\leq \ell_{\max}$. 
We then compute for each~$i\in [N]$ the truncated~$S^1$-invariant embedding~$\Psi_{\ell_{\max},t}^{(p)}(i)$ given by~\eqref{invDmaps:probInvEmbeding} with~$\ell\leq \ell_{\max}$, and~$n\leq n_{\ell}$ for each~$\ell$, where~$n_{\ell}$ is the number of leading eigenvectors of~$K^{(\ell)}$ we use to construct the embedding. Below, we describe how we chose~$\ell_{\max}$ and~$n_{\ell}$ in our simulations. 

In Step~2, we first fix an integer~$K\ll N$ and a diffusion time parameter~$t\geq0$, and compute for each projection $x_i$ the set~$\mathcal{N}_i$ of its~$K$ nearest $S^1$-invariant neighbors defined as the $K$ projections~$x_j$ with the smallest distance~$\norm{\Psi_{\ell_{\max},t}^{(p)}(i)-\Psi_{\ell_{\max},t}^{(p)}(j)}$ in the embedding space. Then, for each~$i\in [N]$, the set~$\mathcal{N}_i$ contains the~$K$ neighbors of~$x_i$ up to shifts. 
Step~1 and Step~2 are outlined in~Algorithm~\ref{numerSec:shiftInvNearestNeighbors}. 
	
In Step~3, for each projection~$x_i$ and each of its neighboring projections~$x_j\in \mathcal{N}_i$ determined in Step~2, we employ the truncated equivariant embedding of~$\eqref{eqvDmaps:eqvEmbeddingFinite}$ to compute the relative shift $s_{ij}$ that best aligns~$x_j$ with~$x_i$ by solving 
\begin{equation}\label{numericSec:bruteInvDist}
	s_{ij} = \min_{s\in S}\norm{\Phi^{(p)}_{\delta,t}(i,\Theta^{-1}(s))-\Phi^{(p)}_{\delta,t}(j,\pi)},
\end{equation}
where~$\Theta^{-1}$ is the inverse map of~$\Theta$ defined in~\eqref{numerSec:shiftToAngMapDef}, 
and~$\Phi^{(p)}_{\delta,t}(i,\Theta^{-1}(s))$ is the embedding of~$s\circ x_i$. 
In particular, by~\eqref{numerSec:shiftToAngMapDef}, we have that~$\Phi^{(p)}_{\delta,t}(j,\Theta^{-1}(0)) = \Phi^{(p)}_{\delta,t}(j,\pi)$, which is the embedding of the point~$x_j$.
We discuss below how we chose the threshold~$\delta$ in our simulations. We point out that it is also possible to compute the relative shifts~$s_{ij}$ by directly aligning the projections. That is, by solving
\begin{equation}\label{numericSec:bruteInvDist2}
	s_{ij} = \min_{s}\norm{s\circ x_i-x_j}, \quad s\in S,
\end{equation}
where~$s\circ x_i$ is defined as
\begin{equation}
	s\circ x_i =\left(P_{\varphi_i}(t_1+s_i-s),\ldots, P_{\varphi_i}(t_1+s_i-s)\right).
\end{equation}
However, as we demonstrate in simulations below, while both methods of computing the~$s_{ij}$ produce comparable results, the method~\eqref{numericSec:bruteInvDist} can be more computationally efficient. 

	\begin{algorithm}
		\caption{Global alignment of projections
		}\label{numerSec:globalAlign}
		\begin{algorithmic}[1]
			\Statex{\textbf{Input:}}
			\begin{enumerate}
				\item Pairwise relative shifts $s_{ij}$ for all~$i,j\in[N]$ such that~$i\in \mathcal{N}_j$ or~$j\in \mathcal{N}_i$. 
				\item Shifted projections~$x_i = \left(P_{\theta_i}(t_1+s_i),\ldots,P_{\theta_i}(t_m+s_i)\right)$, for~$i=1,\ldots ,N$. 	
			\end{enumerate}		
			\State{Compute~$\theta_{ij} = \Theta(s_{ij})$ for all~$i,j\in[N]$ (see~\eqref{numerSec:shiftToAngMapDef})}. 
			\State{Construct the~$N\times N$ matrix~$H$ defined in~\eqref{numericSec:globalSyncMatDef}}.  
			\State{Compute the leading eigenvector~$h$ of~$H$.}
			\State{Set~$\tilde{s}_i$ = $\Theta^{-1}(\arg(h(i)))$ for all~$i\in[N]$ ($\arg(\cdot)$ is the complex argument function)}.
			\State{Compute $\tilde{x}_i = \left(P_{\theta_i}(t_1+s_i-\tilde{s}_i),\ldots,P_{\theta_i}(t_m+s_i-\tilde{s}_i)\right)$, for~$i=1,\ldots,N$.}
		\end{algorithmic}
	\end{algorithm}

In Step~4, we use the relative shifts~$s_{ij}$ to derive a set of shifts~$\tilde{s}_1,\ldots,\tilde{s}_N$ such that the consistency relations
\begin{align}\label{numerSec:shiftConsistency}
	s_{ij} \approx \tilde{s}_j-\tilde{s}_i, \quad i\in\mathcal{N}_j \text{ or } j\in \mathcal{N}_i, 
\end{align}
approximately hold for all pairs $(i,j)$ simultaneously. We use the method of angular synchronization proposed in~\cite{Singer2009AngularSB}, which solves an equivalent problem for relative rotation angles instead of relative shifts (recall that each relative shift~$s_{ij}$ can be mapped to a relative rotation angle via~\eqref{numerSec:shiftToAngMapDef}). We now briefly describe this method.  
Consider the set of all angles $\theta_{ij}=\Theta(s_{ij})$, where either~$i\in\mathcal{N}_j$ or $j\in\mathcal{N}_i$ (or both). We construct an~$N\times N$ matrix~$H$ defined by
	\begin{equation}\label{numericSec:globalSyncMatDef}
		H_{ij} = \begin{cases}
			e^{i\theta_{ij}} & i\in \mathcal{N}_j \text{ or } j\in \mathcal{N}_i,\\ 
			0 & \text{otherwise}.
		\end{cases} 
	\end{equation}
It is shown in~\cite{Singer2009AngularSB} that if the neighbors of each point in~$X$ are identified accurately, then for a sufficiently large~$N$, the complex arguments~$\arg(h(i))$ of the elements of~$h$, the top eigenvector of~$H$, are a set of angles~$\tilde{\theta}_1,\ldots,\tilde{\theta}_N$ for which the consistency relations
	\begin{align}\label{numerSec:angularConsistency}
		\theta_{ij} \approx \tilde{\theta}_j-\tilde{\theta}_i, \quad i\in \mathcal{N}_j \text{ or } j\in \mathcal{N}_i, 
	\end{align}
hold with high probability. Computing~$\overline{\theta}_i$ by the method just described we obtain the set of shifts~$\tilde{s}_i = \Theta^{-1}(\tilde{\theta}_i)$ for which, by~\eqref{numerSec:angularConsistency} and the linearity of~\eqref{numerSec:shiftToAngMapDef}, the consistency relations~\eqref{numerSec:shiftConsistency} approximately hold. We can now shift the projections~$x_i$ by the obtained shifts~$\tilde{s}_i$, so that~\eqref{numericSec:reAlignEq} holds. 
The procedure just described is outlined in Algorithm~\ref{numerSec:globalAlign}. 
	
At this point, all the projections are aligned with respect to each other (due to~\eqref{numerSec:shiftConsistency}). However, it may be that all the projections are shifted together by a single global shift~$s$ with respect to the center of each projection. Thus, Step~5 (the last step of our method) is to resolve this last degree of freedom, as follows. Let~$r$ be the dimension of the 1D projections in~$X$ (in pixels). First, we form the 2D array of size~$N\times r$ pixels obtained by placing the~$1\times r$ aligned projections in a stack of height~$N$. We then center this array by shifting it to its center of mass, and use the resulting stack of projections as our data set. At this point we can input the resulting aligned and centered projections to Algorithm~\ref{numerSec:ordAlg} to obtain the order of the projections, and reconstruct the image. 

\begin{figure}
	\centering
	\subfloat[Clean]  	
	{
		\label{fig:sheppLoganClean1}
		\includegraphics[width=0.21\textwidth]{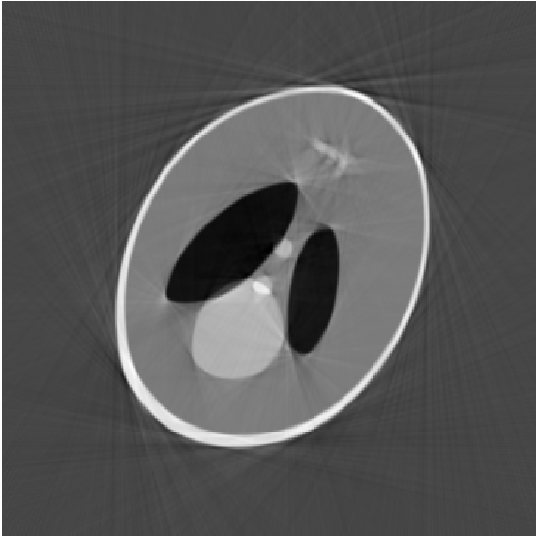}		
	}
	\subfloat[30dB]  	
	{
		\includegraphics[width=0.21\textwidth]{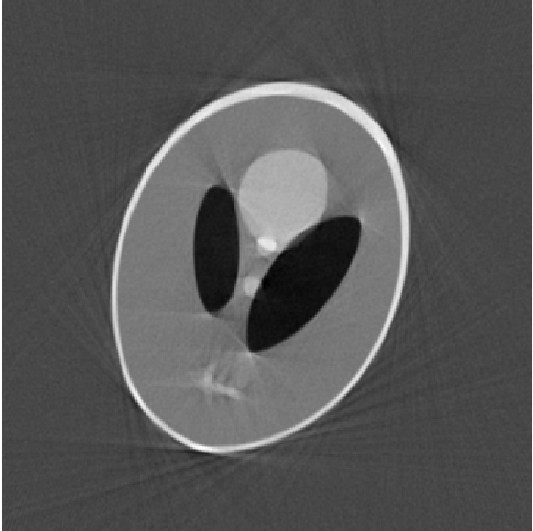}
		\label{fig:sheppLogan30dB}
	}
	\subfloat[10dB]  	
	{
		\includegraphics[width=0.21\textwidth]{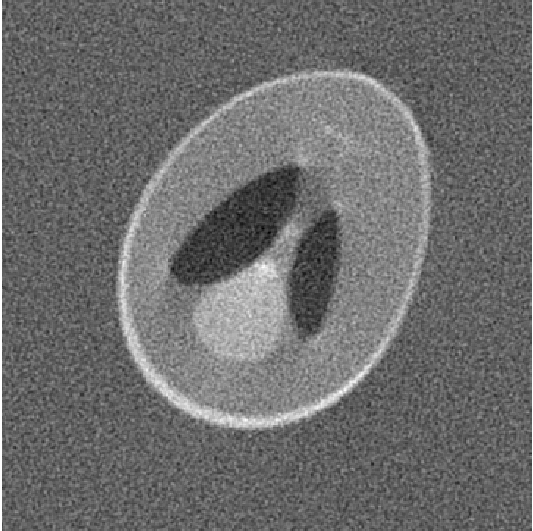}
		\label{fig:sheppLogan10dB}
	}
	
	\subfloat[2dB]    
	{ 
		\includegraphics[width=0.21\textwidth]{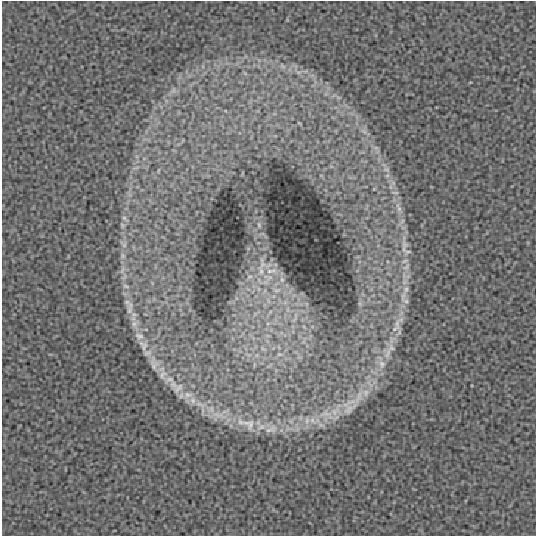}
		\label{fig:sheppLogan0dB}
	}
	\subfloat[-3dB]    
	{ 
		\includegraphics[width=0.21\textwidth]{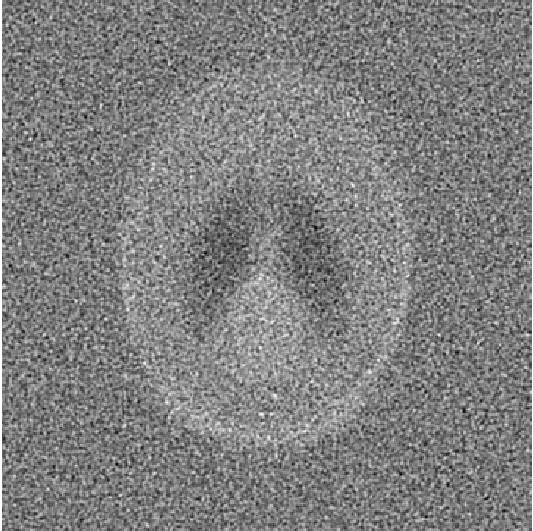}
		\label{fig:sheppLoganm1dB}
	}
	\subfloat[-4dB]    
	{ 
		\includegraphics[width=0.21\textwidth]{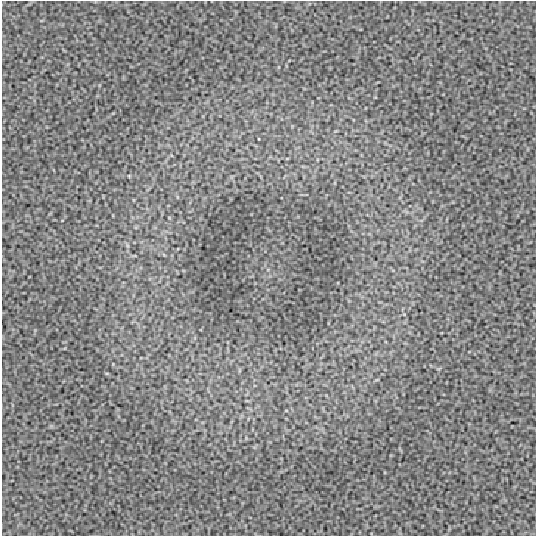}
		\label{fig:sheppLoganm2dB}
	}
	
	\caption{Shepp-Logan phantom reconstructed from 256 shifted random projections at various levels of noise, after centering them by using our method.} 
	\label{fig:sheppLogan}
\end{figure}  
	
\begin{figure}
	\centering
	\subfloat[Clean]  	
	{
		\label{fig:ordClean}
		\includegraphics[width=0.3\textwidth]{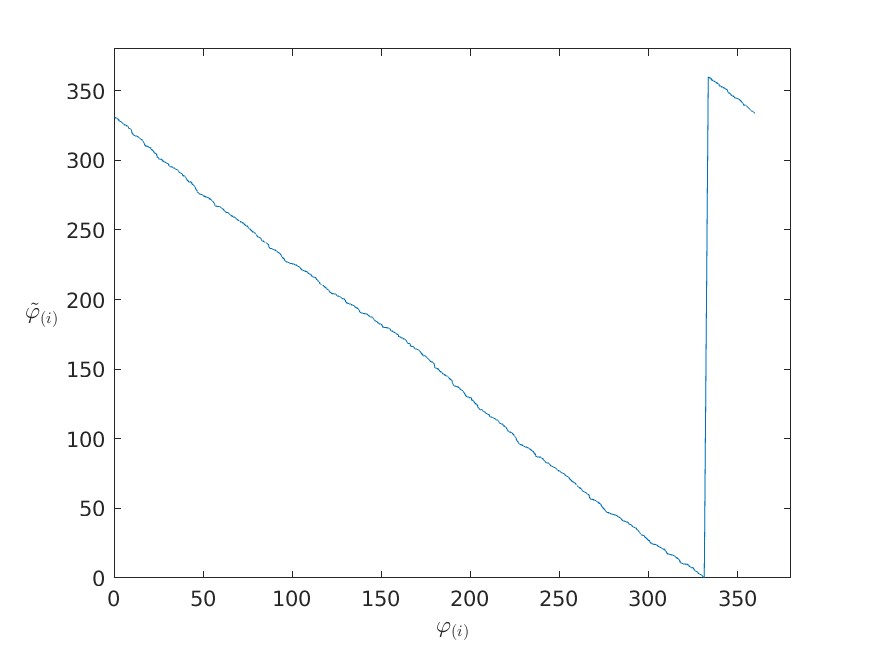}
	}
	\subfloat[30dB]  	
	{
		\includegraphics[width=0.3\textwidth]{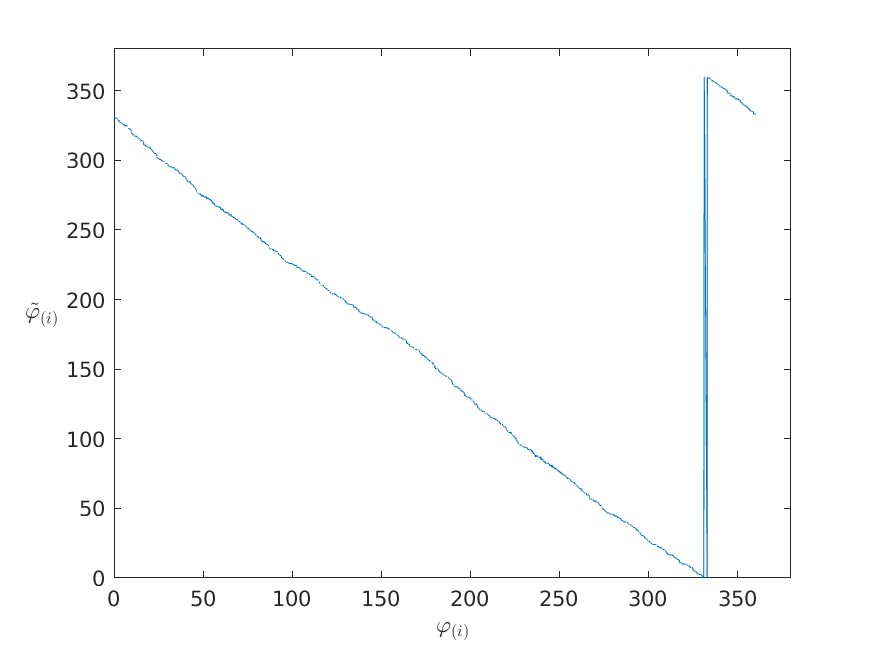}
	}\label{fig:ord30dB}
	\subfloat[10dB]  	
	{
		\includegraphics[width=0.3\textwidth]{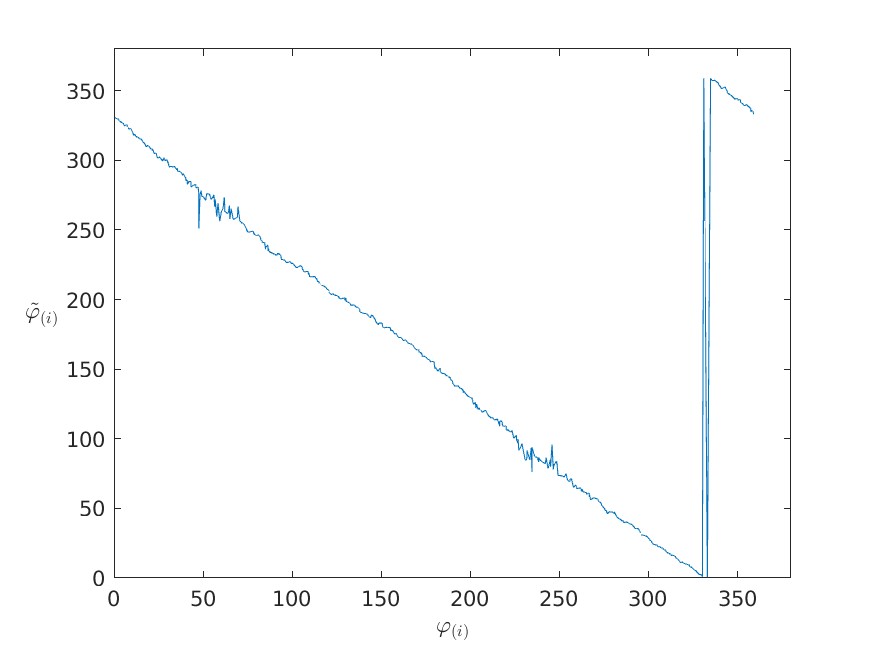}
	}\label{fig:ord10dB}
	
	\subfloat[2dB]  	
	{
		\includegraphics[width=0.3\textwidth]{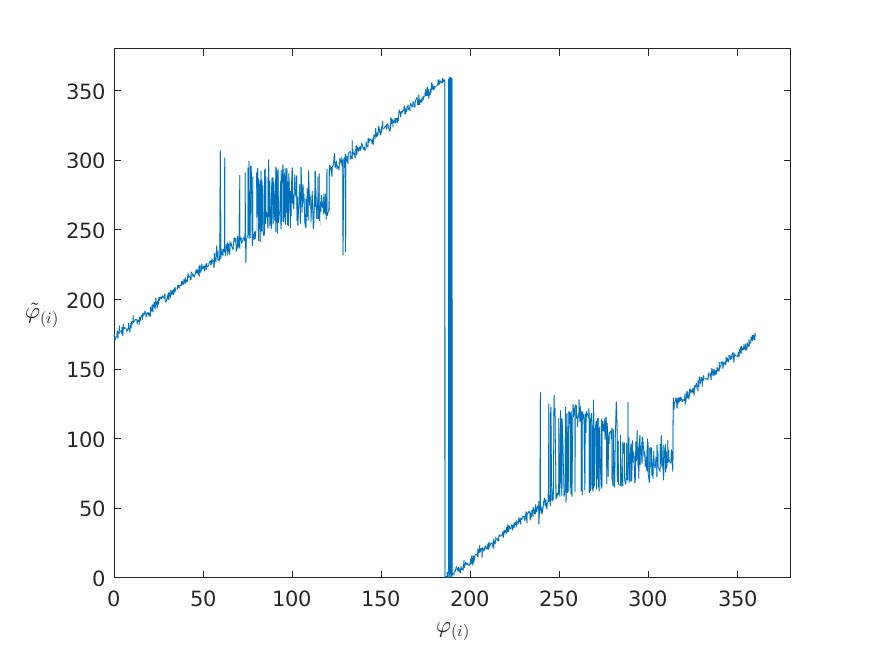}
	}\label{fig:ord0dB}
	\subfloat[-3dB]    
	{ 
		\includegraphics[width=0.3\textwidth]{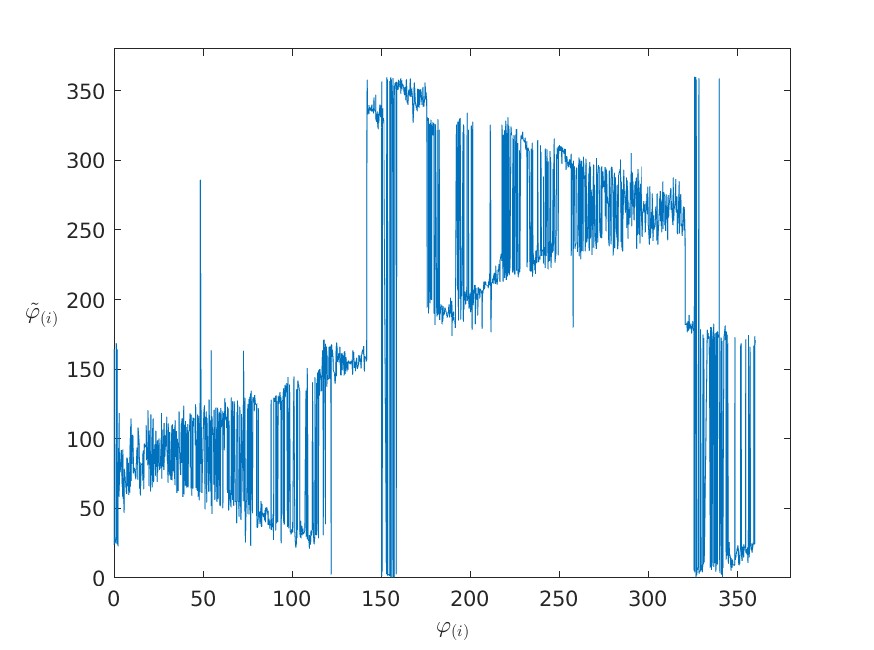}
	}\label{fig:ordm1Db}
	\subfloat[-4dB]    
	{ 
		\includegraphics[width=0.3\textwidth]{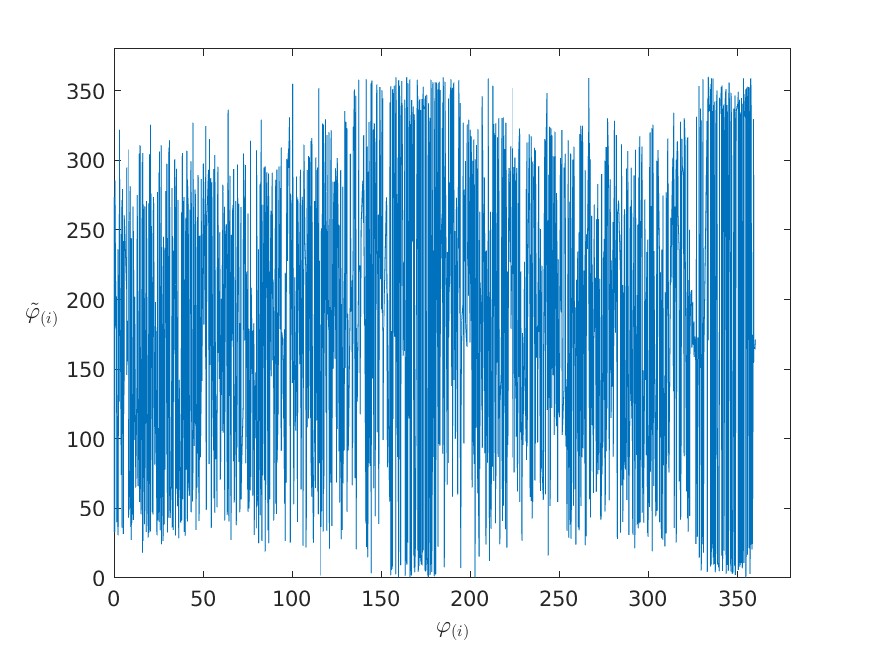}
	}\label{fig:ordm2Db}
	
	\caption{Angles~$\tilde{\varphi}_{(i)}$ obtained by ordering the true projection angles~$\varphi_i$ according to the order of the sorted angles~$\tilde{\varphi}_i$ of~\eqref{numerSec:ordAng}, plotted against the angles~$\varphi_{(i)}$ obtained by sorting $\varphi_i$. The angles~$\tilde{\varphi_i}$ were estimated by passing to Algorithm~\ref{numerSec:ordAlg} the class-averages of the~$32$ nearest neighbors of each~$x_i\in X$, determined by the~$S^1$-invariant diffusion maps.} 
	\label{fig:ordGraph}
\end{figure} 	
	
To demonstrate the method just described, we applied it to the reconstruction of the Shepp-Logan Phantom from its projections generated at random angles and random shifts. The result is depicted in~Figure~\ref{fig:sheppLoganClean1}. 
This figure was generated as follows. First, we generated~$N=1024$ uniformly distributed angles from~$[0,2\pi)$, denoted by~$\varphi_1,\ldots, \varphi_N$. For each~$\theta_i$, we evaluated the analytic expression of the Radon transform of the Shepp-Logan phantom at~$m=512$ equally spaced point between -1.5 and 1.5. Thus, each projection~$x_i$ is a 1D vector of 512 pixels. The maximal shift~$s_{\max}=102$ was chosen to be approximately~40$\%$ of the support of the signal in each projection, which is around~256 pixels on average. We then sampled~$N$ integer shifts~$s_i$ from a uniform distribution over~$S=\left\{-102,\ldots,102\right\}$, and shifted each vector~$x_i$ by~$s_i$ pixels. Next, we applied Steps~$1-5$ described above to the projections, as follows. 
The bandwidth~$\epsilon$ was chosen by using the approximation rule proposed in~\cite{glRandTomography}
	\begin{equation}\label{numericSec:optEpsRule}
		\epsilon_{\text{opt}} = \argmax_\epsilon \frac{\partial \log \text{Tr} \left\{D(\epsilon)\right\}}{\partial \log \epsilon}, 
	\end{equation}
where~$\text{Tr} \left\{D(\epsilon)\right\}$ is the trace of the diagonal matrix~$D(\epsilon)$, which is the matrix~$D$ in~\eqref{GinvDef:DiiDef}, whose elements were computed by using the bandwidth parameter~$\epsilon$. 
Then, we applied Algorithm~\ref{numerSec:shiftInvNearestNeighbors} to the data. Specifically, we computed the~$S^1$-invariant diffusion maps with diffusion time~$t=0$, where for each IUR index~$\ell\in \mathcal{I}_{S^1}$ (see Appendix~\ref{secHarmAnalysis}) we used the top~$n_{\ell}$ eigenvectors of the matrix~$K^{(\ell)}$ in~\eqref{secGGL:fourierMatNorm} that satisfy~$\lambda_{n,\ell}>0.1$. The maximal IUR index~$\ell_{\max}\in \mathcal{I}_{S^1}$ was chosen as follows. First, we set~$\ell_{\max}=1$ and computed the~$S^1$-invariant embedding~\eqref{invDmaps:probInvEmbeding}, where we used the~$n_{1}$ top eigenvectors of~$K^{(1)}$, chosen by using the condition described above. 
Then, for each~$x_i\in X$ we computed the~$K=32$ nearest neighbors~$x_j\in \mathcal{N}_i$ with the smallest Euclidean distance between their embedding and that of~$x_i$. We then computed the median~$\text{med}_1$ of these distances over all~$x_i$.
Next, we set~$\ell_{\max}=2$, and repeated the previous computation, constructing the diffusion maps by taking all the eigenvectors of the matrices~$K^{(1)}$ and~$K^{(2)}$, with the number of eigenvectors of each matrix~$K^{(\ell)}$  chosen using the same condition as for the case~$\ell_{\max}=1$.  Then, we used the resulting embedding to compute the~$32$ nearest neighbors of each~$x_i\in X$ the same way we did for~$\ell_{\max}=1$, and computed a new respective median nearest neighbor distance~~$\text{med}_2$. We then repeated this process of increasing~$\ell_{\max}$ until the relative change in the median~$(\text{med}_{i+1}-\text{med}_{i})/\text{med}_{i}$ was~$>-0.01$. 
Then, we computed the~$S^1$-equivariant embedding~\eqref{eqvDmaps:eqvEmbeddingFinite}, using the same eigenvectors that were employed for the construction of the~$S^1$-invariant embedding above. The resulting equivariant embedding~$\Phi^{(p)}_{\delta,t}$ has dimension~20, and corresponds to the threshold~$\delta = 0.1$. Then, for each~$x_i\in X$, we computed the relative shifts~$s_{ij}$ that best align the points~$x_j\in \mathcal{N}$ with~$x_i$, by solving~\eqref{numericSec:bruteInvDist}. 
Next, we applied Algorithm~\ref{numerSec:globalAlign}, using~$|\mathcal{N}_i|=32$ nearest neighbors computed as described above, to construct~$H$ in~\eqref{numericSec:globalSyncMatDef}, and aligned the projections by using the resulting shifts as on the l.h.s of~\eqref{numericSec:reAlignEq}. The aligned projections were then centered as described in Step~5 above. We then applied Algorithm~\ref{numerSec:ordAlg} to the aligned and centered data set to order the projections according to the angles which generated them. 
Finally, we reconstructed the image shown in Figure~\ref{fig:sheppLoganClean1} by using a subset of~$256$ of the ordered projections with equally spaced indexes~$1,5,\ldots,1019$ (as was done in~\cite{glRandTomography}). In Figure~\ref{fig:ordClean}, we demonstrate that our method manages to recovering the ordering of the true projection angles~$\varphi$, by graphing the angles
	\begin{equation}\label{numerSec:orderedTrueAng}
		\tilde{\varphi}_{(1)}, \tilde{\varphi}_{(2)},\ldots, \tilde{\varphi}_{(N)},
	\end{equation}
	obtained by ordering~$\varphi_i$ according to the order of the sorted angles~$\tilde{\varphi}_i$ of~\eqref{numerSec:ordAng} (obtained by Algorithm~\ref{numerSec:ordAlg}), against
	\begin{equation}\label{numerSec:sortedTrueAng}
		\varphi_{(1)} \leq \varphi_{(2)}\leq \ldots\leq \varphi_{(N)},
	\end{equation}
	the angles~$\varphi_i$ sorted in ascending order.  

\begin{figure}
	\centering
	\subfloat[Image reconstruction] 	
	{
		\label{fig:imRecClean}
		\includegraphics[width=0.35\textwidth]{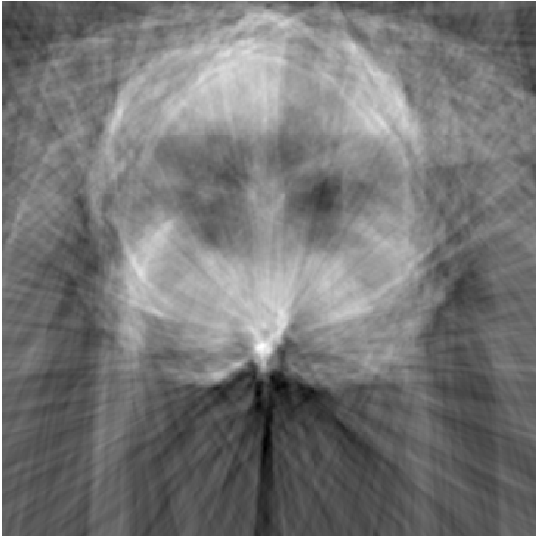}
	}
	\subfloat[Estimated ordered angles]  	
	{
		\label{fig:ordCleanDM}
		\includegraphics[width=0.47\textwidth]{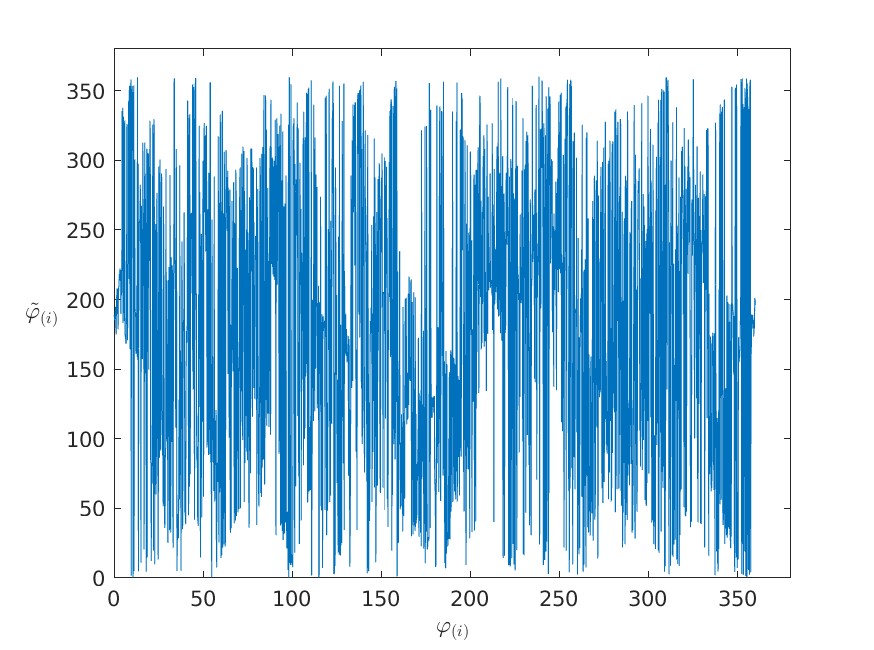}
	}
	
	\caption{Reconstruction by directly applying Algorithm~\ref{numerSec:ordAlg} to the shifted projections.} 
	\label{fig:DMperform}
\end{figure}

We also applied Algorithm~\ref{numerSec:ordAlg} directly to the shifted projections in~$X$. 
Figure~\ref{fig:imRecClean} shows the reconstructed phantom, demonstrating that
Algorithm~\ref{numerSec:ordAlg} fails when the projections are shifted. As we explained above, this is attributed to the fact that shifted projections are scattered on a two-dimensional manifold rather than a curve. Thus, the order of the projection angles cannot be recovered from the two non-trivial leading eigenvectors~$\phi_2,\phi_3$ of~$\tilde{L}$. This is demonstrated in Figure~\ref{fig:ordCleanDM}, where we graph the angles~$\tilde{\varphi}_{(i)}$ in~\eqref{numerSec:orderedTrueAng} against~$\varphi_{(i)}$ in~\eqref{numerSec:sortedTrueAng}.

A common method to deal with shifts in the data, is to shift it so that its center of mass (CM) is located in the center of the projection vector~\cite{centerCryoEM}. Figure~\ref{fig:sheppLoganCleanDMCM} shows the high quality reconstruction obtained by applying Algorithm~\ref{numerSec:ordAlg} after centering according to their CMs. In Figure~\ref{fig:ordCleanDMCM}, we graph the angles~$\tilde{\varphi}_{(i)}$ in~\eqref{numerSec:orderedTrueAng} against~$\varphi_{(i)}$ in~\eqref{numerSec:sortedTrueAng}, showing that after centering the projections Algorithm~\ref{numerSec:ordAlg} succeeds in retrieving the ordering of the projection angles (the jump discontinuity observed in the graph is attributed to the fact that the order of the angles can only be retrieved up to a cyclic permutation, as explained in Remark~\ref{numericSec:degreeOfFreedom}). 

We now demonstrate the performance of our proposed method with noisy projections, where we measure the amount of noise in the projections by the signal-to-noise ratio (SNR) measured in decibels,  defined here as 
\begin{eqnarray}
	\text{SNR}_{\text{db}} = 10 \log_{10}\left(\frac{\text{Var}(X)}{\sigma^2}\right), 
\end{eqnarray}
where~$\sigma^2$ is the variance of the white noise, and~$\text{Var}(X)$ is the sample variance of the data set. 
It was observed in~\cite{glRandTomography} that Algorithm~\ref{numerSec:ordAlg} performs well for $\text{SNR}_{\text{db}}\geq 10.6$ and undergoes an abrupt phase transition for $\text{SNR}_{\text{db}}\leq 10.5$, performing poorly and failing to retrieve the order of the projections. It was reasoned that this threshold effect is caused by the noise thickening the curve~$C$ of the Radon projections, making the graph Laplacian treat the data as a surface instead of a curve. 
However, by denoising the projections before applying Algorithm~\ref{numerSec:ordAlg} using a wavelet based low-pass filter, the~$10.5$dB threshold in~\cite{glRandTomography} was pushed down to~$2$dB. The latter threshold was pushed further to~$-5$dB in~\cite{singerRT}, that proposed applying several advanced preliminary denoising methods to the projections, before using Algorithm~\ref{numerSec:ordAlg}. 

To have a point of reference, we repeated the procedure described in~\cite{glRandTomography} with noisy shifted projections after centering them using their CMs, as described above. 
The resulting reconstructions for several~SNR levels are shown in~Figures~\ref{fig:sheppLogan10dBDMCM}-\ref{fig:sheppLogan3dBDMCM}. We see that after centering Algorithm~\ref{numerSec:ordAlg} performs reasonably well for $\text{SNR}_{\text{dB}}\geq 4$, although with some features of the reconstructed phantom visibly distorted already at~$\text{SNR}_{\text{dB}}=10$. 

To deal with noise in our simulations, we employed instead a method known as class-averaging~\cite{classAverage} (that provides superior results), as follows. 
After we aligned the projections by Algorithm~\ref{numerSec:globalAlign}, 
we generated a new data set~$X_{\text{CA}}$ where each projection~$x_i$ was replaced with the average of its~$32$ aligned nearest neighbors in~$\mathcal{N}_i$ (including~$x_i$ itself). The idea is that after alignment the majority of the neighbors $x_j\in \mathcal{N}_i$ are approximately equal to~$x_i$, and thus, the random white noise having a zero mean gets averaged out by averaging all the neighbors, producing a denoised version of~$x_i$. We then used the data set~$X_{\text{CA}}$ as an input to~Algorithm~\ref{numerSec:ordAlg} to estimate the angles~$\tilde{\varphi}_1,\ldots,\tilde{\varphi}_N$, and assigned to each~$x_i\in X$ the angle~$\tilde{\varphi}_i$. We then ordered the projections~$x_i\in X$ according to their assigned angles~$\tilde{\varphi}_i$. Finally, we reconstructed the image from the sorted aligned projections~$x_i\in X$, assuming that their projection angles are equally spaced in~$[0,2\pi)$ (see~\cite{glRandTomography} for a detailed justification).

\begin{figure}
	\centering
	\subfloat[Clean]    
	{ 
		\includegraphics[width=0.2\textwidth]{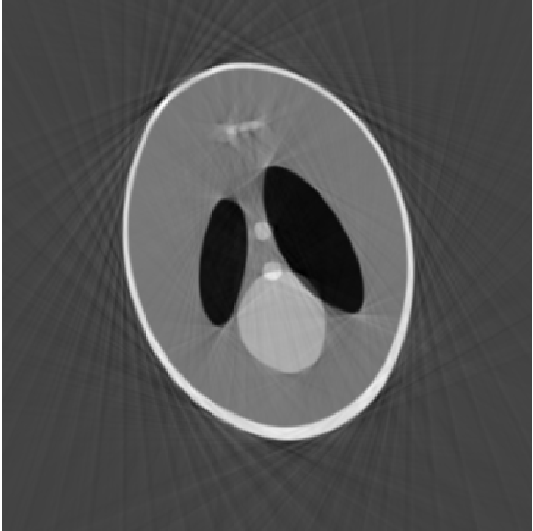}
		\label{fig:sheppLoganCleanDMCM}
	}
	\hspace{\fill}
	\subfloat[10dB]  	
	{
		\includegraphics[width=0.2\textwidth]{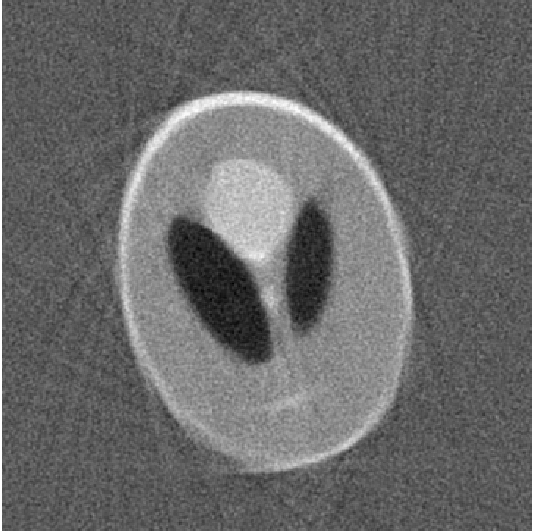}
		\label{fig:sheppLogan10dBDMCM}
	}	
	\hspace{\fill}
	\subfloat[4dB]    
	{ 
		\includegraphics[width=0.2\textwidth]{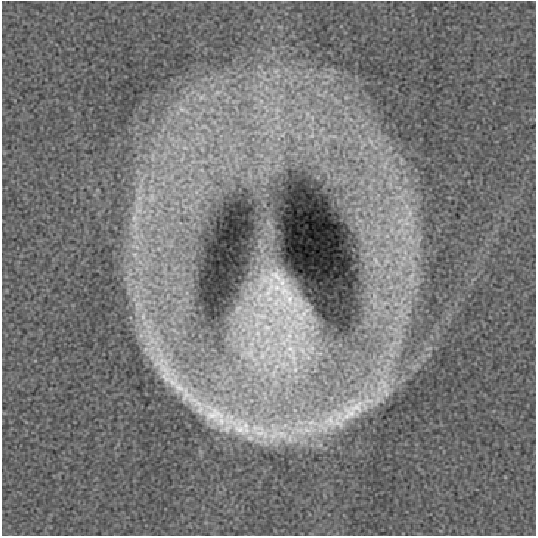}
		\label{fig:sheppLogan4dBDMCM}
	}
	\hspace{\fill}
	\subfloat[3dB]    
	{ 
		\includegraphics[width=0.2\textwidth]{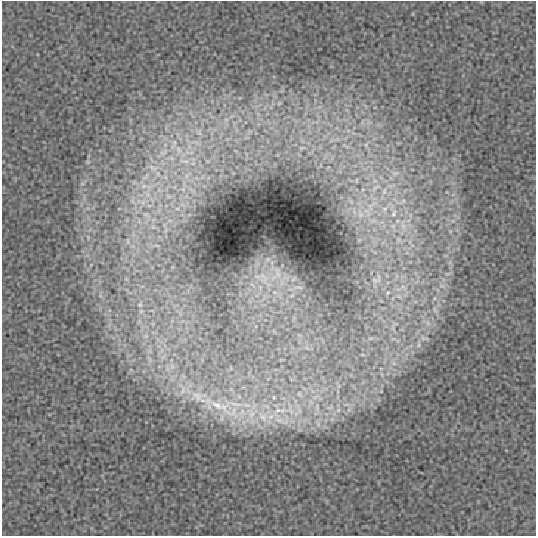}
		\label{fig:sheppLogan3dBDMCM}
	}
	\hspace{\fill}
	\subfloat[Clean]    
	{ 
		\includegraphics[width=0.21\textwidth]{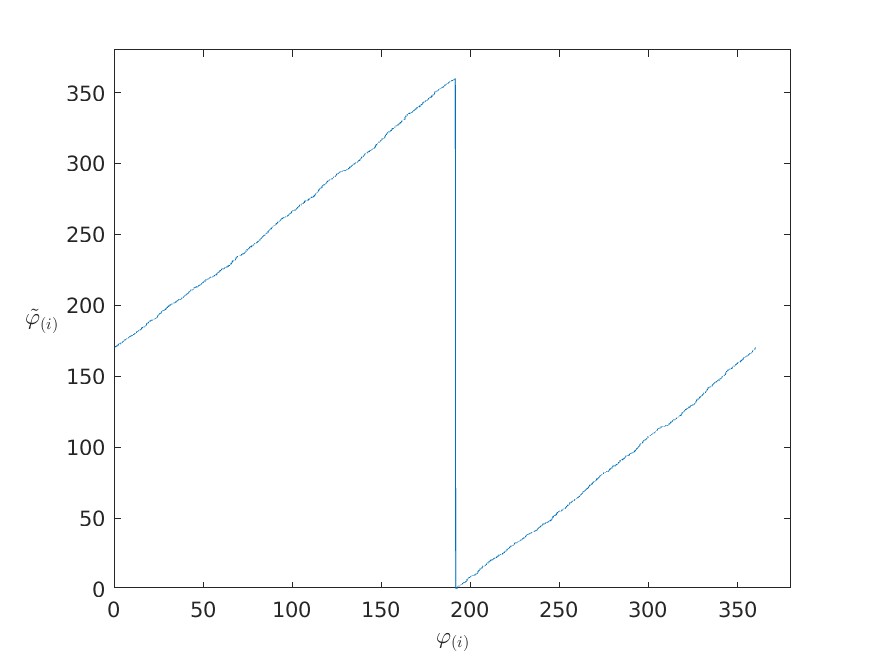}
		\label{fig:ordCleanDMCM}
	}
	\hspace{\fill}
	\subfloat[10dB]    
	{ 
		\includegraphics[width=0.21\textwidth]{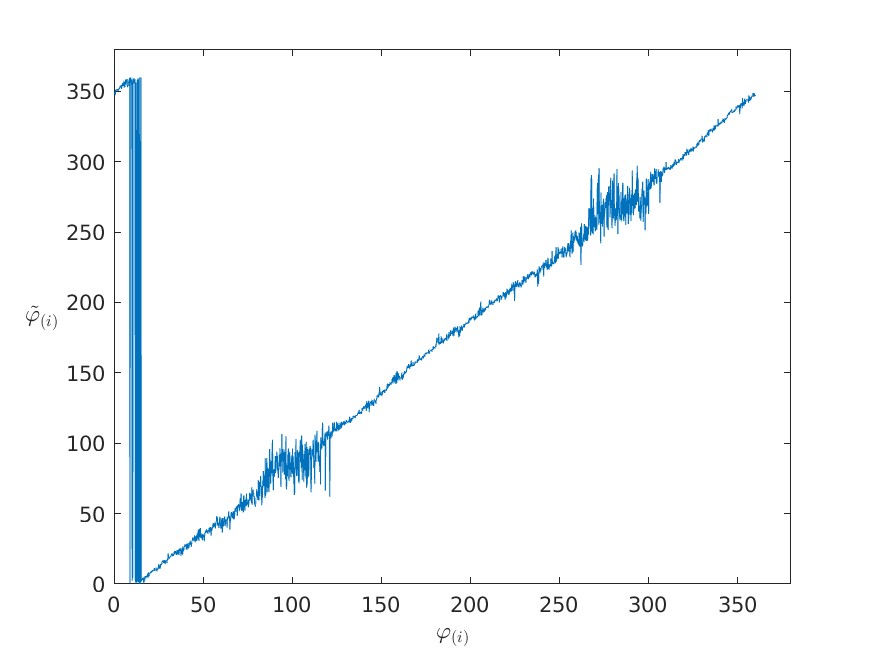}
		\label{fig:ord10dBDMCM}
	}
	\hspace{\fill}
	\subfloat[4dB]    
	{ 
		\includegraphics[width=0.21\textwidth]{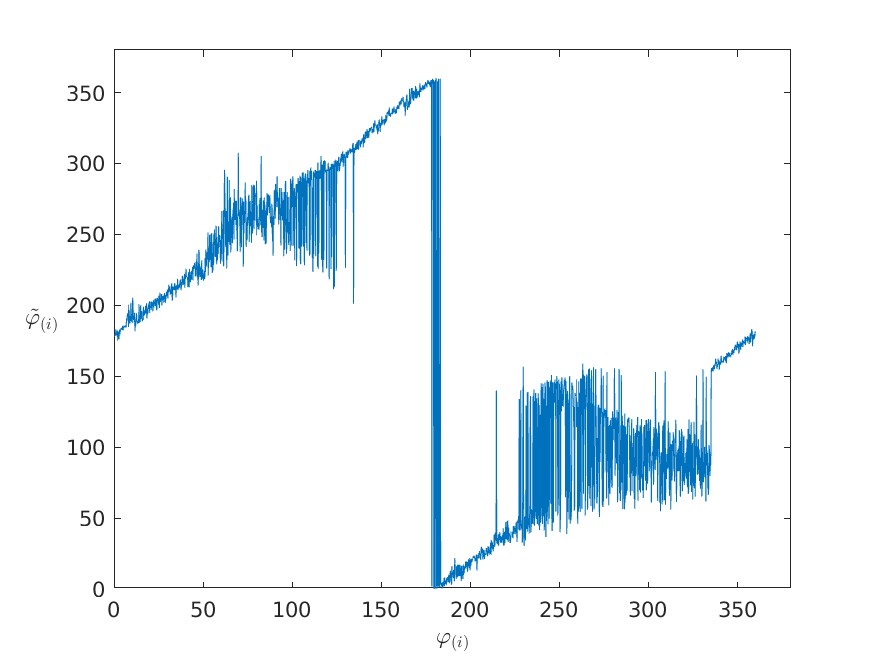}
		\label{fig:ord4dBDMCM}
	}
	\hspace{\fill}
	\subfloat[3dB]    
	{ 
		\includegraphics[width=0.21\textwidth]{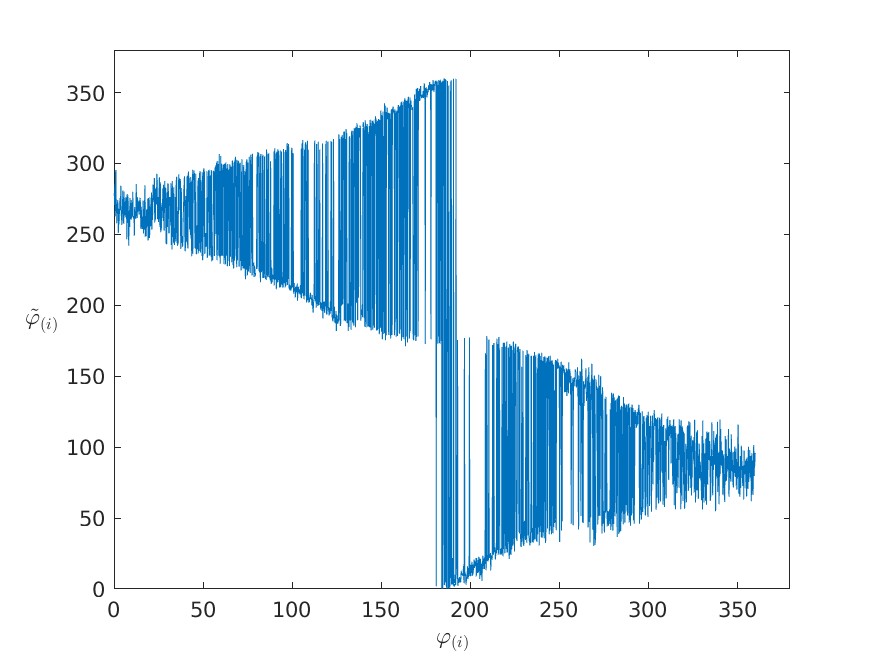}
		\label{fig:ord3dBDMCM}
	}	
	\caption{Shepp-Logan phantom reconstructed from 256 shifted random projections at various levels of noise, ordered by using Algorithm~\ref{numerSec:ordAlg} after centering each projection based on its center of mass.} 
	\label{fig:sheppLoganDMCM}
\end{figure}

\begin{figure}
	\centering
	\subfloat[2dB]  	
	{
		\includegraphics[width=0.45\textwidth]{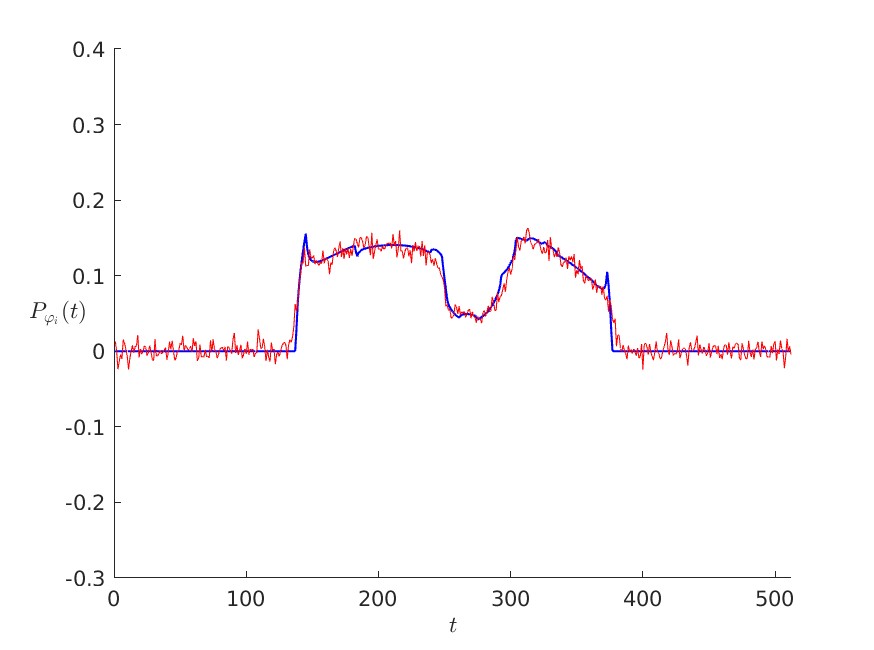}
	}\label{fig:sheppLogan10dB}
	\subfloat[2dB] 
	{
		\includegraphics[width=0.45\textwidth]{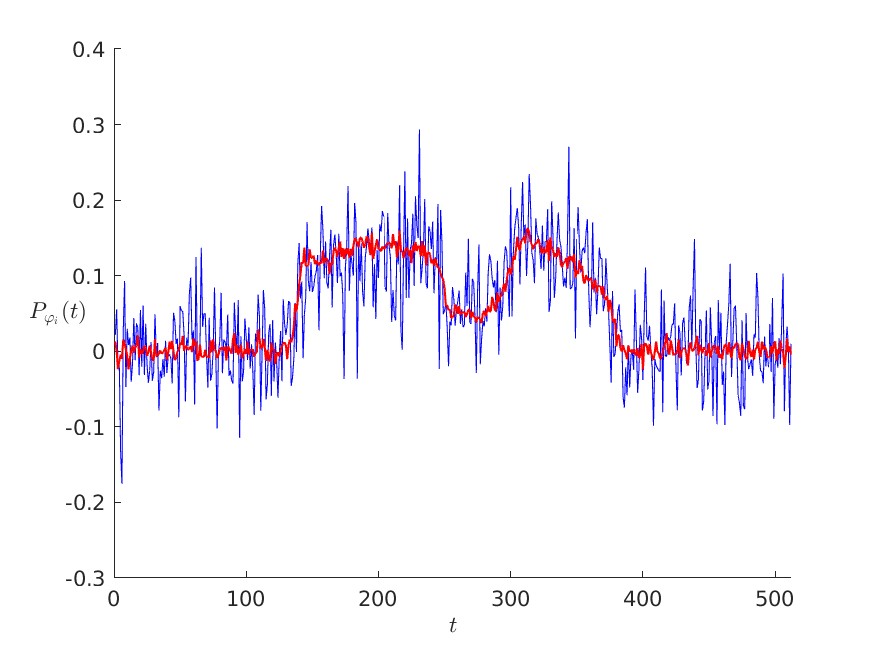}
	}\label{fig:proj10dB}
	\subfloat[-3dB]  	
	{
		\includegraphics[width=0.45\textwidth]{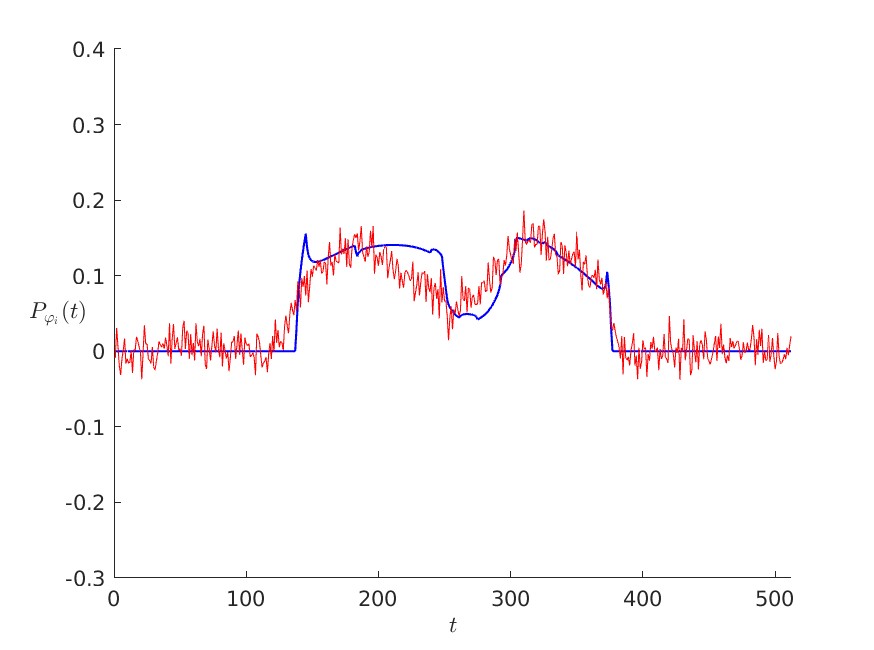}
	}\label{fig:sheppLogan10dB}
	\subfloat[-3dB] 
	{
		\includegraphics[width=0.45\textwidth]{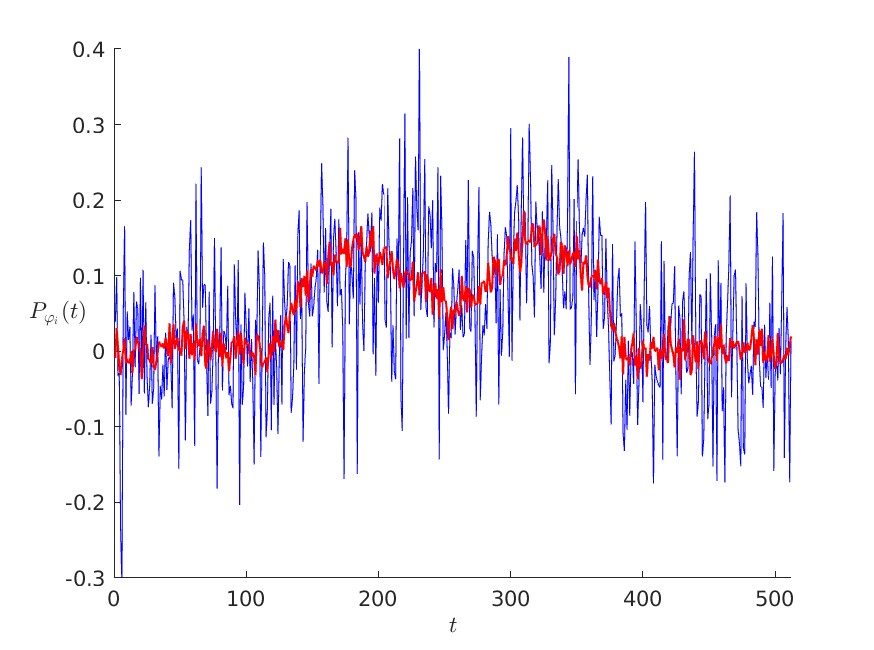}
	}\label{fig:proj10dB}
	
	\subfloat[-4dB]  	
	{
		\includegraphics[width=0.45\textwidth]{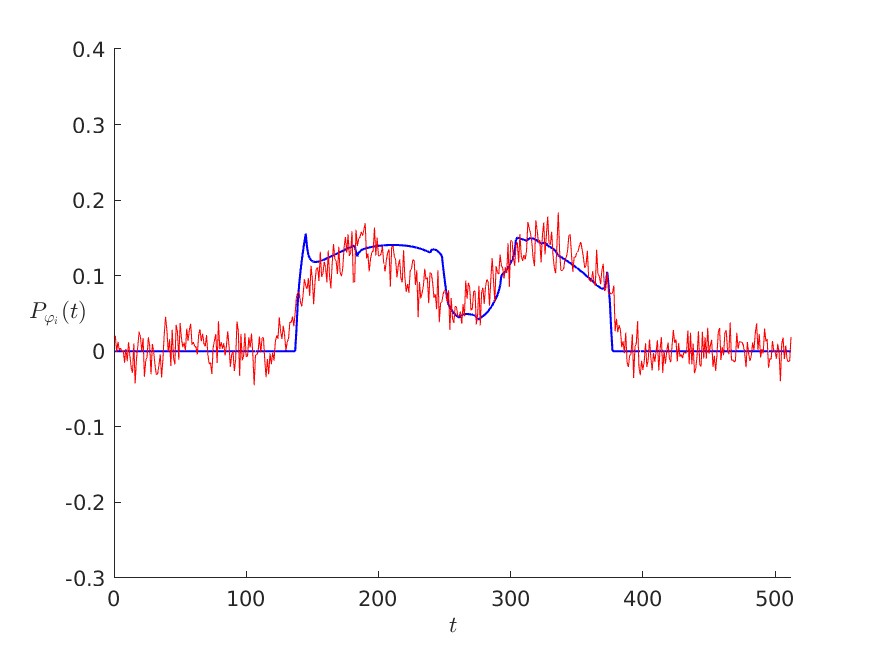}
	}\label{fig:sheppLogan10dB}
	\subfloat[-4dB] 
	{
		\includegraphics[width=0.45\textwidth]{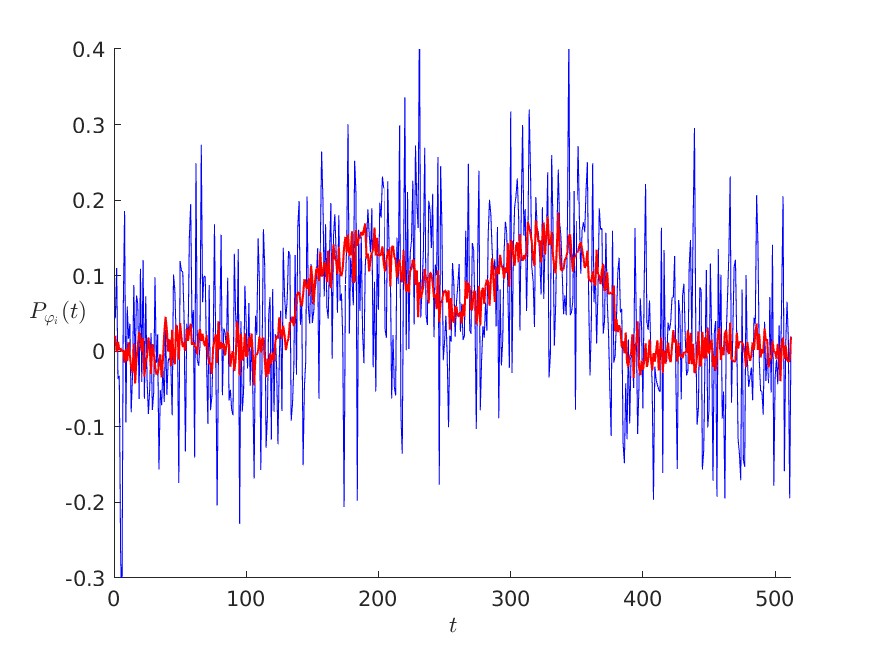}
	}\label{fig:proj10dB}

	\caption{Right column: a randomly chosen projection~$P_{\varphi_i}$ with additive white noise at various noise levels (blue), and its denoising by class-averaging (32 nearest neighbors) at each noise level (red). Left column: denoised projections from the right column (red), superimposed on the clean projection (blue).} 
	\label{fig:classAveraging}
\end{figure} 

To demonstrate our method with denoising by class-averaging, we added various amounts of additive Gaussian white noise to the data set of~$N=1024$ projections generated in the previous section. The reconstructed images corresponding to SNRs~$30$dB, $10$dB, $2$dB, $-3$dB, and~$-4$dB are shown in Figure~\ref{fig:sheppLogan}. In particular, we see that even in the presence of shifts, our method manages to go well beyond the~2dB threshold reported in~\cite{glRandTomography}, obtaining good reconstructions up to  $\text{SNR}_{\text{dB}}=-3$.
At $\text{SNR}_{\text{dB}} =-4$ the performance deteriorates, $2\text{dB}$ lower than the performance reported in~\cite{singerRT} without shifts. After a proper adaptation to account for shifts, the denoising methods used in~\cite{singerRT} can also be combined into to our algorithm as a preproccesing step, which we expect to significantly improve our results as well. We leave that for future work.

The effect of class-averaging on the performance of the method is illustrated in~Figure~\ref{fig:classAveraging}. The right column depicts a randomly chosen projection~$x_i=P_{\varphi_i}(t_1,\ldots,t_N)\in X$ at the various levels of noise (blue line), and its denoising (red line). The denoised projection was obtained by averaging~$x$ with its~$32$ nearest neighbors, determined by Algorithm~\ref{numerSec:shiftInvNearestNeighbors}. The left column shows the denoised projection in each row superimposed on the clean projection. We see that the denoised projection gives a good approximation to the clean projection even at~$\text{SNR}_{\text{db}}=-4$, where the method breaks down. 
This abrupt break down is attributed to the threshold effect observed in~\cite{glRandTomography} (discussed above), when applying Algorithm~\ref{numerSec:ordAlg} to the (aligned) projections. 
We illustrate this phenomenon in Figure~\ref{fig:ordGraph}, where for each SNR value 
we graph the angles~$\tilde{\varphi}_{(i)}$ of~\eqref{numerSec:orderedTrueAng} against the angles~$\varphi_{(i)}$ of~\eqref{numerSec:sortedTrueAng}.
Note that some of the graphs admit a jump discontinuity, and that the slope of each graph is either~$+1$ or~$-1$. This is just a manifestation of the degrees of freedom inherent to the problem, as we described in Remark~\ref{numericSec:degreeOfFreedom}. 
We see that the ordering of the projections deteriorates as the noise level grows, breaking down at~$\text{SNR}_{\text{db}} = -4$. 

Lastly, we wish to demonstrate the advantages of using the equivariant embedding~\eqref{eqvDmaps:eqvEmbeddingFinite} for pairwise alignment of projections, over directly aligning pairs of projections. For this, we repeated the simulations described above with SNR values of~$10$dB, $2$dB, $-3$dB, and $-4$dB, but with the pairwise alignment method~\eqref{numericSec:bruteInvDist} (that employs~\eqref{eqvDmaps:eqvEmbeddingFinite}) replaced by~\eqref{numericSec:bruteInvDist2} that directly aligns the projections. 
Figure~\ref{fig:sheppLoganBruteAlign} depicts the resulting reconstructed Shepp-Logan phantoms, showing that at moderate SNR levels the reconstructions are of similar quality to those in the previous simulations, but at lower SNRs alignment using the equivariant embedding performs better (compare with~Figure~\ref{fig:sheppLogan}). Furthermore, using the alignment method~\eqref{numericSec:bruteInvDist2} in Step 3 involves directly shifting the projections, where the complexity of a shift is of the order of the projections' dimension, which is~$512$. On the other hand, alignment using~\eqref{numericSec:bruteInvDist} requires applying the action of~$S^1$ to the embedded projections~$\Phi^{(p)}_{\delta,t}(i,\pi)$, which in our simulation have dimension~20. Using that the IURs of~$S^1$ are the Fourier modes~$e^{im\varphi}$ together with Proposition~\ref{eqvDmaps:truncEmbedEqvProp}, and in particular~\eqref{eqvDmaps:embedEquivar}, implies that the action of an element~$e^{i\varphi}\in S^1$ on the embedding~$\Phi^{(p)}_{\delta,t}(i,\pi)$ of each projection~$x_i$ can be computed by multiplying the embedding by a~$20\times 20$ diagonal matrix~$U$ with Fourier modes on the diagonal. The complexity of this operation is of the order of the dimension of~$\Phi^{(p)}_{\delta,t}(i,\pi)$ (since~$U$ is diagonal), which is a huge improvement over directly shifting the projections.

\begin{figure}
	\centering
	\subfloat[10dB]  	
	{
		\includegraphics[width=0.2\textwidth]{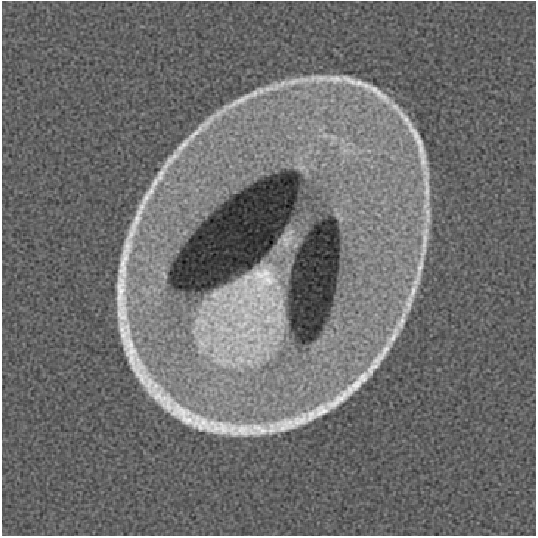}
		\label{fig:sheppLogan10dBBruteAlign}
	}	
	\subfloat[2dB]    
	{ 
		\includegraphics[width=0.2\textwidth]{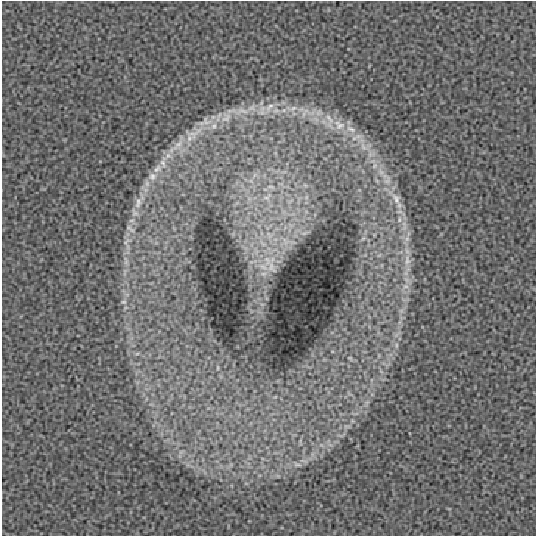}
		\label{fig:sheppLogan0dBBruteAlign}
	}
	\subfloat[-3dB]    
	{ 
		\includegraphics[width=0.2\textwidth]{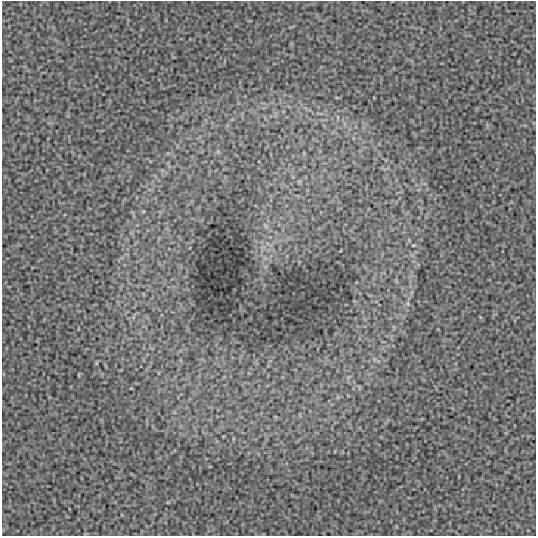}
		\label{fig:sheppLoganm1dBBruteAlign}
	}
	\subfloat[-4dB]    
	{ 
		\includegraphics[width=0.2\textwidth]{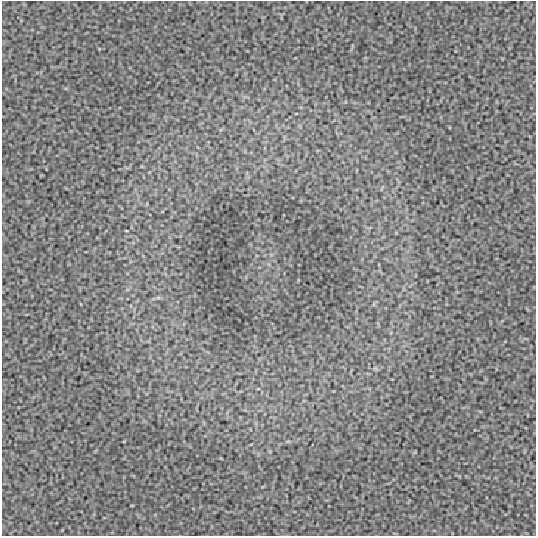}
		\label{fig:sheppLoganm2dBBruteAlign}
	}
	
	\caption{Shepp-Logan phantom reconstructed from 256 random shifted projections at various levels of noise, centered by directly aligning the nearest neighbors~$\mathcal{N}_i$ to the projection~$x_i$ in Step~3 of our method.} 
	\label{fig:sheppLoganBruteAlign}
\end{figure} 

\section{Summary and future work}\label{sec:Summary}
In this work, we generalized the diffusion maps embedding for data sets closed under the action of a compact matrix Lie group. We derived an equivariant embedding, and showed that the Euclidean distance between embedded points equals the distance between the probability densities of a pair of random walks over the orbits generated by the action of the group on the data set. Next, we derived an invariant embedding, and showed that the distance between a pair of embedded points equals the distance between the probability densities of displacements of pairs of random walks that depart from the points. We then demonstrated the utility of our framework for the problem of reconstructing a 2D image from its noisy random 1D Radon transform projections, each shifted by a random shift. 

As for future work, a natural direction is to apply $G$-invariant diffusion maps for clustering, dimensionality reduction, and alignment of data sets of images, points clouds, and volumes. Of particular interest, is the problem of class-averaging in cryoEM~\cite{classAverage}, which can be seen as a 2D analog of the random tomography problem addressed in~Section~\ref{sec:randTomography}.

\section*{Acknowledgements}
XC was supported in part by NSF DMS-2007040. ER and YS were supported by NSF-BSF award 2019733 and by the European Research Council (ERC) under the European Union's Horizon 2020 research and innovation programme (grant agreement 723991 - CRYOMATH). YS was supported also by the NIH/NIGMS Award R01GM136780-01.

\appendix
\section{Compact matrix Lie groups}
\subsection{Matrix Lie groups and actions}\label{secLieGroupAction}
In this section and the next, we review some background on compact matrix Lie groups and harmonic analysis on them. For a thorough introduction to the subject we refer the reader to~\cite{lieGroups}.

\begin{definition}
	A matrix Lie group is a smooth manifold $G$, whose points form a group of matrices. 
\end{definition}
For example, consider the special unitary group of order~$2$
\begin{equation}\label{secLieGroupAction:SU2def}
	SU(2) = \left\{\begin{pmatrix}
		z&w \\ -\overline{w}& \overline{z}
	\end{pmatrix}\; : \;  z,w\in\mathbb{C}, \; \abs{z}^2+\abs{w}^2=1\quad \right\},
\end{equation}
which consists of all $2\times2$ unitary matrices with determinant~1.
Writing~$z=x_1+ix_2$ and $w=x_3+ix_4$ we have that
\begin{equation}\label{secLieGroupAction:SU2det}
	1=	\det{\begin{pmatrix}
			z&w \\ -\overline{w}& \overline{z}
	\end{pmatrix}}= \abs{z}^2+\abs{w}^2= x_1^2+x^2_2+x_3^2+x_4^2. 
\end{equation}
Furthermore, it is readily verified that each~$z,w\in \mathbb{C}$ for which~\eqref{secLieGroupAction:SU2det} holds defines a unique element in~$SU(2)$. Hence, we conclude that~$SU(2)$ is diffeomorphic to the three-dimensional unit sphere $S^3$. Other important examples for matrix Lie groups include the group of three-dimensional rotation matrices~$SO(3)$, and the $n$-dimensional torus $\mathbb{T}^n$, which is just the group of diagonal $n\times n$ unitary matrices. 
\begin{definition}\label{secLieGroupAction:gActDef}
	The action of a group $G$ of $n\times n$ matrices on a subset $Z\subseteq \mathbb{C}^{n}$ is a map $\text{'}\cdot\text{'}:G\times Z\rightarrow Z$, defined for each $A\in G$ and $x\in Z$ by matrix multiplication on the left~$A\cdot x$. 
\end{definition}
We say that the set~$Z$ is closed under the action of $G$ or simply that $Z$ is $G$-invariant, if for every~$A\in G$ it holds that $A\cdot x\in Z$. We will assume that the data set~$Z$ was sampled from a $G$-invariant compact manifold~$\man\subset\mathbb{C}^{n}$. In other words, we assume that $A\cdot x \in \man$ for all~$x\in\man$ and~$A\in G$. We assume that~$G$ is compact as well. 

The tools we develop in this work employ Fourier series expansions over Lie groups. The expansion coefficients are obtained by integration with respect to the Haar measure, which we now define. 
\begin{definition}
	The Haar measure over a Lie group $G$ is the unique finite valued, non-negative function~$\eta(\cdot)$ over all (Borel) subsets $S\subseteq G$, such that 
	\begin{equation}\label{secLieGroupAction:leftInvar}
		\eta(A\cdot S) = \eta(S)\quad \text{ for all} \quad A\in G,
	\end{equation}
	and 
	\begin{equation}\label{secLieGroupAction:probMeasure}
		\eta(G) =1. 
	\end{equation}
\end{definition}
If the group $G$ is compact (as we assume throughout this work), then we also have right invariance (see~\cite{ChirikjianStochasticLieGroups})
\begin{equation}\label{secLieGroupAction:rightInvar}
	\eta(S\cdot A) = \eta(S)\quad \text{ for all} \quad A\in G,
\end{equation}
and furthermore, that (see~\cite{ChirikjianStochasticLieGroups})
\begin{equation}\label{secLieGroupAction:conjInvar}
	\eta(S^*) = \eta(S), \quad S\subseteq{G}, 
\end{equation}
where~$S^*=\left\{A^*\; : \; A\in S\right\}$.

As an example for a Haar integral, consider again the group~$SU(2)$ in~\eqref{secLieGroupAction:SU2def}. Any element $A\in SU(2)$ can be written using Euler angles as
\begin{equation}\label{secLieGroupAction:fundIUR}
	A(\alpha,\beta,\gamma) = 
	\begin{pmatrix}
		\cos{\frac{\beta}{2}}e^{i(\alpha+\gamma)/2} & \sin{\frac{\beta}{2}}e^{i(\alpha-\gamma)/2}\\ 
		i\sin{\frac{\beta}{2}}e^{-i(\alpha-\gamma)/2} & \cos{\frac{\beta}{2}}e^{-i(\alpha+\gamma)/2}
	\end{pmatrix},
\end{equation}
where $\alpha\in [0,2\pi), \beta \in [0,\pi)$ and $\gamma\in [-2\pi,2\pi)$.
Using~\eqref{secLieGroupAction:fundIUR}, the Haar integral of a function~$f:SU(2)\rightarrow\mathbb{C}$ can be computed by (see~\cite{nonComHarmAnalys})
\begin{equation}
	\int_{SU(2)}f(A)d\eta(A) =\frac{1}{16\pi^2}\int_0^{2\pi}\int_0^{\pi}\int_{-2\pi}^{2\pi}f(A(\alpha,\beta,\gamma))\sin\beta d\alpha d\beta d\gamma.
\end{equation}
We observe that in this case, the volume element induced by the Haar measure is given in Euler angles parametrization by~$\sin\beta d\alpha d\beta d\gamma$. 

\subsection{Harmonic analysis over matrix Lie groups}\label{secHarmAnalysis}
In this section, we give a brief introduction to Fourier series expansions of group valued functions, which arise as a consequence of the celebrated Peter-Weyl theorem~\cite{lieGroups}. The expansion of a function $f:G\rightarrow \mathbb{C}$ is obtained in terms of the elements of certain matrix valued functions, known as the irreducible unitary representations of~$G$, which we now define. 
\begin{definition}
	An $n$-dimensional unitary representation of a Lie group $G$ is matrix-valued function~$U(\cdot)$ from~$G$ into the group~U(n) of~$n\times n$ unitary matrices, such that
	\begin{equation}\label{harmAnalysisOnG:hMorphProp}
		U(A\cdot B) = U(A)\cdot U(B), \quad A,B\in G, 
	\end{equation}
	and 
	\begin{equation}\label{harmAnalysisOnG:identityMapProp}
		U(I) = I_n,
	\end{equation}
\end{definition}
where~$I\in G$ and~$I_n\in \text{U}(n)$ are the identity elements of~$G$ and~U(n), respectively. 
The homomorphism property \eqref{harmAnalysisOnG:hMorphProp} together with~\eqref{harmAnalysisOnG:identityMapProp} imply that the set~$\left\{U(A)\right\}_{A\in G}$ is also a matrix Lie group. Furthermore, by~\eqref{harmAnalysisOnG:hMorphProp} and~\eqref{harmAnalysisOnG:identityMapProp} we have that
\begin{equation}
	I_n = U(AA^*) = U(A)\cdot U(A^*),  
\end{equation} 
which implies that
\begin{equation}\label{harmAnalysisOnG:IURsUnitaryProp}
	U(A^*) = \left(U(A)\right)^*, \quad A\in G. 
\end{equation}
In other words, the matrix~$U(A^*)$ is the inverse element of~$U(A)$.
\begin{definition}
	A group representation $U(\cdot)$ is called reducible, if there exists a unitary matrix $P$, such that $PU(A)P^{-1}$ is block diagonal for all $A\in G$. An irreducible unitary representation (abbreviated IUR) is a representation that is not reducible.
\end{definition}
By the Peter-Weyl theorem~\cite{lieGroups}, there exists a countable family $\left\{U^{\alpha}\right\}$ of finite dimensional IURs of~$G$, such that the collection~$\left\{U^\alpha_{ij}(\cdot)\right\}$ of all the elements of all these IURs forms an orthogonal basis for $L^2(G)$.
This implies that any smooth function $f:G\rightarrow \mathbb{C}$ can be expanded in a series of the elements of the IURs of~$G$. 
For example, the IURs of $SU(2)$ in~\eqref{secLieGroupAction:fundIUR} are given by a sequence of matrices $\left\{U^{\ell}\right\}$, $\ell=0,1/2,1,3/2,\ldots$, where $U^{\ell}(A)$ is a $(2\ell+1)\times (2\ell+1)$ dimensional matrix for each $A\in G$ (see e.g. \cite{nonComHarmAnalys}). In particular, the matrix-valued function in~\eqref{secLieGroupAction:fundIUR} is the IUR of~$SU(2)$ that corresponds to~$\ell=1/2$.

In general, the series expansion of a function $f:G\rightarrow \mathbb{C}$ is given by 
\begin{equation}\label{secLieGroupAction:GFourierMatCoeff}
	f(A)=\sum_{\ell\in \I_G } d_\ell\cdot \text{trace}\left(\hat{f}^\ell\cdot  U^\ell(A)\right),
\end{equation}
where $\I_G$ is a countable set that enumerates the IURs of~$G$, $d_\ell$ is the dimension of the~$\ell$-th IUR, and $\hat{f}^\ell$ is the $d_\ell\times d_\ell$ matrix given by 
\begin{equation}\label{sec2:fHatDef}
	\hat{f}^\ell= \int_{G} f(A)\overline{U^\ell(A)}d\eta(A),
\end{equation}
for each $\ell\in \I_G$. Note, that the elements of the conjugate of the IUR~$U^\ell(A)$ in~\eqref{sec2:fHatDef} is defined simply by taking the conjugate of each element of~$U^\ell(A)$. 
Importantly, we have Schur's orthogonality relation (see~\cite{nonComHarmAnalys})
\begin{equation}\label{secLieGroupAction:SchurOrthogonality}
	\int_G U^\ell_{mn}(A)\overline{U^\ell_{m'n'}(A)}d\eta(A) =d_\ell^{-1}\cdot \delta_{\ell\ell'}\delta_{mm'}\delta_{nn'},
\end{equation} 
where~$\delta_{qr}$ is Kronecker's delta. 

\section{Proof of Lemma~\ref{eqvDmaps:eqvEigenVecLemma}}\label{appSec3:eqvLemma}
By assumption, there exists $B\in G$ such that $x_j = B\cdot x_i$.
First, we show that 
\begin{equation}
	\hat{W}^{\ell}_{ik} = \overline{U^\ell(B)}\cdot \hat{W}^{\ell}_{jk}, \quad k=1,\ldots,N,
\end{equation}
where $\hat{W}^{\ell}_{ik}$ is defined in~\eqref{sec2:hat{W}Def}. Indeed, by using~\eqref{GinvDef:Wdef}, we have
\begin{align}
	\hat{W}^{\ell}_{ik} = \int_G W_{ik}(I,A)\overline{U^\ell(A)}d\eta(A) &= \int_G e^{-\norm{x_i-A\cdot x_k}^2/\epsilon}\overline{U^\ell(A)}d\eta(A) \nonumber \\		
	&= \int_G e^{-\norm{x_j-BA\cdot x_k}^2/\epsilon}\overline{U^\ell(A)}d\eta(A\nonumber).
\end{align}
Making the change of variables $C=BA$, we get 
\begin{align} 
	\int_G e^{-\norm{x_j-BA\cdot x_k}^2/\epsilon}\overline{U^\ell(A)}d\eta(A\nonumber)&=\int_G e^{-\norm{x_j-C\cdot x_k}^2/\epsilon}\overline{U^\ell(B^*C)}d\eta(B^*C)\nonumber \\
	&= \overline{U^\ell(B^*)}\int_G e^{-\norm{x_j-C\cdot x_k}^2/\epsilon}\overline{U^\ell(C)}d\eta(B^*C)\label{WjkEquivEq2}\\
	&= \overline{U^\ell(B^*)}\int_G e^{-\norm{x_j-C\cdot x_k}^2/\epsilon}\overline{U^\ell(C)}d\eta(C) \label{WjkEquivEq3} \\ 
	&= \overline{U^\ell(B^*)}\cdot \hat{W}^{\ell}_{jk},		
\end{align}
where we used the homomorphism property \eqref{harmAnalysisOnG:hMorphProp} in passing to~\eqref{WjkEquivEq2}, and the translation invariance property \eqref{secLieGroupAction:leftInvar} of the Haar measure in passing to~\eqref{WjkEquivEq3}.
Next, we observe that for all $i\in [N]$, we have 
\begin{align}
	\hat{W}^{(\ell)}  v = \lambda v &\iff \sum_{k=1}^N \hat{W}^{\ell}_{ik}\cdot e^k(v) = \lambda e^i(v) \iff \overline{U^\ell(B^*)}\cdot \sum_{k=1}^{N}\hat{W}^{\ell}_{jk}e^k(v) = \lambda e^i(v) \nonumber \\ 
	&\iff \sum_{k=1}^{N}\hat{W}^{\ell}_{jk}e^k(v) = \overline{U^\ell(B)} \lambda e^i(v)\label{ekEigVec},
\end{align} 
where we used~\eqref{harmAnalysisOnG:IURsUnitaryProp} in passing to~\eqref{ekEigVec}, 
and thus, using that~$\lambda>0$, we get that
\begin{align}
	\lambda e^j(v) = \sum_{k=1}^{N}\hat{W}^{\ell}_{jk}e^k(v) = \lambda \overline{U^\ell(B)}\cdot  e^i(v) \iff e^j(v) = \overline{U^\ell(B)}\cdot  e^i(v).
\end{align}

\section{Proofs of results from Section~\ref{sec:GEqvDmaps}}
\subsection{Proof of Theorem \ref{eqvDmaps:diffDistEigProp}}\label{appendix:diffMapsEigPropPrf}
First, let us express~\eqref{eqvDmaps:eqvDistExpanded} in terms of the eigenvectors and eigenvalues of~$P_{op}^t$ of~\eqref{eqvDmaps:probOperator}.
We begin by diagonalizing the operator~$S_{op}:\mathcal{H}\rightarrow \mathcal{H}$ defined by
\begin{equation}\label{eqvDmaps:symmetricOp}
	S_{op}\left\{f\right\}(i,A) = \sum_{j=1}^{N}\int_G S((i,A),(j,B))f_j(B)d\eta(B), \quad f\in \mathcal{H},
\end{equation}
where
\begin{equation}\label{eqvDmaps:symOpDef}
	S((i,A),(j,B)) = \frac{\sqrt{D_{ii}}}{\sqrt{D_{jj}}}P((i,A),(j,B)) = \frac{W_{ij}(A,B)}{\sqrt{D_{ii}}\sqrt{D_{jj}}},
\end{equation}
where~$P$ is defined in~\eqref{secEqDiffmaps:probOperatorDef} and~$W_{ij}$ is defined in~\eqref{GinvDef:Wdef}.
By \eqref{eqvDmaps:symOpDef} and \eqref{GinvDef:Wdef}, the function~$S((i,A)(j,B))$ is symmetric, and thus, the operator $S_{op}$ is also symmetric. Now, consider the matrix $S^{(\ell)} = (D^{(\ell)})^{-1/2}\hat{W}^{(\ell)} (D^{(\ell)})^{-1/2}$, related to the matrix $K^{(\ell)}$ in \eqref{secGGL:fourierMatNorm} by 
\begin{equation}\label{eqvDmaps:StildeDef}
	S^{(\ell)} = I-(D^{(\ell)})^{1/2}K^{(\ell)} (D^{(\ell)})^{-1/2},
\end{equation}
where~$D^{(\ell)}$ and~$W^{(\ell)}$ were defined in~\eqref{secGGL:blockFourierMat} and~\eqref{secGGL:DellDef}, respectively. 
By \eqref{eqvDmaps:StildeDef} we have that $K^{(\ell)} = I-(D^{(\ell)})^{-1/2}S^{(\ell)} (D^{(\ell)})^{1/2}$, and so,
if~$\tilde{v}_{n,\ell}$ is an eigenvector of~$K^{(\ell)}$ that corresponds to the eigenvalue~$\tilde{\lambda}_{n,\ell}$ (see Theorem~\ref{secGGL:GGLdecomp}), then~\eqref{eqvDmaps:ProbOpEvals} implies that
\begin{align}\label{eqvDmaps:v2Dv}
	&\tilde{\lambda}_{n,\ell}\cdot \tilde{v}_{n,\ell} = K^{(\ell)}\cdot \tilde{v}_{n,\ell} = (I-(D^{(\ell)})^{-1/2}S^{(\ell)} (D^{(\ell)})^{1/2})\cdot \tilde{v}_{n,\ell}\iff\nonumber \\ 
	&(D^{(\ell)})^{-1/2}S^{(\ell)} (D^{(\ell)})^{1/2}\cdot \tilde{v}_{n,\ell} =\lambda_{n,\ell} \cdot\tilde{v}_{n,\ell}  \iff S^{(\ell)} ((D^{(\ell)})^{1/2} \tilde{v}_{n,\ell}) = 	\lambda_{n,\ell}\cdot  (D^{(\ell)})^{1/2} \tilde{v}_{n,\ell}.
\end{align}
Thus, we see that $\tilde{v}_{n,\ell}$ is an eigenvector of~$K^{(\ell)}$ if and only if~$(D^{(\ell)})^{1/2}\tilde{v}$ is an eigenvector of~$S^{(\ell)}$.

As we explained in Section~\ref{sec:GEqvDmaps}, the eigenfunctions~$\{\Phi^{(p)}_{\ell,m,n}\}$ in~\eqref{eqvDmaps:ProbEvecs} of~$P_{op}$ from~\eqref{secEqDiffmaps:probOperatorDef} are identical to those of~$\tilde{L}$ in~\eqref{GGL:normalizedGGLDef}, which can be computed by using the eigenvectors~$v^{(p)}_{n,\ell} = \tilde{v}_{n,\ell}$ of~$K^{(\ell)}$ via~\eqref{secGGL:eigenFuncs}. 
By~\eqref{eqvDmaps:ProbEvecs} and~\eqref{secGGL:eigenFuncs}, the eigenfunctions of~$P_{op}$ can be expressed as
\begin{equation}\label{eqvDmaps:ProbEvecsExplicit}
	\Phi^{(p)}_{\ell,m,n}(i,A) = \sqrt{d_\ell}\cdot\overline{U^\ell_{m,\cdot}(A)}\cdot e^i(v^{(p)}_{n,\ell}).
\end{equation}
We further denote the eigenvectors of~$S^{(\ell)}$ from~\eqref{eqvDmaps:StildeDef} by~$\left\{	v^{(s)}_{n,\ell}\right\}$, and by~\eqref{eqvDmaps:v2Dv}, we obtain that
\begin{equation}\label{eqvDmaps:S2PevecRelation}
	v^{(s)}_{n,\ell} = (D^{(\ell)})^{1/2} v^{(p)}_{n,\ell}, \quad n\in[N], \quad \ell\in \I_G. 
\end{equation}
The following result relates the eigendecomposition of~$S_{op}$ in~\eqref{eqvDmaps:symmetricOp} with that of~$P_{op}$. 
\begin{lemma}\label{eqvDmaps:symEigenProp}
	The functions $\Phi^{(s)}_{\ell,m,n}:[N]\times G\rightarrow \mathbb{C}$ defined by
	\begin{equation}\label{eqvDmaps:eigenFuncsTilde}
		\Phi^{(s)}_{\ell,m,n}(i,A) = \sqrt{d_\ell}\cdot\overline{U^\ell_{m,\cdot}(A)}\cdot e^i(v^{(s)}_{n,\ell}),
	\end{equation}
	for $\ell \in \I_G$, $m\in \left\{1,\ldots,d_\ell\right\}$, and~$n\in [N]$, are eigenfunctions of~$S_{op}$ which are complete in~$\mathcal{H}$, and are orthonormal with respect to the inner product
	\begin{equation}
		\dprod{f}{g}_{\Hspace} = \sum_{k=1}^{N}\int_G f(k,C)\cdot \overline{g(k,C)}d\eta(C).
	\end{equation}
	Furthermore, each eigenfunction $\Phi^{(s)}_{\ell,m,n}$ corresponds to the eigenvalue $\lambda_{n,\ell}$ of~$P_{op}$ from~\eqref{secEqDiffmaps:probOperatorDef}, and it is related to the eigenfunction $\Phi^{(p)}_{\ell,m,n}$ in~\eqref{eqvDmaps:ProbEvecs} (of~$P_{op}$) by
	\begin{equation}\label{eqvDmaps:Phi2PhiTilde}
		\Phi^{(s)}_{\ell,m,n}= D^{1/2} \Phi^{(p)}_{\ell,m,n},
	\end{equation}
	where~$D$ is defined in~\eqref{GinvDef:DiiDef}.
\end{lemma}
\begin{proof}
		By \eqref{secGGL:parVecNotation}, \eqref{secGGL:DellDef} and~\eqref{eqvDmaps:S2PevecRelation}, for all $j,n\in [N]$ we have that
	\begin{equation}\label{ejTilde2ej}
		e^j(v^{(s)}_{n,\ell}) = e^j((D^{(\ell)})^{1/2}v^{(p)}_{n,\ell}) = \sqrt{D_{jj}}\cdot e^j(v^{(p)}_{n,\ell}).
	\end{equation}
	Combining the latter with~\eqref{eqvDmaps:ProbEvecsExplicit}, \eqref{eqvDmaps:eigenFuncsTilde} and~\eqref{GinvDef:DMatAction}, we obtain~\eqref{eqvDmaps:Phi2PhiTilde}. 
	Hence, by \eqref{eqvDmaps:symmetricOp}, \eqref{eqvDmaps:symOpDef} and~\eqref{eqvDmaps:eigenFuncsTilde} we have that
	\begin{align}
		S_{op}\left\{\Phi^{(s)}_{\ell,m,n}\right\}(i,A) &= \sum_{j=1}^{N}\int_G \frac{W_{ij}(A,B)}{\sqrt{D_{ii}}\sqrt{D_{jj}}}\sqrt{d_\ell}\cdot \overline{U^\ell_{m,\cdot}(A)}\cdot e^j(v^{(s)}_{n,\ell})d\eta(B),\\
		&=\sqrt{D_{ii}}\cdot \sum_{j=1}^{N}\int_G \frac{W_{ij}(A,B)}{D_{ii}}\sqrt{d_\ell}\cdot \overline{U^\ell_{m,\cdot}(A)}\cdot e^j(v^{(p)}_{n,\ell})d\eta(B)\label{eigenProbP2SRelation:eq2} \\
		&= \sqrt{D_{ii}}\cdot P\left\{\Phi^{(p)}_{\ell,m,n}\right\}(i,A)=\sqrt{D_{ii}}\cdot \lambda_{n,\ell}\cdot \Phi^{(p)}_{\ell,m,n}(i,A)  \label{eigenProbP2SRelation:eq3}\\
		&= \lambda_{n,\ell}\cdot \Phi^{(s)}_{\ell,m,n}(i,A)\label{eqvDmaps:eigenProbP2SRelation},
	\end{align}
	where in~\eqref{eigenProbP2SRelation:eq2} we used~\eqref{ejTilde2ej}, then in~\eqref{eigenProbP2SRelation:eq3} we used~\eqref{eqvDmaps:ProbEvecsExplicit}, and in~\eqref{eqvDmaps:eigenProbP2SRelation} we used~\eqref{eqvDmaps:Phi2PhiTilde} and~\eqref{GinvDef:DMatAction}. The completeness of the eigenfunctions in~\eqref{eqvDmaps:eigenFuncsTilde} follows by combining~\eqref{eqvDmaps:ProbEvecs} and~\eqref{eqvDmaps:Phi2PhiTilde} with the completeness of the functions~$\left\{\tilde{\Phi}_{\ell,m,n}\right\}$ of~\eqref{secGGL:eigenFuncs} in~$\Hspace$, which is implied by Theorem~\ref{secGGL:GGLdecomp}. Finally, to see that~$\left\{\Phi^{(s)}_{\ell,m,n}\right\}$ are orthonormal, observe that by~\eqref{eqvDmaps:eigenFuncsTilde} we have
	\begin{align}
		\dprod{\Phi^{(s)}_{\ell,m,n}}{\Phi^{(s)}_{\ell',m',n'}}_{\Hspace} &= \sqrt{d_\ell}\cdot\sqrt{d_{\ell'}}\cdot\sum_{j=1}^N\int_G \overline{U^\ell_{m,\cdot}(A)}\cdot e^j(v^{(s)}_{n,\ell})\cdot U^\ell_{m',\cdot}(A)\overline{\cdot e^j(v_{n',\ell'})} d\eta(A)\nonumber \\
		& =  \sqrt{d_\ell\cdot d_{\ell'}}\cdot\sum_{j=1}^N  (e^j(v^{(s)}_{n,\ell}))^T\cdot\left(\int_G(\overline{U^\ell_{m,\cdot}(A)})^T\cdot U^\ell_{m',\cdot}(A)d\eta(A)\right)\overline{\cdot e^j(v_{n',\ell'})} \nonumber \\
		& =  \sqrt{d_\ell\cdot d_{\ell'}}\cdot\sum_{j=1}^N  (e^j(v^{(s)}_{n,\ell}))^T\cdot \frac{1}{d_\ell}\cdot \delta_{\ell\ell'}\delta_{mm'}I_{\ell\times\ell} \cdot\overline{ e^j(v_{n',\ell'})}\label{phiOrthonormalEq3} \\
		&=\delta_{\ell\ell'}\delta_{mm'}\dprod{v^{(s)}_{n,\ell}}{v^{(s)}_{n',\ell}} =  \delta_{\ell\ell'}\delta_{mm'}\delta_{nn'}\label{phiOrthonormalEq4},
	\end{align}
	where in passing to~\eqref{phiOrthonormalEq3} we used~\eqref{secLieGroupAction:SchurOrthogonality},
	and in passing to~\eqref{phiOrthonormalEq4} we used \eqref{secGGL:parVecNotation}, and that~$\left\{v^{(s)}_{n,\ell}\right\}$ are orthonormal eigenvectors of the symmetric matrices $S^{(\ell)}$ in~\eqref{eqvDmaps:StildeDef}.
	
	Now, consider the operator~$S_{op}^t:\Hspace \rightarrow \Hspace$ defined similarly to~\eqref{eqvDmaps:probOperator} and~\eqref{eqvDmaps:PtKernelDef} by
	\begin{equation}\label{eqvDmaps:symOperator}
		\left\{S_{op}^t f\right\}(i,A) = \sum_{j=1}^{N}\int_G S^t((i,A),(j,B))f_j(B)d\eta(B), \quad f\in \mathcal{H},
	\end{equation}
	where for each $(i,A),(j,B) \in [N]\times G$ we define~$S^1_{op} = S_{op}$ (see~\eqref{eqvDmaps:symmetricOp}), and 
	\begin{equation}\label{eqvDmaps:symPowtKernelDef}
		S^t((i,A),(j,B)) \triangleq \sum_{k=1}^{N}\int_G S^{t-1}((i,A),(k,C))\cdot S((k,C),(j,B))d\eta(C), \quad t=2,3,\ldots
	\end{equation}
	Applying Mercer's theorem to~$S_{op}^t$ in conjunction with Lemma~\ref{eqvDmaps:symEigenProp}, we obtain that
	\begin{align}\label{eqvDmaps:symmetricOp_ij}
		S^t((i,A),(j,B)) &= \sum_{\ell\in \I_G} \sum_{m=1}^{\ell} \sum_{n=1}^N \lambda_{n,\ell}^{t}\Phi^{(s)}_{\ell,m,n}(i,A) \cdot \overline{\Phi^{(s)}_{\ell,m,n}(j,B)},
	\end{align}
	for all~$(i,A),(j,B)\in [N]\times G$. 
	By induction on~$t\in \mathbb{N}$, combined with~\eqref{eqvDmaps:symOpDef} and~\eqref{eqvDmaps:symmetricOp_ij}, we obtain  for all~$t\in \mathbb{N}$ that
	\begin{equation}\label{eqvDmaps:symOpDefPowt}
		S^t((i,A),(j,B)) = \frac{\sqrt{D_{ii}}}{\sqrt{D_{jj}}}P^t((i,A),(j,B)),\quad (i,A),(j,B)\in [N]\times G. 
	\end{equation}
	Therefore, by using \eqref{eqvDmaps:symmetricOp_ij}, \eqref{eqvDmaps:symOpDefPowt} and~\eqref{GinvDef:DMatAction}, we can write~$P^t_{i,A}(k,C)$ from~\eqref{eqvDmaps:PiAtcdotcdotDef} as
	\begin{align}\label{eqvDmaps:pijExpansion}
		P^t_{i,A}(k,C)&=P^t((i,A),(k,C)) =\sum_{\ell\in \I_G} \sum_{m=1}^{\ell} \sum_{n=1}^N \lambda_{n,\ell}^{t}\left(\frac{\Phi^{(s)}_{\ell,m,n}(i,A)}{\sqrt{D_{ii}}}\right) \cdot \overline{\sqrt{D_{jj}}\cdot \Phi^{(s)}_{\ell,m,n}(k,C)}\nonumber \\
		&=\sum_{\ell\in \I_G} \sum_{m=1}^{\ell} \sum_{n=1}^N \lambda_{n,\ell}^{t}\left\{D^{-1/2}\cdot \Phi^{(s)}_{\ell,m,n}\right\}(i,A)\cdot \overline{\left\{D^{1/2}\cdot \Phi^{(s)}_{\ell,m,n}\right\}(k,C)}.
	\end{align}
	By \eqref{eqvDmaps:Phi2PhiTilde}, each function $D^{-1/2}\Phi^{(s)}_{\ell,m,n}=\Phi^{(p)}_{\ell,m,n}$ is an eigenfunction of~$P_{op}$, that corresponds to the eigenvalue~$\lambda_{n,\ell}$. By using~\eqref{eqvDmaps:pijExpansion}, it is straightforward to verify that for each~$t\in\mathbb{N}$ it is also
	an eigenfunction of~$P_{op}^t$, that corresponds to the eigenvalue~$\lambda_{n,\ell}^t$. 
\end{proof}
	
	Now, to write~$D_{p,t}$ from~\eqref{eqvDmaps:eqvDist} in terms of the functions~$\{\Phi^{(p)}_{\ell,m,n}\}$, we employ the following observation. 
	We may view~\eqref{eqvDmaps:pijExpansion} as the expansion of~$P^t_{i,A}$ in terms of the functions~$\{D^{1/2}\cdot \Phi^{(s)}_{\ell,m,n}\}$, which are orthogonal with respect to the inner product on~$\mathcal{H}$ defined by
	\begin{equation}\label{eqvDmaps:weightedHspaceDotProdDef}
		\dprod{f}{g}_{\mathcal{H},d\eta/D} = \sum_{k=1}^{N}\int_{G} f(k,C)\cdot \overline{g(k,C)}\frac{d\eta(C)}{D_{kk}},
	\end{equation}
	and with expansion coefficients given by $\{D^{-1/2}\cdot \Phi^{(s)}_{\ell,m,n}\}(i,A)=\Phi^{(p)}_{\ell,m,n}(i,A)$ for all~$\ell,m$ and~$n$. Thus, by Parseval's identity, we obtain that~$D_{p,t}((i,A),(j,B))$ from~\eqref{eqvDmaps:eqvDist} is given by
	\begin{align}\label{eqvDmaps:diffDist1}
		\norm{P_{i,A}^t-P_{j,B}^t}_{\mathcal{H},d\eta/D} =\left(\sum_{\ell\in \I_G}\sum_{m=1}^{d_\ell} \sum_{n=1}^N  \lambda^{2t}_{n,\ell} \cdot  \left|\Phi^{(p)}_{\ell,m,n}(i,A)-\Phi^{(p)}_{\ell,m,n}(j,B)\right|^2\right)^{\frac{1}{2}},
	\end{align}
as claimed.

\subsection{Proof of Proposition~\ref{eqvDmaps:truncEmbedEqvProp}}\label{appendix:truncEmbedEqvPropPrf}
First, by \eqref{secGGL:eigenFuncs} and~\eqref{eqvDmaps:ProbEvecs}, the elements of $\eqref{eqvDmaps:eqvEmbedding}$ that correspond to fixed values of~$\ell$ and~$n$ are given by the $d_\ell$-dimensional vector
\begin{equation}\label{eqvDmaps:eqvEmbeddingNL}
	\begin{pmatrix}
		\Phi^{(p)}_{\ell,1,n} \\ \vdots \\ \Phi^{(p)}_{\ell,d_\ell,n}
	\end{pmatrix} = \sqrt{d_\ell}\cdot \overline{U^\ell\left(A\right)}\cdot e^i(v^{(p)}_{n,\ell}). 
\end{equation}
Hence, we may write \eqref{eqvDmaps:eqvEmbedding} as 
\begin{equation}\label{eqvDmaps:eqvEmbeddingVec}
	\Phi^{(p)}_t(i,A) = \left(\lambda_{n,\ell}^t\sqrt{d_\ell}\cdot \overline{U^\ell\left(A\right)}\cdot e^i(v^{(p)}_{n,\ell})\right)_{n=1,\ell\in \I_G}^{N},
\end{equation}
where we perceive $\Phi^{(p)}_t(i,A)$ as an infinite dimensional vector, obtained by concatenating all the $d_\ell$-dimensional vectors in \eqref{eqvDmaps:eqvEmbeddingNL}. 
Next, by Lemma~\ref{eqvDmaps:eqvEigenVecLemma} and~\eqref{harmAnalysisOnG:identityMapProp}, we have that
\begin{equation}
		\Phi^{(p)}_t(j,I) = \left(\lambda_{n,\ell}^t\sqrt{d_\ell}\cdot \overline{U^\ell\left(I\right)}\cdot e^j(v^{(p)}_{n,\ell})\right)_{n=1,\ell\in \I_G}^{N} = \left(\lambda_{n,\ell}^t\sqrt{d_\ell}\cdot \overline{U^\ell\left(B\right)}\cdot e^i(v^{(p)}_{n,\ell})\right)_{n=1,\ell\in \I_G}^{N}.
\end{equation}
Furthermore, since each block~$U^\ell(B)$ on the diagonal of~$U(B)$ is irreducible, then so is~$U(B)$, which implies that the function~$U(\cdot)$ defined by~$B\mapsto U(B)$ is an IUR of~$G$.

\section{Proofs for Section~\ref{sec:invDmaps}}
\subsection{Proof of proposition~\ref{invDmaps:eqvInvDistanceProp}}\label{appendix:eqvInvDistancePropPrf}
We first require the following result. 
\begin{lemma}\label{eqvDmaps:eqvEmbeddingProp}
	Suppose that $x_j=B\cdot x_i$ for some $B\in G$. Then,  we have that
	\begin{equation}
		\Phi^{(p)}_t(j,A) = \Phi^{(p)}_t(i,AB).
	\end{equation}
	In particular, if~$A=I$, then we get that $\Phi^{(p)}_t(j,I) = \Phi^{(p)}_t(i,B)$. 
\end{lemma}
\begin{proof}
	Suppose that $x_j = Bx_i$. Then, by Lemma \ref{eqvDmaps:eqvEigenVecLemma} and~\eqref{harmAnalysisOnG:IURsUnitaryProp} we have
	\begin{align*}
		\left(\Phi^{(p)}_t(j,A)\right)_{\ell,m,n} &=\lambda_{n,\ell}^t\sqrt{d_\ell}\left(e^j(\tilde{v}_{n,\ell})\right)^T\cdot U^\ell_{\cdot,m}(A^*) =\lambda_{n,\ell}^t\sqrt{d} \left(\overline{U^\ell\left(B\right)}e^i(\tilde{v}_{n,\ell})\right)^T\cdot U^\ell_{\cdot,m}(A^*)\nonumber \\
		&= \lambda_{n,\ell}^t\sqrt{d_\ell}\left(e^i(\tilde{v}_{n,\ell})\right)^TU^\ell\left(B^*\right)\cdot U^\ell_{\cdot,m}(A^*)\nonumber \\ 
		&= \lambda_{n,\ell}^t\sqrt{d_\ell}\left(e^i(\tilde{v}_{n,\ell})\right)^T U^\ell_{\cdot,m}(B^*A^*)
		=\left(\Phi^{(p)}_t(i,AB)\right)_{\ell,m,n}.
	\end{align*}
\end{proof}

Next, we observe that by \eqref{eqvDmaps:prodDiffDist} and \eqref{eqvDmaps:eqvEmbeddingVec}, for any $A,B\in G$ we have that 
\begin{equation}\label{eqvDmaps:indentityForMinDist}
	\norm{P^t_{i,A}-P^t_{j,B}}^2_{\mathcal{H},d\eta/D} = \sum_{\ell\in \I_G}\sum_{n=1}^{N} \lambda_{\ell,n}^{2t}\cdot d_\ell\cdot  \norm{\overline{U^{\ell}(A)}e^i(\tilde{v}_{n,\ell})-\overline{U^{\ell}(B)}e^j(\tilde{v}_{n,\ell})}^2.
\end{equation}
By using that~$U^\ell(A)$ is unitary for all~$A\in G$, and by the homomorphism property~\eqref{harmAnalysisOnG:hMorphProp}, equation~\eqref{eqvDmaps:indentityForMinDist} implies that
\begin{equation}
		\norm{P^t_{i,A}-P^t_{j,B}}^2_{\mathcal{H},d\eta/D} = \norm{P^t_{i,I}-P^t_{j,A^*B}}^2_{\mathcal{H},d\eta/D},
\end{equation}
and since the map~$B \mapsto A^*B$ is a homeomorphism from~$G$ onto itself, we get that 
\begin{equation}\label{eqvDmaps:minDistOneDimOpt}
\min_{A,B\in G}\norm{P^t_{i,A}-P^t_{j,B}}^2_{\mathcal{H},d\eta/D} = \min_{Q\in G}\norm{P^t_{i,I}-P^t_{j,Q}}^2_{\mathcal{H},d\eta/D}. 
\end{equation}
Therefore, by~\eqref{eqvDmaps:minEqvDist}, \eqref{eqvDmaps:prodDiffDist}, Lemma \ref{eqvDmaps:eqvEmbeddingProp}, \eqref{eqvDmaps:minDistOneDimOpt},  and~\eqref{eqvDmaps:indentityForMinDist} we have that
\begin{align}
	M_{p,t}^2(k,r) &= \min_{Q\in G}\norm{P^t_{k,I}-P^t_{r,Q}}^2_{\mathcal{H},d\eta/D}  \nonumber \\
	& = \min_{Q\in G} \norm{\Phi^{(p)}_t(k,I)-\Phi^{(p)}_t(r,Q)}^2_{\ell^2} = \min_{Q\in G} \norm{\Phi^{(p)}_t(i,A)-\Phi^{(p)}_t(j,QB)}^2_{\ell^2}\nonumber  \\		
	&=\min_{Q\in G}\sum_{\ell\in\I_G}\sum_{n=1}^{N} \lambda_{\ell,n}^{2t}\cdot d_\ell\cdot  \norm{\overline{U^{\ell}(A)}e^i(\tilde{v}_{n,\ell})-\overline{U^{\ell}(QB)}e^j(\tilde{v}_{n,\ell})}_{\mathbb{C}^{d_\ell}}^2 \nonumber \\
	&=\min_{Q\in G}\sum_{\ell\in\I_G}\sum_{n=1}^{N} \lambda_{\ell,n}^{2t}\cdot d_\ell\cdot  \norm{e^i(\tilde{v}_{n,\ell})-\overline{U^{\ell}(A^*QB)}e^j(\tilde{v}_{n,\ell})}_{\mathbb{C}^{d_\ell}}^2 \label{eqvDmaps:MtPrfeq5} \\
	&=\min_{Q\in G}\sum_{\ell\in\I_G}\sum_{n=1}^{N} \lambda_{\ell,n}^{2t}\cdot d_\ell\cdot  \norm{e^i(\tilde{v}_{n,\ell})-\overline{U^{\ell}(Q)}e^j(\tilde{v}_{n,\ell})}_{\mathbb{C}^{d_\ell}}^2 \label{eqvDmaps:MtPrfeq6} \\
	&=\min_{Q\in G} \norm{P^t_{i,I}-P^t_{j,Q}}^2_{\mathcal{H},d\eta/D}  = M_{p,t}^2(i,j),
\end{align}
where in passing to~\eqref{eqvDmaps:MtPrfeq5} we used the fact that the 2-norm is invariant to unitary transformations,
and in passing to~\eqref{eqvDmaps:MtPrfeq6} we used the fact that the map $Q\mapsto A^*QB $ is a homeomorphism of~$G$ onto itself. 

\subsection{Proof of Proposition~\ref{eqvDmaps:eqvActLemma}}\label{eqvDmaps:eqvActLemmaPrf}
We begin with the following auxiliary result. 
\begin{lemma}\label{eqvDmaps:eqvActLemma1}
	For any $Q\in G$, $k\in [N]$, and $t\in\mathbb{N}$ we have that
	\begin{equation}
		P^t_{i,I}(k,Q^*C) = P^t_{i,Q}(k,C), \quad C\in G. 
	\end{equation}
\end{lemma}
\begin{proof}
	By~\eqref{GinvDef:WijShiftInvar}, \eqref{eqvDmaps:PopExplicitDef}, \eqref{eqvDmaps:PtKernelDef} and~\eqref{eqvDmaps:PiAtcdotcdotDef}, for $t=1$ we have that
	\begin{align}
		P^1_{i,Q}(k,C)&= P^1((i,Q),(k,C)) = P((i,Q),(k,C)) = \frac{W_{ik}(Q,C)}{D_{ii}} = \frac{W_{ik}(I,Q^*C)}{D_{ii}}\nonumber \\ &= P((i,I),(k,Q^*C)) = P_{i,I}(k,Q^*C) = P^1_{i,I}(k,Q^*C).
	\end{align}
	For~$t=2,3,\ldots$, the proof follows from~\eqref{eqvDmaps:PtKernelDef} by induction on~$t$. 
\end{proof}
Now, suppose that~$x_j=Q\cdot x_i$ for some~$Q\in G$. Then, by~\eqref{GinvDef:Wdef}, \eqref{eqvDmaps:PopExplicitDef}, and~\eqref{eqvDmaps:PtKernelDef}, we have that
\begin{equation}\label{eqvDmaps:probDensityRelation1}
	P^t_{j,I}(k,C) =P^t_{i,Q}(k,C), \quad (k,C)\in [N]\times G. 
\end{equation}
Hence, by \eqref{eqvDmaps:probDensityRelation1}, Lemma~\ref{eqvDmaps:eqvActLemma1}, and~\eqref{eqvDmaps:fTranslationDef}, for any~$t\in\mathbb{N}$ we get that
\begin{equation}
	P^t_{j,I}(k,C) =P^t_{i,Q}(k,C)= P^t_{i,I}(k,Q^*C)=\left\{Q\circ P^t_{i,I}\right\}(k,C), \quad (k,C)\in [N]\times G,
\end{equation}
as claimed. 

\subsection{Proof of Proposition~\ref{invDmaps:invMapInvarianceProp}}\label{appendix:invMapInvariancePropPrf}
		By~\eqref{invDmaps:coupledRandomWalkDensity1}, \eqref{eqvDmaps:probDensityRelation}, and Lemma~\ref{eqvDmaps:eqvActLemma},  for all $k,r\in [N]$ and $R\in G$ we have that
		\begin{align}
			\left\{P_{j,I}^t \gconv P_{j,I}^t\right\}(k,r,R) &=\int_G P_{j,I}^t(k,C)\cdot P_{j,I}^t(r,CR)d\eta(C)\nonumber\\
			&=\int_G P_{i,Q}^t(k,C)\cdot P_{i,Q}^t(r,CR)d\eta(C)\label{crossCorrInvarPrfEq2}\\
			&= \int_G P_{i,I}^t(k,Q^*C)\cdot P_{i,I}^t(r,Q^*CR)d\eta(C) \label{crossCorrInvarPrfEq3},
		\end{align}
		where we used that~$x_j = Q\cdot x_i$ in passing to~\eqref{crossCorrInvarPrfEq2}. 
		Applying the change of variables~$\tilde{C} = Q^*C$, we obtain that
		\begin{align}
			\left\{P_{j,I}^t \gconv P_{j,I}^t\right\}(k,r,R) &= \int_G P_{i,I}^t(k,\tilde{C})\cdot P_{i,I}^t(r,\tilde{C}R)d\eta(Q\tilde{C})\label{crossCorrInvarPrfEq4} \\
			&= \int_G P_{i,I}^t(k,\tilde{C})\cdot P_{i,I}^t(r,\tilde{C}R)d\eta(\tilde{C})\label{crossCorrInvarPrfEq5} \\ &=\left\{P_{i,I}^t \gconv P_{i,I}^t\right\}(k,r,R),
		\end{align}
		where we used the translation-invariance property of the Haar measure~\eqref{secLieGroupAction:leftInvar} in passing to~\eqref{crossCorrInvarPrfEq5}.

\subsection{Proof of Theorem~\ref{invDaps:distEquivalenceThrm}}\label{appSec4:invDiffDistPrf}
For the first assertion, by \eqref{invDmaps:coupledRandomWalkDensity} and~\eqref{eqvDmaps:PtIsAProbDensity}, we have that the left hand side of~\eqref{invDmaps:jointDensEq2} equals
\begin{align}\label{invDmaps:probSumsToOne}
	&\sum_{k,r=1}^{N} \int_G \left\{P_{i,I}^t(k,\cdot)\star P_{i,I}^t(r,\cdot)\right\}(R)d\eta(R)=\nonumber \\
	&\sum_{k,r=1}^{N} \int_G \left(\int_G P_{i,I}^t(k,C)\cdot P_{i,I}^t(r,CR)d\eta(C)\right)d\eta(R)=\nonumber \\
	&\sum_{k=1}^{N}\int_G  P_{i,I}^t(k,C)\cdot\left(\sum_{r=1}^{N} \int_GP_{i,I}^t(r,CR)d\eta(R)\right)d\eta(C).
\end{align}
Then, by using the change of variables $\tilde{R} = CR$ combined with the translation invariance property~\eqref{secLieGroupAction:leftInvar}, we obtain that~\eqref{invDmaps:probSumsToOne} equals
\begin{align}
	&\sum_{k=1}^{N}\int_G  P_{i,I}^t(k,C)\cdot\left(\sum_{r=1}^{N} \int_GP_{i,r}^t(I,\tilde{R})d\eta(C^*\tilde{R})\right)d\eta(C)=\nonumber \\
	&\sum_{k=1}^{N}\int_G  P_{i,I}^t(k,C)\cdot\left(\sum_{r=1}^{N} \int_GP_{i,I}^t(r,\tilde{R})d\eta(\tilde{R})\right)d\eta(C)= \sum_{k=1}^{N}\int_G  P_{i,I}^t(k,C)d\eta(C)=1.
\end{align}
Furthermore, by~\eqref{invDmaps:coupledRandomWalkDensity1}, \eqref{eqvDmaps:PiAtcdotcdotDef}, and the fact that~$P^t((i,A),(j,B))$ from~\eqref{eqvDmaps:PtKernelDef} is non-negative, we have that~$P_{i,I}^t\gconv P_{i,I}^t\geq 0$.

For the second assertion, we begin with a technical result. Let~`$\otimes$' denote the Kronecker product defined for any~$f,g\in \Hspace$ by
\begin{equation}\label{kronProdDef}
	\left\{f\otimes g\right\}((i,A),(j,B)) = f(i,A)\cdot g(j,B), \quad (i,A),(j,B)\in [N]\times G. 
\end{equation}
\begin{lemma}\label{prodKernel2CrossCorrLemma}
	For fixed~$k$ and~$r$, and all $A,B\in G$ we have that
	\begin{equation}\label{starToKronProdIdentity}
		\left\{P_{i,I}^t(k,\cdot)\star P_{i,I}^t(r,\cdot)\right\}(A^*B) = \left\{\int_G P_{i,C}^t\otimes P_{i,C}^t d\eta(C)\right\}((k,A),(r,B)),
	\end{equation}
where the expression on the right hand side of\eqref{starToKronProdIdentity} is the function in~$\Hspace\times \Hspace$ defined by
\begin{equation}\label{probKerKronProdDef}
	\left\{\int_G P_{i,C}^t\otimes P_{i,C}^td\eta(C)\right\}((k,A),(r,B)) \triangleq\int_G P_{i,C}^t(k,A)\cdot  P_{i,C}^t(r,B)d\eta(C).
\end{equation}
	Furthermore, denoting 
	\begin{equation}\label{probEmbeddingProp:alternativeDistProb2}
		\tilde{E}_{p,t}(i,j) = \norm{ P_{i,I}^t\gconv P_{i,I}^t-P_{j,I}^t\gconv P_{j,I}^t}_{L^2\left([N]^2\times G\right),d\eta/D\otimes D},
	\end{equation}
we have that
\begin{equation}\label{probEmbeddingProp:alternativeDistProb}
\tilde{E}_{p,t}(i,j) = \norm{\int_G P_{i,C}^t\otimes P_{i,C}^t d\eta(C)-\int_G P_{j,C}^t\otimes P_{j,C}^t d\eta(C)}^2_{(\mathcal{H},d\eta/D)\times (\mathcal{H},d\eta/D)},
\end{equation}
where~$(\mathcal{H},d\eta/D)$ is the Hilbert space of square integrable functions over~$[N]\times G$ with inner product given by~\eqref{eqvDmaps:weightedHspaceDotProdDef}.
\end{lemma}
\begin{proof}
	First, by Lemma~\ref{eqvDmaps:eqvActLemma}, for any $(k,A),(r,B)\in [N]\times G$, we have that
	\begin{align}
		\int_G P_{i,C}^t(k,A)\cdot P_{i,C}^t(r,B)d\eta(C) &= 
		\int_G P_{i,I}^t(k,C^*A)\cdot P_{i,I}^t(r,C^*B)d\eta(C) \label{invDmaps:cCorrIdentityEq0}\\
		&= \int_G P_{i,I}^t(k,Q)\cdot P_{i,I}^t(r,QA^*B)d\eta(AQ^*)\label{invDmaps:cCorrIdentityEq1} \\
		&= \int_G P_{i,I}^t(k,Q)\cdot P_{i,I}^t(r,QA^*B)d\eta(Q) \label{kronProdKerIsCrossCorrEq5}\\
		&=\left\{P_{i,I}^t(k,\cdot)\star P_{i,I}^t(r,\cdot)\right\}(A^*B)\label{invDmaps:cCorrIdentity},
	\end{align}
	where in passing to~\eqref{invDmaps:cCorrIdentityEq1} we used the change of variables~$Q=C^*A$, then in passing to~\eqref{kronProdKerIsCrossCorrEq5} we used the left-invariance property~\eqref{secLieGroupAction:leftInvar} of the Haar measure $\eta$ combined with~\eqref{secLieGroupAction:conjInvar}, and finally, we used~\eqref{invDmaps:coupledRandomWalkDensity1} in passing to~\eqref{invDmaps:cCorrIdentity}.
	
	Next, by~\eqref{kronProdDef} and~\eqref{invDmaps:cCorrIdentity}, we get that
	\begin{align}
		&\norm{\int_G P_{i,C}^t\otimes P_{i,C}^td\eta(C)-\int_G P_{j,C}^t\otimes P_{j,C}^td\eta(C)}_{\Hspace\times \Hspace}^2 \nonumber \\
		=&\sum_{k,r=1}^N\int_G \int_G \left(\int_G P_{i,C}^t(k,A)\cdot P_{i,C}^t(r,B)d\eta(C)-\int_G P_{j,C}^t(k,A)\cdot P_{j,C}^t(r,B)d\eta(C)\right)^2 \frac{d\eta(B)}{D_{jj}}\frac{d\eta(A)}{D_{ii}} \nonumber \\
		=&\sum_{k,r=1}^N\int_G \int_G \left(\left\{P_{i,I}^t(k,\cdot)\star P_{i,I}^t(r,\cdot)\right\}(A^*B)-\left\{P_{j,I}^t(k,\cdot)\star P_{j,I}^t(r,\cdot)\right\}(A^*B)\right)^2 \frac{d\eta(B)}{D_{jj}}\frac{d\eta(A)}{D_{ii}}. \label{kronDistLastEq}
	\end{align}
Then, using the change of variables~$R= A^*B$, the last expression in~\eqref{kronDistLastEq} becomes
\begin{align}
		&\sum_{k,r=1}^N\int_G \int_G \left(\left\{P_{i,I}^t(k,\cdot)\star P_{i,I}^t(r,\cdot)\right\}(R)-\left\{P_{j,I}^t(k,\cdot)\star P_{j,I}^t(r,\cdot)\right\}(R)\right)^2\frac{d\eta(AR)}{D_{jj}}\frac{d\eta(A)}{D_{jj}} \label{kerDistIsCrossCorrDistEq3} \\
		=&\sum_{k,r=1}^N\int_G \int_G \left(\left\{P_{i,I}^t(k,\cdot)\star P_{i,I}^t(r,\cdot)\right\}(R)-\left\{P_{j,I}^t(k,\cdot)\star P_{j,I}^t(r,\cdot)\right\}(R)\right)^2\frac{d\eta(R)}{D_{jj}}\frac{d\eta(A)}{D_{ii}} \label{kerDistIsCrossCorrDistEq4} \\
		=&\sum_{k,r=1}^N\int_G \left(\left\{P_{i,I}^t(k,\cdot)\star P_{i,I}^t(r,\cdot)\right\}(R)-\left\{P_{j,I}^t(k,\cdot)\star P_{j,I}^t(r,\cdot)\right\}(R)\right)^2\frac{d\eta(R)}{D_{ii}\cdot D_{jj}}\label{kerDistIsCrossCorrDistEq5} \\
		=& \norm{ P_{i,I}^t\gconv P_{i,I}^t-P_{j,I}^t\gconv P_{j,I}^t}_{L^2\left([N]^2\times G\right),d\eta/D\otimes D}^2 = \tilde{E}_{p,t}(i,j), 
	\end{align}
	where we used the translation-invariance property~\eqref{secLieGroupAction:leftInvar} of the Haar measure in passing to~\eqref{kerDistIsCrossCorrDistEq4}, and in passing to~\eqref{kerDistIsCrossCorrDistEq5} we used~\eqref{secLieGroupAction:probMeasure} coupled with the fact that the integrand in~\eqref{kerDistIsCrossCorrDistEq4} is independent of the integration variable~$A$.
\end{proof}
Now, let us write the function in~\eqref{probKerKronProdDef}, appearing in the quantity~$\tilde{E}_{p,t}$ in~\eqref{probEmbeddingProp:alternativeDistProb}  
in terms of the eigenfunctions and eigenvalues of the operator~$P_{op}^t$ in~\eqref{eqvDmaps:probOperator}.
To that end, first consider the function 
\begin{equation}\label{symKerKronProdDef}
	\left\{\int_G S_{i,C}^t\otimes S_{i,C}^td\eta(C)\right\}((k,A),(r,B)) \triangleq\int_G S_{i,C}^t(k,A)\cdot  S_{i,C}^t(r,B)d\eta(C),
\end{equation}
where
\begin{equation}\label{eqvDmaps:SiAtcdotcdotDef}
	S^t_{i,A}(k,C)\triangleq  S^t((i,A),(k,C)), \quad (k,C)\in[N]\times G,
\end{equation}
and~$S^t$ was defined in~\eqref{eqvDmaps:symPowtKernelDef} via~\eqref{eqvDmaps:symOpDef}. Note that the function~$S^t_{i,C}$ is related to~$P^t_{i,A}$ through~\eqref{eqvDmaps:symOpDef}, \eqref{eqvDmaps:PtKernelDef}, and~\eqref{eqvDmaps:PiAtcdotcdotDef}.

By~\eqref{eqvDmaps:symOpDef}, it follows by induction over~$t$ that
\begin{equation}\label{symKerSymProp}
	S^t_{i,C}(k,A)=S^t((i,C),(k,A)) = S^t((k,A),(i,C)) =S^t_{k,A}(i,C),
\end{equation}
for all~$(i,C),(k,A)\in [N]\times G$. Therefore, by~\eqref{symKerKronProdDef}, \eqref{eqvDmaps:SiAtcdotcdotDef}, \eqref{symKerSymProp}, and \eqref{eqvDmaps:symmetricOp_ij}, for all~$(k,A),(r,B)\in [N]\times G$ we have that
\small
\begin{align}\label{invDmaps:kerDistPropEq1}
		&\left\{\int_G S_{i,C}^t\otimes S_{i,C}^t d\eta(C)\right\}((k,A),(r,B)) =\int_G S_{i,C}^t(k,A)\cdot S_{i,C}^t(r,B)d\eta(C)\nonumber\\
		=&  \int_G S^t((i,C),(k,A))\cdot S^t((i,C),(r,B))=  \int_G S^t((k,A),(i,C))\cdot S^t((i,C),(r,B)) \nonumber \\
	=&\int_G \sum_{\ell,\ell'\in \I_G} \sum_{m,m'=1}^{d_\ell} \sum_{n,n'=1}^N \lambda_{n,\ell}^{t}\lambda_{n',\ell'}^{t}\Phi^{(s)}_{\ell,m,n}(k,A) \cdot \overline{\Phi^{(s)}_{\ell,m,n}(i,C)}
	\Phi^{(s)}_{\ell',m',n'}(i,C) \cdot \overline{\Phi^{(s)}_{\ell',m',n'}(r,B)}d\eta(C)  	\nonumber \\
	=& \sum_{\ell,\ell'\in \I_G} \sum_{m,m'=1}^{d_\ell} \sum_{n,n'=1}^N \lambda_{n,\ell}^{t}\lambda_{n',\ell'}^{t}\Phi^{(s)}_{\ell,m,n}(k,A) \cdot\overline{\Phi^{(s)}_{\ell',m',n'}(r,B)}\cdot \int_G\overline{\Phi^{(s)}_{\ell,m,n}(i,C)}\cdot 
	\Phi^{(s)}_{\ell',m',n'}(i,C)d\eta(C).
\end{align}
\normalsize
Now, by \eqref{secLieGroupAction:SchurOrthogonality}, we have that
\begin{equation}\label{outerProdOrthogonality}
	\left(\int_G \left(U^\ell_{m,\cdot}(C)\right)^T\overline{U^{\ell'}_{m',\cdot}(C)}d\eta(C)\right) = d_\ell^{-1}\cdot\delta_{\ell \ell'}\delta_{mm'} I_{d_\ell\times d_\ell}, \quad \ell\in \I_G. 
\end{equation}
Hence, by using \eqref{eqvDmaps:eigenFuncsTilde} and~\eqref{outerProdOrthogonality}, the following expression, that appears in~\eqref{invDmaps:kerDistPropEq1}, evaluates as 
\begin{align}\label{Phi_lmnProdOverG}
	&\int_G \overline{\Phi^{(s)}_{\ell,m,n}(i,C)}\cdot\Phi^{(s)}_{\ell',m',n'}(i,C)d\eta(C)  \nonumber \\
	=&\sqrt{d_\ell\cdot d_{\ell'}}\int_G \overline{\left(\overline{U^\ell_{m,\cdot}(C)}e^i(v^{(s)}_{n,\ell})\right)}\cdot \overline{U^{\ell'}_{m',\cdot}(C)}e^i(v^{(s)}_{n',\ell'})d\eta(C)   \nonumber \\		 
	=&\sqrt{d_\ell\cdot d_{\ell'}}\left(e^i(v^{(s)}_{n,\ell})\right)^*\left(\int_G \left(U^\ell_{m,\cdot}(C)\right)^T\overline{U^{\ell'}_{m',\cdot}(C)}d\eta(C)\right) e^i(v^{(s)}_{n',\ell'}) \nonumber \\ 
	=&\sqrt{d_\ell\cdot d_{\ell'}}\frac{1}{d_\ell}\cdot\left(e^i(v^{(s)}_{n,\ell})\right)^* e^i(v^{(s)}_{n',\ell'})\delta_{\ell \ell'}\delta_{mm'} = \left(e^i(v^{(s)}_{n,\ell})\right)^* e^i(v^{(s)}_{n',\ell'})\delta_{\ell \ell'}\delta_{mm'}.
\end{align}
Plugging~\eqref{Phi_lmnProdOverG} back into \eqref{invDmaps:kerDistPropEq1}, and using~\eqref{kronProdDef} we obtain that
\begin{align}\label{invDmaps:kerDistPropEq2}
	&\left\{\int_G S_{i,C}^t\otimes S_{i,C}^t d\eta(C)\right\}((k,A),(r,B)) \nonumber\\
	=& \sum_{\ell,\ell'\in \I_G} \sum_{m,m'=1}^{d_\ell} \sum_{n,n'=1}^N \lambda_{n,\ell}^{t}\lambda_{n',\ell'}^{t}\Phi^{(s)}_{\ell,m,n}(k,A) \cdot\overline{\Phi^{(s)}_{\ell',m',n'}(r,B)}\cdot \left(e^i(v^{(s)}_{n,\ell})\right)^* e^i(v^{(s)}_{n',\ell'})\delta_{\ell \ell'}\delta_{mm'}\nonumber \\ 
	=&\sum_{\ell\in \I_G} \sum_{n,n'=1}^N  \lambda_{n,\ell}^{t}\lambda_{n',\ell}^{t} \left(e^i(v^{(s)}_{n,\ell})\right)^* e^i(v^{(s)}_{n',\ell})\sum_{m=1}^{d_\ell}\Phi^{(s)}_{\ell,m,n}(k,A) \cdot\overline{\Phi^{(s)}_{\ell,m,n'}(r,B)}\nonumber \\
	=&\sum_{\ell\in \I_G} \sum_{n,n'=1}^N  \lambda_{n,\ell}^{t}\lambda_{n',\ell}^{t} \left(e^i(v^{(s)}_{n,\ell})\right)^* e^i(v^{(s)}_{n',\ell})\sum_{m=1}^{d_\ell}\left\{\Phi^{(s)}_{\ell,m,n} \otimes\left(\Phi^{(s)}_{\ell,m,n'}\right)^*\right\}((k,A)(r,B)).
\end{align}

On the other hand, by~\eqref{kronProdDef}, \eqref{eqvDmaps:PiAtcdotcdotDef}, \eqref{eqvDmaps:symOpDefPowt}, and~\eqref{eqvDmaps:SiAtcdotcdotDef} we get that
\begin{align}\label{invDmaps:probKerDistPropEq2}
	&\left\{\int_G P_{i,C}^t\otimes P_{i,C}^t d\eta(C)\right\}((k,A),(r,B)) =\int_G P_{i,C}^t(k,A)\cdot P_{i,C}^t(r,B)d\eta(C)= \nonumber \\
	=& \int_G P^t((i,C),(k,A))\cdot P^t((i,C),(r,B))d\eta(C)  \nonumber\\
	=&\frac{\sqrt{D_{kk}}\sqrt{D_{rr}}}{D_{ii}}\int_G S^t((i,C),(k,A))\cdot S^t((i,C),(r,B))d\eta(C)\nonumber\\
	=& \frac{\sqrt{D_{kk}}\sqrt{D_{rr}}}{D_{ii}}\left\{\int_G S_{i,C}^t\otimes S_{i,C}^t d\eta(C)\right\}((k,A),(r,B)). 
\end{align}
Next, by using~\eqref{invDmaps:kerDistPropEq2} together with~\eqref{ejTilde2ej}, the expression in~\eqref{invDmaps:probKerDistPropEq2} evaluates as
\begin{align}\label{DhalfPhilmnKronIdentity}
	&\frac{\sqrt{D_{kk}}\sqrt{D_{rr}}}{D_{ii}}\left\{\int_G S_{i,C}^t\otimes S_{i,C}^t d\eta(C)\right\}((k,A),(r,B))  \nonumber \\
	=&\frac{\sqrt{D_{kk}}\sqrt{D_{rr}}}{D_{ii}}\sum_{\ell\in \I_G} \sum_{n,n'=1}^N  \lambda_{n,\ell}^{t}\lambda_{n',\ell}^{t} \left(e^i(v^{(s)}_{n,\ell})\right)^* e^i(v^{(s)}_{n',\ell})\sum_{m=1}^{d_\ell}\left\{\Phi^{(s)}_{\ell,m,n} \otimes\left(\Phi^{(s)}_{\ell,m,n'}\right)^*\right\}((k,A)(r,B))\nonumber\\
	=&\sqrt{D_{kk}}\sqrt{D_{rr}}\sum_{\ell\in \I_G} \sum_{n,n'=1}^N  \lambda_{n,\ell}^{t}\lambda_{n',\ell}^{t} \left(e^i(v^{(p)}_{n,\ell})\right)^* e^i(v^{(p)}_{n',\ell})\sum_{m=1}^{d_\ell}\left\{\Phi^{(s)}_{\ell,m,n} \otimes\left(\Phi^{(s)}_{\ell,m,n'}\right)^*\right\}((k,A)(r,B))\nonumber \\
	=&\sum_{\ell\in \I_G} \sum_{n,n'=1}^N  \lambda_{n,\ell}^{t}\lambda_{n',\ell}^{t} \left(e^i(v^{(p)}_{n,\ell})\right)^* e^i(v^{(p)}_{n',\ell})\sum_{m=1}^{d_\ell}\left\{D^{1/2}\Phi^{(s)}_{\ell,m,n}\otimes \left(D^{1/2}\Phi^{(s)}_{\ell,m,n'}\right)^*\right\}((k,A),(r,B)),
\end{align}
where in the last equality we used that by~\eqref{kronProdDef} and~\eqref{GinvDef:DMatAction}, we have
\begin{align}
	&\left\{D^{1/2}\Phi^{(s)}_{\ell,m,n}\otimes \left(D^{1/2}\Phi^{(s)}_{\ell,m,n'}\right)^*\right\}((k,A),(r,B))=  \left\{D^{1/2} \Phi^{(s)}_{\ell,m,n'}\right\}(k,A)\cdot \left\{\overline{D^{1/2} \Phi^{(s)}_{\ell,m,n'}}\right\}(r,B) \nonumber \\
	=&\sqrt{D_{kk}}\cdot \Phi^{(s)}_{\ell,m,n'}(k,A)\cdot \overline{\sqrt{D_{rr}}\cdot  \Phi^{(s)}_{\ell,m,n'}}(r,B)\\ \nonumber
	=&\sqrt{D_{kk}}\cdot\sqrt{D_{rr}}\left\{\Phi^{(s)}_{\ell,m,n}\otimes \left(\Phi^{(s)}_{\ell,m,n'}\right)^*\right\}((k,A),(r,B)).
\end{align}
Thus, by~\eqref{invDmaps:probKerDistPropEq2}, and~\eqref{DhalfPhilmnKronIdentity} we have that
\begin{align}\label{PtKronProd2PhilmnKronProd}
	\int_G P_{i,C}^t\otimes P_{i,C}^t d\eta(C) =
	\sum_{\ell\in \I_G} \sum_{n,n'=1}^N  \lambda_{n,\ell}^{t}\lambda_{n',\ell}^{t} \left(e^i(v^{(p)}_{n,\ell})\right)^* e^i(v^{(p)}_{n',\ell})\sum_{m=1}^{d_\ell}D^{1/2}\Phi^{(s)}_{\ell,m,n}\otimes \left(D^{1/2}\Phi^{(s)}_{\ell,m,n'}\right)^*.
\end{align}
Plugging~\eqref{PtKronProd2PhilmnKronProd} into~\eqref{probEmbeddingProp:alternativeDistProb}, we get that 
\begin{align}\label{invDmaps:probKerDistPropEq3}
	\tilde{E}_{p,t}^2(i,j) =\bigg\Vert	&\sum_{\ell\in \I_G} \sum_{n,n'=1}^N  \lambda_{n,\ell}^{t}\lambda_{n',\ell}^{t} \left(e^i(v^{(p)}_{n,\ell})\right)^* e^i(v^{(p)}_{n',\ell})\sum_{m=1}^{d_\ell}D^{1/2}\Phi^{(s)}_{\ell,m,n}\otimes \left(D^{1/2}\Phi^{(s)}_{\ell,m,n'}\right)^*-\nonumber\\
	&\sum_{\ell\in \I_G} \sum_{n,n'=1}^N  \lambda_{n,\ell}^{t}\lambda_{n',\ell}^{t} \left(e^j(v^{(p)}_{n,\ell})\right)^* e^j(v^{(p)}_{n',\ell})\sum_{m=1}^{d_\ell}D^{1/2}\Phi^{(s)}_{\ell,m,n}\otimes \left(D^{1/2}\Phi^{(s)}_{\ell,m,n'}\right)^*
	\bigg\Vert^2_{(\mathcal{H},d\eta/D)\times (\mathcal{H},d\eta/D)}.
\end{align}
Next, denoting 
\begin{equation*}
	a^{(p)}_{\ell,n,n'} =  \lambda_{n,\ell}^{t}\lambda_{n',\ell}^{t} \left(\left(e^i(v^{(p)}_{n,\ell})\right)^* e^i(v^{(p)}_{n',\ell})-
	\left(e^j(v^{(p)}_{n,\ell})\right)^* e^j(v^{(p)}_{n',\ell})\right),
\end{equation*}
and using~\eqref{invDmaps:probKerDistPropEq3} and \eqref{probEmbeddingProp:alternativeDistProb2}, gives us that
\begin{align}\label{invDmaps:probKerDistPropEq4}
		&\norm{ P_{i,I}^t\gconv P_{i,I}^t-P_{j,I}^t\gconv P_{j,I}^t}^2_{L^2\left([N]^2\times G\right),d\eta/D\otimes D} = \tilde{E}_{p,t}^2(i,j) \\ \nonumber	
	=&\Vert	\sum_{\ell\in \I_G}  \sum_{n,n'=1}^N a^{(p)}_{\ell,n,n'}\cdot \sum_{m=1}^{d_\ell}D^{1/2}\Phi^{(s)}_{\ell,m,n}\otimes \left(D^{1/2}\Phi^{(s)}_{\ell,m,n'}\right)^*\Vert^2_{(\mathcal{H},d\eta/D)\times (\mathcal{H},d\eta/D)}\nonumber \\
	=&\sum_{\ell\in \I_G} \sum_{n,n'=1}^Nd_{\ell}\cdot  \abs{a^{(p)}_{\ell,n,n'}}^2 
	\nonumber \\ =&\sum_{\ell\in \I_G} \sum_{n,n'=1}^N d_{\ell}\cdot \lambda_{n,\ell}^{2t}\lambda_{n',\ell}^{2t} \left|\left(e^i(v^{(p)}_{n,\ell})\right)^* e^i(v^{(p)}_{n',\ell})-
	\left(e^j(v^{(p)}_{n,\ell})\right)^* e^j(v^{(p)}_{n',\ell})\right|^2 \nonumber \\
	=& \sum_{\ell\in \I_G} \sum_{n,n'=1}^N d_{\ell}\cdot \lambda_{n,\ell}^{2t}\lambda_{n',\ell}^{2t} \left|\dprod{e^i(v^{(p)}_{n,\ell})}{ e^i(v^{(p)}_{n',\ell})}-
	\dprod{e^j(v^{(p)}_{n,\ell})}{e^j(v^{(p)}_{n',\ell})}\right|^2 \nonumber \\
	=&\norm{\Psi_t^{(p)}(i)-\Psi_t^{(p)}(j)}^2_{\ell^2}= E_{p,t}^2(i,j), 
\end{align}
where the third equality stems from the fact that since the functions $\left\{\Phi^{(s)}_{\ell,m,n}\otimes \Phi^{(s)}_{\ell,m,n'} \right\}$ are orthonormal in~$\mathcal{H}\times \mathcal{H} $, then the functions $D^{1/2}\Phi^{(s)}_{\ell,m,n}\otimes \left(D^{1/2}\Phi^{(s)}_{\ell,m,n'}\right)^*$ are orthonormal in~$(\mathcal{H},d\eta/D)\times (\mathcal{H},d\eta/D)$, and the last equality is due to~\eqref{invDmaps:probInvEmbeding}. 

\subsection{Proof of Proposition~\ref{invDmaps:genDiplacementLemma}}\label{invDmaps:genDiplacementLemmaPrf}
We begin by proving the following result. 
\begin{lemma}\label{displaceMeasureLemma}
	Let~$\mu$ and~$\nu$ be a pair of Borel measures over~$G$ given by  
	\begin{equation}\label{measuresOverG}
		\mu(H) = \int_H f(A)d\eta(A), \quad \nu(H) = \int_H g(A)d\eta(A), \quad H\subseteq G,
	\end{equation}
	where~$\eta$ is the Haar measure, and~$f$ and~$g$ are non-negative functions\footnote{The function~$f$ is known as the Radon-Nikodym derivative~$\frac{d\mu}{d\eta}$  of~$\mu$ with respect to~$\eta$.} over~$G$ such that $\mu(G),\nu(G)<\infty$. 
	Let~$H\subseteq G$ be a Borel-measurable subset, and consider the subset~$H_\times\subseteq G\times G$ defined by
	\begin{equation}\label{HtimesDef}
		H_{\times} \triangleq \left\{ (A,B)\in G\times G: \; A^*B\in H \right\}. 
	\end{equation}
	Then, we have that
	\begin{equation}\label{prodMeasureOfHx}
		\left\{\mu\otimes \nu\right\}(H_\times) = \int_H f\star g(C)d\eta(C), 
	\end{equation}
	where $\mu\otimes \nu$ is the product measure over~$G\times G$ defined by 
	\begin{equation}
		\left\{\mu\otimes \nu\right\}(H_1\times H_2) = \mu(H_1)\cdot \nu(H_2), \quad H_1,H_2\in G. 
	\end{equation}
\end{lemma}
\begin{proof}
	Let us denote by $\mathbbm{1}_{H_\times}(A,B)$ the indicator function of~$H_\times$ over the space~$G\times G$. 
	Then, we have that
	\begin{align}
		\left\{\mu\otimes \nu\right\}(H_\times) &= \int_{G\times G} \mathbbm{1}_{H_\times}(A,B)d\mu(A)d\nu(B)  =\int_G \left(\int_G \mathbbm{1}_{H_\times}(A,B)d\nu(B)\right)d\mu(A) \nonumber \\
		&= \int_G \nu\left\{B\in G:\; A^*B\in H\right\}d\mu(A)	=\int_G \nu(A\cdot H)d\mu(A)\nonumber\\
		&= \int_G \left(\int_{A\cdot H}g(B)d\eta(B)\right)d\mu(A). 	
	\end{align}
	Applying the change of variables~$C=A^*B$, and using the translation-invariance property~\eqref{secLieGroupAction:leftInvar} of~$\eta$, we obtain that
	\begin{align}
		\left\{\mu\otimes \nu\right\}(H_\times)&= \int_G \left(\int_{H}g(AC)d\eta(C)\right)d\mu(A)= \int_G \left(\int_{H}f(A)g(AC)d\eta(C)\right)d\eta(A)\nonumber \\
		&= \int_H\left(\int_G f(A)g(AC)d\eta(A)\right)d\eta(C) = \int_H f\star g(C)d\eta(C).	
	\end{align}
\end{proof}

Now, since $X_{1,t}$ and~$X_{2,t}$ are i.i.d with probability distributions given by~$P_{i,I}$ from~\eqref{eqvDmaps:PiAtcdotcdotDef}, we get by using~\eqref{HtimesDef} that
\begin{align}\label{genDisplaceLemmaPrf:eq1}
	\prob{H_{kr}} &= \int_{\left\{ (A,B)\in G\times G: \; A^*B\in H \right\}}P_{i,I}^t(k,A)\cdot P_{i,I}^t(r,B)d\eta(A)d\eta(B)\nonumber \\ 		
	& =  \int_{H_\times}P_{i,I}^t(k,A)\cdot P_{i,I}^t(r,B)d\eta(A)d\eta(B).
\end{align}
Now, let us define the pair of measures over~$G$
\begin{equation}\label{PmeasuresOverG}
	\mu(S) = \int_S P^t_{i,I}(k,A)d\eta(A), \quad \nu(S) = \int_S P^t_{i,I}(r,B)d\eta(B), \quad S\subseteq G.
\end{equation}
Then, continuing from~\eqref{genDisplaceLemmaPrf:eq1} and using Lemma~\ref{displaceMeasureLemma} together with~\eqref{invDmaps:coupledRandomWalkDensity}, we obtain that
\begin{align}
	\prob{H_{kr}} &= \int_{H_\times} 1 d\mu(A)d\nu(B) = \int_{G\times G} \mathbbm{1}_{H_\times}(A,B) d\mu(A)d\nu(B) = \mu \otimes \nu (H_\times) \nonumber \\
	&= \int_H \left\{P^t_{i,I}(k,\cdot)\star P^t_{i,I}(r,\cdot)\right\}(R)d\eta(R)\nonumber\\
	&=\int_H\left\{P_{i,I}^t \gconv P_{i,I}^t\right\}(k,r,R)d\eta(R),
\end{align}	
as claimed. 

\bibliographystyle{plain}
\bibliography{GDM}

\begin{thebibliography}{10}

\bibitem{basuBresFeasability}
S.~Basu and Y.~Bresler.
\newblock Feasibility of tomography with unknown view angles.
\newblock In {\em Proceedings 1998 International Conference on Image
  Processing}, volume~2, pages 15--19 vol.2, 1998.

\bibitem{basuBresUniqueness}
S.~Basu and Y.~Bresler.
\newblock Uniqueness of tomography with unknown view angles.
\newblock {\em IEEE Transactions on Image Processing}, 9(6):1094--1106, 2000.

\bibitem{belkin2003laplacian}
M.~Belkin and P.~Niyogi.
\newblock Laplacian eigenmaps for dimensionality reduction and data
  representation.
\newblock {\em Neural computation}, 15(6):1373--1396, 2003.

\bibitem{compactRigidImProc}
T.~Bendory, I.~Hadi, and N.~Sharon.
\newblock Compactification of the rigid motions group in image processing.
\newblock {\em SIAM Journal on Imaging Sciences}, 15(3):1041--1078, 2022.

\bibitem{lieGroups}
D.~Bump.
\newblock {\em Lie Groups}.
\newblock Springer, Springer-Verlag New York, Inc., 2004.

\bibitem{lapEigConverge}
X.~Cheng and N.~Wu.
\newblock Eigen-convergence of gaussian kernelized graph {Laplacian} by
  manifold heat interpolation.
\newblock {\em Applied and Computational Harmonic Analysis}, 61:132--190, 2022.

\bibitem{nonComHarmAnalys}
G.~S. Chirikjian and A.~B. Kyatkin.
\newblock {\em Engineering applications of noncommutative harmonic analysis
  with emphasis on rotation and motion groups}.
\newblock CRC Press LLC, CRC Press LLC, Boca Raton, Florida., 2001.

\bibitem{ChirikjianStochasticLieGroups}
G.S. Chirikjian.
\newblock {\em Stochastic Models, Information Theory, and Lie Groups, Volume
  2}.
\newblock Birkhäuser, Boston, 2010.

\bibitem{glRandTomography}
Ronald~R. Coifman, Yoel Shkolnisky, Fred~J. Sigworth, and Amit Singer.
\newblock Graph {Laplacian} tomography from unknown random projections.
\newblock {\em IEEE Transactions on Image Processing}, 17(10):1891--1899, 2008.

\bibitem{desbrun1999implicit}
M.~Desbrun, M.~Meyer, P.~Schr{\"o}der, and A.~H. Barr.
\newblock Implicit fairing of irregular meshes using diffusion and curvature
  flow.
\newblock In {\em Proceedings of the 26th annual conference on Computer
  graphics and interactive techniques}, pages 317--324. ACM
  Press/Addison-Wesley Publishing Co., 1999.

\bibitem{fan2019unsupervised}
Y.~Fan, T.~Gao, and Z.~J. Zhao.
\newblock Unsupervised co-learning on $\mathcal{G}$-manifolds across
  irreducible representations, 2019.

\bibitem{multFreqVDM}
Yifeng Fan and Zhizhen Zhao.
\newblock Multi-frequency vector diffusion maps.
\newblock In Kamalika Chaudhuri and Ruslan Salakhutdinov, editors, {\em
  Proceedings of the 36th International Conference on Machine Learning},
  volume~97 of {\em Proceedings of Machine Learning Research}, pages
  1843--1852. PMLR, 09--15 Jun 2019.

\bibitem{gohberg}
I.~Gohberg and S.~Goldberg.
\newblock {\em Basic Operator Theory}.
\newblock Birkhäuser, Boston, 1981.

\bibitem{lieGroupsHall}
B.~Hall.
\newblock {\em Lie Groups, Lie Algebras, and Representations: An Elementary
  Introduction}.
\newblock Springer, Springer International Publishing Switzerland, 2015.

\bibitem{centerCryoEM}
Ayelet Heimowitz, Nir Sharon, and Amit Singer.
\newblock Centering noisy images with application to cryo-em.
\newblock {\em SIAM Journal on Imaging Sciences}, 14(2):689--716, 2021.

\bibitem{hein2006manifold}
M.~Hein and M.~Maier.
\newblock Manifold denoising.
\newblock In {\em NIPS}, volume~19, pages 561--568, 2006.

\bibitem{manLearnArbitraryNorm}
J.~Kileel, A.~Moscovich, N.~Zelesko, and A.~Singer.
\newblock Manifold learning with arbitrary norms.
\newblock {\em Journal of Fourier Analysis and Applications}, 27(5), 2021.

\bibitem{lafonDissertation}
S.~Lafon.
\newblock Diffusion maps and geometric harmonics.
\newblock {\em Ph.d disseratation}, 2004.

\bibitem{diffMaps}
S.~Lafon and R.~R. Coifman.
\newblock Diffusion maps.
\newblock {\em Applied and Computational Harmonic Analysis}, 21:5--30, 2006.

\bibitem{landa2017steerable}
B.~Landa and Y.~Shkolnisky.
\newblock Steerable principal components for space-frequency localized images.
\newblock {\em SIAM Journal on Imaging Sciences}, 10(2):508--534, 2017.

\bibitem{liu2014progressive}
X.~Liu, D.~Zhai, D.~Zhao, G.~Zhai, and W.~Gao.
\newblock Progressive image denoising through hybrid graph {Laplacian}
  regularization: a unified framework.
\newblock {\em IEEE Transactions on image processing}, 23(4):1491--1503, 2014.

\bibitem{meyer2014perturbation}
F.~G. Meyer and X.~Shen.
\newblock Perturbation of the eigenvectors of the graph {Laplacian}:
  Application to image denoising.
\newblock {\em Applied and Computational Harmonic Analysis}, 36(2):326--334,
  2014.

\bibitem{natterFrank}
Frank Natterer and Frank Wübbeling.
\newblock {\em Mathematical Methods in Image Reconstruction}.
\newblock Society for Industrial and Applied Mathematics, 2001.

\bibitem{osher2017low}
S.~Osher, Z.~Shi, and W.~Zhu.
\newblock Low dimensional manifold model for image processing.
\newblock {\em SIAM Journal on Imaging Sciences}, 10(4):1669--1690, 2017.

\bibitem{GGL}
Eitan Rosen, Paulina Hoyos, Xiuyuan Cheng, Joe Kileel, and Yoel Shkolnisky.
\newblock The {$G$}-invariant graph {Laplacian} part {I}: Convergence rate and
  eigendecomposition.
\newblock {\em Applied and Computational Harmonic Analysis}, 71, 2024.

\bibitem{rosenberg1997laplacian}
S.~Rosenberg.
\newblock {\em The {Laplacian} on a Riemannian manifold: an introduction to
  analysis on manifolds}.
\newblock Number~31 in London Mathematical Society Student Texts Book.
  Cambridge University Press, 1997.

\bibitem{steerMaps}
Y.~Shkolnisky and B.~Landa.
\newblock The steerable graph laplacian and its application to filtering image
  datasets.
\newblock {\em SIAM Journal on Imaging Sciences}, 11(4):2254--–2304, 2018.

\bibitem{SingerSigworthCryoMethods}
A.~Singer and Sigworth F.
\newblock Computational methods for single-particle electron cryomicroscopy.
\newblock {\em Annual Review of Biomedical Data Science}, 3:163--190, 2020.

\bibitem{singer2012vector}
A.~Singer and H.-T. Wu.
\newblock Vector diffusion maps and the connection {Laplacian}.
\newblock {\em Communications on pure and applied mathematics},
  65(8):1067--1144, 2012.

\bibitem{singerRT}
A.~Singer and H.-T. Wu.
\newblock Two-dimensional tomography from noisy projections taken at unknown
  random directions.
\newblock {\em SIAM Journal on Imaging Sciences}, 6(1):136--175, 2013.

\bibitem{classAverage}
A.~Singer, Z.~Zhao, Y.~Shkolnisky, and R.~Hadani.
\newblock , viewing angle classification of cryo-electron microscopy images
  using eigenvectors.
\newblock {\em SIAM Journal on Imaging Sciences}, 4(2):543--572, 2011.

\bibitem{Singer2009AngularSB}
Amit Singer.
\newblock Angular synchronization by eigenvectors and semidefinite programming.
\newblock {\em Applied and computational harmonic analysis}, 30 1:20--36, 2009.

\bibitem{taubin1995signal}
G.~Taubin.
\newblock A signal processing approach to fair surface design.
\newblock In {\em Proceedings of the 22nd annual conference on Computer
  graphics and interactive techniques}, pages 351--358. ACM, 1995.

\bibitem{vallet2008spectral}
B.~Vallet and B.~L{\'e}vy.
\newblock Spectral geometry processing with manifold harmonics.
\newblock {\em Computer Graphics Forum}, 27(2):251--260, 2008.

\bibitem{FFBsPCA2015}
Z.~Zhao, Y.~Shkolnisky, and A.~Singer.
\newblock Fast steerable principal component analysis.
\newblock {\em IEEE Transactions on Computational Imaging}, 2(1):1--12, 2016.

\bibitem{zhao2014rotationally}
Z.~Zhao and A.~Singer.
\newblock Rotationally invariant image representation for viewing direction
  classification in cryo-{EM}.
\newblock {\em Journal of structural biology}, 186(1):153--166, 2014.

\end{thebibliography}

\end{document}